\documentclass{article}
\usepackage{comment}
  \usepackage[preprint]{neurips_2026}

\usepackage[utf8]{inputenc} 
\usepackage[T1]{fontenc}    
\usepackage{hyperref}       
\usepackage{url}            
\usepackage{booktabs}       
\usepackage{amsfonts}       
\usepackage{nicefrac}       
\usepackage{microtype}      
\usepackage{xcolor}         

\usepackage{amsthm}

\newtheorem{theorem}{Theorem}[section]

\newtheorem{lemma}[theorem]{Lemma}
\newtheorem{corollary}[theorem]{Corollary}
\newtheorem{remark}[theorem]{Remark}

\newtheorem{proposition}[theorem]{Proposition}

\title{Adversary-Robust Learning from Fully Asynchronous Directional Derivative Estimates}

\usepackage{times}
\usepackage{float}
\usepackage{algorithm}
\usepackage{algorithmic} 
\usepackage{makecell}
\usepackage{multirow} 
\usepackage{xspace}
\usepackage{pifont}
\usepackage{xurl} 
\usepackage{enumitem}
\usepackage{threeparttable}
\usepackage{hyperref} 
\usepackage{tabularx}
\usepackage{amssymb}
\usepackage{amsmath}

\usepackage{diagbox}

\usepackage{booktabs}
\usepackage{pifont}
\newcommand{\cmark}{\ding{51}}
\newcommand{\xmark}{\ding{55}}
\newcommand{\pmark}
{\checkmark^{\dagger}}

\usepackage{graphicx}
\usepackage{float}
\usepackage{placeins}
\usepackage[skip=3pt]{caption}
\usepackage{graphicx}
\usepackage{bm}
\usepackage{booktabs}
\newtheorem{assumption}{Assumption}
\newcommand{\ncom}{\newcommand}

\newcommand{\beqn}{\begin{eqnarray*}}
\newcommand{\eeqn}{\end{eqnarray*}}
\newcommand{\beq}{\begin{eqnarray}}
\newcommand{\eeq}{\end{eqnarray}}

\newcommand{\norm}[1]{\left\lVert #1 \right\rVert}
\newcommand{\inprod}[2]{\left\langle #1, #2 \right\rangle}
\ncom\R{\mathbb{R}}

\DeclareMathOperator*{\sign}{sign}

\newcommand{\rsign}{R-SIGN\xspace}
\newcommand{\farsign}{FAR-SIGN\xspace}
\newcommand{\Sign}{\rm Sign}

\newcommand{\bI}{\mathbb{I}}

\newcommand{\as}{\textnormal{a.s.}}

\newcommand{\bR}{\mathbb{R}}

\newcommand{\cA}{\mathcal{A}}
\newcommand{\cF}{\mathcal{F}}

\newcommand{\cI}{\mathcal{I}}
\newcommand{\cW}{\mathcal{W}}

\author{
\textbf{Anik Kumar Paul}$^{1}$, \textbf{Nibedita Roy}$^{1}$, \textbf{Nagesh Talagani}$^{1}$, \\ \textbf{Swetha Ganesh}$^{2}$, \textbf{Gugan Thoppe}$^{1}$, \textbf{Alexandre Reiffers-Masson}$^{3}$\\
$^{1}$Computer Science and Automation, Indian Institute of Science, Bengaluru\\
$^{2}$ Edwardson School of Industrial Engineering, Purdue University\\
$^{3}$Department of Computer Science, IMT Atlantique\\
\vspace{1mm}
\texttt{anikpaul42@gmail.com, nibeditaroy@iisc.ac.in,} \\ \texttt{nagesht@iisc.ac.in,
swethaganesh@iisc.ac.in,}\\ \texttt{gthoppe@iisc.ac.in, alexandre.reiffers-masson@imt-atlantique.fr}
}

\begin{document}

\maketitle

\begin{abstract}
    We propose \farsign{} (Fully Asynchronous Robust optimization via SIGNed directional projections) for adversary-resilient learning in parameter-server--worker systems. \farsign{} achieves robustness through sign-based updates along carefully designed directions and mitigates the resulting bias via a two-timescale mechanism. It admits both first-order and zeroth-order implementations and enables fully asynchronous execution without requiring a private reference dataset at the server. We establish almost-sure convergence of \farsign{} to the set of stationary points for smooth, nonconvex objectives. Moreover, we prove the near-optimal rate of $O(n^{-1/4+\epsilon})$ in the first-order setting and the standard $O(n^{-1/6+\epsilon})$ in the zeroth-order setting, where $n$ is the iteration count and $\epsilon>0$ can be chosen arbitrarily small. Experiments on MNIST show that \farsign{} outperforms robust aggregation-based methods in both accuracy and wall-clock time.
\end{abstract}

\section{Introduction}

Modern machine learning systems increasingly rely on distributed data and worker-side computation to solve large-scale optimization problems \citep{bottou2018optimization}. 
By leveraging multiple worker machines to process data and provide gradient or function-value feedback, such systems enable scalability well beyond what a single processor can achieve \citep{konevcny2016federated,McMahan2017federated}. 
This distributed computational model, however, introduces significant challenges in robustness and reliability. 
Worker nodes may fail due to hardware faults, stalled computations, or unreliable communication. 
Alternatively, they may behave adversarially, deliberately transmitting corrupted information to degrade system performance \citep{lamport1982byzantine}. 
Addressing such failures and adversarial behavior is therefore pivotal for reliable large-scale learning.

A large literature has studied robust learning in parameter-server--worker architectures with adversarial or faulty workers \citep{alistarh2018byzantine, chen2018draco,yin2018byzantine,data2020data,data2019data,data2021byzantine,pillutla2022robust,yang2023buffered,xie2020zeno++,fang2022aflguard,damaskinos2018asynchronous,ghosh2019robust,karimireddy2022byzantine,blanchard2017machine,egger2025byzantine}. 
Existing methods can be broadly categorized by the degree of synchronization required before the server updates. Most works consider synchronous round-based methods, where the server broadcasts its current iterate and waits for sufficiently many workers to return first- or zeroth-order gradient estimates computed at that iterate. Adversary-robustness is then enforced through redundancy-based techniques such as data encoding and error correction, or through server-side robust aggregation rules \citep{alistarh2018byzantine, blanchard2017machine,chen2018draco,data2020data,data2019data,data2021byzantine,yin2018byzantine,pillutla2022robust,karimireddy2022byzantine,egger2025byzantine}. 
While effective, such methods rely on coordinated worker participation and can be sensitive to heterogeneous computation times, communication delays, and stragglers. 

\begin{table}[t]
\centering
\caption{\textbf{Comparison with first-order methods}. 
All methods achieve the optimal $O(n^{-1/4})$ rate, but
\farsign{} is the only one achieving it while being adversary-robust and fully asynchronous. 
Legend: RA = robust aggregation; B = buffered; 
EF= error feedback; FDS = fixed dictionary sampling.  
$\pmark$ = worker-side asynchrony only: server waits for multiple updates before aggregation.}
\label{tab:first_order_compare}
\vspace*{\baselineskip} 
\small
\setlength{\tabcolsep}{5pt}
\begin{tabular}{lcccll}
\toprule
Method 
& \makecell{Fully\\asynchronous} 
& \makecell{Adversary\\resilience} 
& \makecell{Optimal\\$O(n^{-1/4})$ rate} 
& Technique 
& Reference \\
\midrule
\citep{ghadimi}      & \xmark      & \xmark & \cmark & SGD         & Thm.~2.1 \\
\citep{alistarh2018byzantine}        & \xmark      & \cmark & \cmark & RA          & Thm.~3.2 \\
\citep{yang2023buffered}      & $\pmark$ & \cmark & \cmark & B-RA         & Thm.~2 \\
\citep{bernsteinsignsgd}     & \xmark      & \cmark & \cmark & SignSGD     & Thm.~1 \\
\citep{karimireddy2019error}        & \xmark      & \xmark & \cmark & EF-SignSGD     & Thm.~II \\
\textbf{\farsign{} (ours)}   & \cmark      & \cmark & \cmark & \textbf{FDS-SignSGD} & \textbf{Thm.~6} \\
\bottomrule
\end{tabular}
\end{table}

Adversary-robust asynchronous learning has therefore attracted increasing attention. In first-order settings, existing approaches either screen incoming updates using empirical Lipschitz-type tests or server-side reference data \citep{damaskinos2018asynchronous,xie2020zeno++,fang2022aflguard}, or buffer asynchronous updates before applying robust aggregation \citep{yang2023buffered}. However, these methods either guarantee convergence only to a neighborhood of a stationary point \citep{damaskinos2018asynchronous}, rely on server-side reference data for update validation, or reintroduce server-side buffering, thereby limiting the utility, scalability, or responsiveness benefits of full asynchrony. By contrast, asynchronous adversary-robust zeroth-order optimization remains unexplored.

Our work addresses the above limitations and its \textbf{key highlights} are as follows:

\begin{enumerate}[leftmargin=*]
    \item \textbf{Fully asynchronous algorithmic framework}: We propose \farsign{}, a signed directional projection method for adversary-resilient optimization with both first- and zeroth-order implementations. It updates upon individual worker estimate arrivals, requiring neither server-side reference data nor buffering. It is the first such fully asynchronous adversary-resilient method.

    \item \textbf{Almost-sure convergence}: 
    Using a two-timescale stochastic recursive inclusion framework, we prove boundedness and almost-sure convergence of \farsign{} to the set of stationary points for smooth, potentially nonconvex objectives under standard zero-mean, bounded-variance noise. 
    Our two-timescale approach avoids the symmetry or unimodality assumptions often used to control sign bias in sign-SGD analyses \citep{bernstein2018signsgd}.

    \item \textbf{Convergence rates}: 
    \farsign{} achieves rates of $O(n^{-1/4+\epsilon})$ and $O(n^{-1/6+\epsilon})$ in the first- and zeroth-order regimes, respectively. 
    These match the best known rates up to the $\epsilon$-slack, even without adversaries \citep{arjevani2023lower, la2025gradient}. 
    With coupled noise, the zeroth-order rate improves to $O(n^{-1/2+\epsilon})$. 
    Tables~\ref{tab:first_order_compare}, \ref{tab:zeroth_order_compare} compare our results with existing work.
    
    \item \textbf{Empirical validation}: 
    On illustrative MNIST \citep{lecun1998mnist} simulations where the data is split near-homogeneously across workers, with $24\%$ Byzantine workers, \farsign's non-buffered nature enables it to reach $80\%$ accuracy in $11$--$37$s across attacks, whereas the fastest robust-aggregation baseline requires $223$--$420$s. FAR-SIGN also achieves comparable or better final accuracy, attaining $89.6\%$ without attack and $89.0\%$ under constant attack, while several baselines fail under Gaussian or constant attacks.
\end{enumerate}

\begin{table}[t]
\centering
\caption{\textbf{Comparison with zeroth-order methods}. While existing methods also achieve standard zeroth-order rates, 
\farsign{} is the only one achieving them while supporting both fully asynchronous execution and adversary resilience. 
Legend: DN = decoupled noise, CN = coupled noise, 
FDS = fixed dictionary sampling, and
RDS = random direction sampling.}
\label{tab:zeroth_order_compare}
\vspace*{\baselineskip} 
\small
\setlength{\tabcolsep}{2.5pt}
\makebox[\textwidth][c]{%
\begin{tabular}{l c c c c l l}
\toprule
Prior work
& \makecell{Fully\\asynchronous} 
& \makecell{Adversary\\resilience} 
& \multicolumn{2}{c}{\makecell{Convergence\\rate}} 
& Technique 
& Reference \\
\cmidrule(lr){4-5}
& & & DN & CN & & \\
\midrule
\citep{ghadimi} 
& \xmark 
& \xmark 
& -- 
& $O(n^{-1/4})$ 
& SGD 
& Thm.~3.2 \\

\citep{lian2016comprehensive} 
& \cmark 
& \xmark 
& -- 
& $O(n^{-1/4})$ 
& SGD 
& Cor.~5 \\

\citep{lian2016comprehensive} 
& \cmark 
& \xmark 
& -- 
& $O(n^{-1/2})$ 
& FDS-SGD 
& Cor.~5 \\

\citep{liu2019signsgd} 
& \xmark 
& \xmark 
& -- 
& $O(n^{-1/4})$ 
& RDS-SignSGD 
& Cor.~3 \\

\citep{bhavsar2022nonasymptotic} 
& \xmark 
& \xmark 
& $O(n^{-1/6})$ 
& $O(n^{-1/4})$ 
& SGD 
& \makecell{Thm.~1, 2} \\

\citep{egger2025byzantine} 
& \xmark 
& \cmark 
& -- 
& $O(n^{-1/4})$ 
& SGD 
& Thm.~5.7 \\

\textbf{\farsign{} (ours)} 
& \cmark 
& \cmark 
& $\mathbf{\emph{O}(n^{-1/6})}$ 
& $\mathbf{\emph{O}(n^{-1/2})}$ 
& \textbf{FDS-SignSGD} 
& \textbf{Thm.~8} \\
\bottomrule
\end{tabular}%
}
\end{table}

\section{Problem Setup, Proposed \rsign Algorithm, and Main Results} 
\label{sec:algorithmic_steps}

We now introduce our distributed-feedback setting and \farsign{}, our fully
asynchronous algorithm for adversary-resilient learning in this setting. Using two-timescale averaging and a directional robustness condition, we prove almost-sure
convergence and nonasymptotic rates that closely match the standard first- and
zeroth-order benchmarks.

\subsection{Problem Setup}
We aim to minimize a smooth, possibly nonconvex objective
$f:\mathbb{R}^d \to \mathbb{R}$ in a parameter-server architecture with $N$
workers. The server does not observe gradients or function values directly;
instead, it obtains directional feedback from workers, where a fixed but
unknown subset $\cA \subseteq [N]$ may be adversarial. Specifically, at each interaction, the server sends a point $x\in\mathbb{R}^d$ and a
direction $a$ to a worker. Honest workers return directional derivative
information, while adversarial workers may return arbitrary values. We study
two honest-worker feedback models. In the \emph{first-order} setting, an honest
worker returns $a^\top \widetilde{\nabla} f(x)$, where
$\widetilde{\nabla} f(x)$ is a stochastic gradient sample. In the
\emph{zeroth-order} setting, an honest worker does not access gradients and
instead returns the two-point finite-difference estimate $[\hat f(x + \lambda a) - \hat f(x - \lambda a)]/(2 \lambda),$ where $\lambda>0$ is the perturbation radius and $\widehat f(\cdot)$ denotes a
noisy function evaluation. Since
workers may respond at different times, feedback can arrive asynchronously and
may be computed at stale iterates.

This setup is motivated by large-scale learning systems, including
federated and parameter-server deployments, where worker-side feedback may be
delayed and be adversarially corrupted.

\subsection{Proposed \farsign Algorithm}
\label{subsec:farsign}

\farsign's pseudocode for minimizing a function $f$ is presented in Algorithm~\ref{alg:R-SIGN_Pseudocode}. Each worker $l$ is
assigned a fixed dictionary of directions
$A^{(l)}=[a_1^{(l)}\;\cdots\;a_m^{(l)}]\in\mathbb R^{d\times m}$, known to the server. The server maintains an auxiliary variable $y_n^{(l)}(i)$ for each worker-direction pair $(l,i)$, which tracks a moving average of past directional derivative estimates along $a_i^{(l)}$.

\farsign is event-driven and asynchronous: at the $n$-th update time, the server uses only the estimates currently available from the worker-direction pairs $(l,i)$ with $l\in\cW_n$ and $i\in\cI_n^{(l)}$. These estimates may have
been computed at stale iterates $x_{n-\tau_n^{(l)}}$, since the server iterate can change while a worker is computing. Honest workers return either first-order directional derivative estimates or zeroth-order finite-difference estimates, whereas adversarial workers may return arbitrary values. Upon receiving these estimates, the server updates $x_n$ along the 
directions $a_i^{(l)}$ using $\sign(-y_n^{(l)}(i))$, updates the corresponding averages $y_n^{(l)}(i)$ using $Y_i^{(l)}$, and leaves all other auxiliary variables unchanged.

For simplicity, Algorithm~\ref{alg:R-SIGN_Pseudocode} is written for the uniform-arrival setting, where every worker-direction pair $(l,i)\in[N]\times[m]$ is equally likely to produce the next available estimate. Nonuniform arrivals can be handled via inverse-probability weighting in the $x$-update.

\paragraph{Key features.}
\farsign combines event-driven asynchronous updates, signed directional descent, and two-timescale averaging. Event-driven updates allow the server to use the
worker-direction estimates available at the current update time, without waiting for a prescribed round, batch, or quorum, enabling full asynchrony. Signed
directional updates replace robust aggregation and therefore require neither a robust aggregation rule nor a trusted server-side reference dataset; they also
support both first-order directional derivatives and zeroth-order finite-difference estimates. Finally, applying the sign to the moving average $y_n^{(l)}(i)$, rather than the raw estimate $Y_i^{(l)}$, mitigates sign bias, since generally $\mathbb E[\sign(G)]\neq\sign(\mathbb E[G])$. Together with the
directional robustness condition below, these ingredients ensure that honest directional information dominates adversarial feedback.

\begin{algorithm}[t]
\caption{\farsign}
\label{alg:R-SIGN_Pseudocode}
\begin{algorithmic}[1]
    \STATE \textbf{Input:} Stepsizes $(\alpha_n),(\beta_n)$, perturbation lengths $(\lambda_n)$, direction matrix
    $A^{(l)}=[a_1^{(l)}\; a_2^{(l)}\; \ldots\; a_m^{(l)}]\in\mathbb{R}^{d\times m}$ for each worker $l\in[N]$
    \STATE \textbf{Initialize:} $x_0\in\mathbb{R}^d$, $y_0^{(l)}=0\in\mathbb{R}^m$ for each worker $l\in[N]$
    \FOR{each iteration $n\geq 0$}
        \STATE \textbf{Receive asynchronous feedback:} Let $\cW_n\subseteq[N]$ denote the workers with available responses at the $n$-th update time, and for each $l\in\cW_n$, let $\cI_n^{(l)}\subseteq[m]$ denote the directions for which estimates from worker $l$ are available. Thus, the server receives $(Y_i^{(l)}:l\in\cW_n,\ i \in \cI_n^{(l)})$.
        \STATE For an honest worker $l\in\cW_n$ and $i \in \cI_n^{(l)}$, the estimate $Y_i^{(l)}$ is one of:
        \[
        \begin{aligned}
        \text{\emph{First-order:}}\quad
        Y_i^{(l)}
        &=
        ({a_i^{(l)}})^\top \widetilde{\nabla} f(x_{n-\tau_n^{(l)}}),
        \\[0.75ex]
        \text{\emph{Zeroth-order:}}\quad
        Y_i^{(l)}
        &=
        \frac{
        \widehat f(x_{n-\tau_n^{(l)}}+\lambda_n a_i^{(l)})
        -
        \widehat f(x_{n-\tau_n^{(l)}}-\lambda_n a_i^{(l)})
        }{2\lambda_n}.
        \end{aligned}
        \]
        \STATE If worker $l$ is adversarial, $Y_i^{(l)}$ may be arbitrary. 
        \STATE \textbf{Signed directional update:}
        \[
        x_{n+1}
        =
        x_n
        +
        \alpha_n
        \sum_{l\in\cW_n}
        \sum_{i \in \cI_n^{(l)}}
        a_i^{(l)}\,\sign\!\left(-y_n^{(l)}(i)\right).
        \]
        \STATE Send $x_{n+1}$ to each worker $l\in\cW_n$.
        \STATE \textbf{Update directional averages:}
        For each $l\in[N]$ and $i\in[m]$,
        \[
        y_{n+1}^{(l)}(i)
        =
        \begin{cases}
        y_n^{(l)}(i)+\beta_n\bigl(Y_i^{(l)}-y_n^{(l)}(i)\bigr),
        & l\in\cW_n,\ i\in \cI_n^{(l)},\\
        y_n^{(l)}(i),
        & \text{otherwise.}
        \end{cases}
        \]
    \ENDFOR
\end{algorithmic}
\end{algorithm}

\subsection{Main Results: Asymptotic Convergence, and Convergence Rates}
We impose the following assumptions throughout.

To obtain explicit constants, we assume that $|\cW_n| = |\cI_n^{(l)}| = 1$. That is, at each update $n \geq 0$, the estimate of a single randomly chosen worker along a single randomly selected direction $a_i^{(l)}$ is available. The results extend to other update patterns with suitable modifications of the constants.

\begin{assumption}[\textbf{Objective function}]
    \label{L-Smooth}
    The function $f: \bR^d \to \bR$ is twice continuously differentiable, coercive, and $L$-smooth, i.e., $\exists L > 0$ such that $    \norm{\nabla f(x)-\nabla f(y)} \leq L \norm{x-y} \quad \forall x , y \in \mathbb{R}^d.$
\end{assumption}

We next specify the stochastic model for stale feedback. At each update time
$n$, every worker $w\in[N]$ has a potential staleness value $\tau_n^{(w)}$.
If worker $w$ contributes feedback at time $n$, then this feedback is computed
at the stale iterate $x_{n-\tau_n^{(w)}}$. Let $\cF_n$ denote the history before
the current responding worker-direction pairs are revealed:
\begin{equation}
\label{e:filtration.defn}
    \cF_n
    :=
    \sigma\!\left(
    \{x_k, y_k^{(w)}, \tau_k^{(w)}:0\le k\le n,\ w\in[N]\}
    \right).
\end{equation}
Conditional on $\cF_n$, the responding worker-direction pairs at time $n$ are
independent of the fresh oracle noise. 

    \begin{assumption}[\textbf{First-order feedback noise}]
    \label{noise-first order}
        For every honest worker $w \in[N]$ and $n\ge0$, its noisy gradient estimate at     $x_{n-\tau_n^{(w)}}$ satisfies
        \[
        \mathbb{E}\!\left[    \widetilde{\nabla}f(x_{n-\tau_n^{(w)}})\mid\cF_n     \right]     \overset{\as}{=}     \nabla f(x_{n-\tau_n^{(w)}}),\qquad     \mathbb{E}\!\left[\left\|   \widetilde{\nabla}f(x_{n-\tau_n^{(w)}}) - \nabla f(x_{n-\tau_n^{(w)}}) \right\|^2    \mid\cF_n    \right] \le \sigma^2,
    \]
    for some $\sigma^2>0$. The dependence of
    $\widetilde{\nabla}f(\cdot)$ on worker $w$ is suppressed for notational
    convenience.
    \end{assumption}

    \begin{assumption}[\textbf{Zeroth-order feedback noise}]
    \label{noise-assump}
        For every honest worker $w \in[N]$, dictionary direction $a_i^{(w)}$ with $i\in[m]$, and $n\ge0$, the noisy function evaluations at $x_{n-\tau_n^{(w)}}$ satisfy
        \[
            \begin{aligned}
            \widehat f(x_{n-\tau_n^{(w)}}+\lambda_n a_i^{(w)})
            &=
            f(x_{n-\tau_n^{(w)}}+\lambda_n a_i^{(w)})+\zeta_{n+1}^{(1)},\\
            \widehat f(x_{n-\tau_n^{(w)}}-\lambda_n a_i^{(w)})
            &=
            f(x_{n-\tau_n^{(w)}}-\lambda_n a_i^{(w)})+\zeta_{n+1}^{(2)}.
            \end{aligned}
            \]
            Moreover, for $r\in\{1,2\}$, we have $\mathbb E\!\left[\zeta_{n+1}^{(r)}\mid\cF_n\right]\overset{\as}{=}0,$ and $
            \mathbb E\!\left[|\zeta_{n+1}^{(r)}|^2\mid\cF_n\right]\le K$ 
        for some $K>0$. The dependence of $\widehat f(\cdot)$ and the noise variables
        on worker $w$ is suppressed for notational convenience.
    \end{assumption}

\begin{assumption}[\textbf{Stepsizes}]
\label{Step-sizeA}
The stepsizes $(\alpha_n)$ and $(\beta_n)$ satisfy
\[
    \sum_n\alpha_n=\sum_n\beta_n=\infty,\qquad
    \sum_n(\alpha_n^2+\beta_n^2)<\infty,\qquad
    \frac{\alpha_n}{\beta_n}\to0,\qquad
    \sum_n\frac{\alpha_n^2}{\beta_n}<\infty .
\]
For the zeroth-order case, the perturbation lengths $(\lambda_n)$ also 
satisfy $\sum\limits_n\dfrac{\beta_n^2}{\lambda_n^2}<\infty,$ and $\sum\limits_n\beta_n\lambda_n<\infty.$
\end{assumption}

\begin{remark}
An example of stepsize and perturbation  sequences satisfying the above condition is
\[
    \alpha_n = \frac{1}{(n + 1)^a}, \quad \beta_n = \frac{1}{(n + 1)^b}, \quad  \text{ and } \lambda_n  = \frac{1}{(n + 1)^p},
\]
where $\frac{1}{2}< b < a \leq 1,$\  $b + p > 1,$ $b - p > \frac{1}{2},$ and $2a - b > 1,$ e.g., $a = 0.91, b = 0.8, p = 0.25.$
\end{remark}

\begin{assumption}[\textbf{$L^2$-dominated staleness}]
\label{ass:delay}
For each worker $w\in[N]$, the staleness sequence $(\tau_n^{(w)})$ is
nonnegative and integer-valued. Moreover, there exists a nonnegative random
variable $\tau$ with $\mathbb E[\tau^2]<\infty$ such that $\tau_n^{(w)}\le \tau  \text{ a.s. for all } n\ge1,\; w\in[N].$
\end{assumption}




\begin{assumption}
    The matrix $A$ has full column rank. Moreover, for any subset $S$  satisfying $|S| \leq |\mathcal{A}|$, 
    \begin{equation*}
        \sum\limits_{w \in S^c} \norm{ ({A^{(w)}})^\top x}_1 > \sum\limits_{w \in S} \norm{ (A^{(w)})^\top x}_1 \quad \forall \; x \in \mathbb{R}^d\setminus\{0\},
    \end{equation*}
    where $S^c$ denotes the complement of $S$. This condition is agnostic to the objective function $f.$
    \label{fawzi-condn}
\end{assumption}

\begin{remark}
Assumption~\ref{fawzi-condn} holds for several simple and practically relevant observation designs,. Two representative cases are as follows.
\begin{enumerate}[leftmargin=*]
    \item \textbf{Full-coordinate observations.}
    If $A^{(w)} = \bI_d,$ the $d$-dimensional identity matrix for all $w$, then Assumption~\ref{fawzi-condn} holds whenever $|\mathcal{A}|/N < 1/2$. This choice reduces \farsign to a two-timescale variant of stochastic sign-SGD, at an $O(Nd)$ memory cost.

    \item \textbf{Scalar (single-direction) observations.}
    When each worker uses a single probing direction ($m=1$), Assumption~\ref{fawzi-condn} reduces to the robustness condition in \citep{fawzi2014secure,ganesh2023online}. In this case, each $y_n^{(w)}$ is scalar, yielding an $O(N)$ memory footprint. For example, in the setting with four-workers and one adversarial settworker, the  following set of direction matrices from \citep{ganesh2023online} satisfies Assumption~\ref{fawzi-condn}:
    \[
        A^{(1)} = \begin{bmatrix} 2 \\ 0 \end{bmatrix},\quad
        A^{(2)} = \begin{bmatrix} 0 \\ 2 \end{bmatrix},\quad
        A^{(3)} = \begin{bmatrix} 1 \\ 2 \end{bmatrix},\quad
        A^{(4)} = \begin{bmatrix} -2 \\ 1 \end{bmatrix}.
    \]
\end{enumerate}
\end{remark}

Our main result on \farsign's asymptotic convergence can be stated as follows.



\begin{theorem}[Almost-sure convergence]
\label{thm:almost.sure.conv}
Let $(x_n)$ denote the sequence of iterates generated by
Algorithm~\ref{alg:R-SIGN_Pseudocode}. Suppose Assumptions
\ref{L-Smooth}, \ref{Step-sizeA}, \ref{ass:delay}, and \ref{fawzi-condn} hold. In addition, assume that either the feedback is first-order satisfying Assumption~\ref{noise-first order}, or zeroth-order satisfying Assumption~\ref{noise-assump}. Then, the sequence $(x_n)$ converges almost surely to $\mathcal{X}^\star \;=\; \{x \in \mathbb{R}^d : \nabla f(x) = 0\},$ the set of stationary points of $f$.
\end{theorem}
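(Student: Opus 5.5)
We treat Algorithm~\ref{alg:R-SIGN_Pseudocode} as a two-timescale stochastic recursive inclusion and follow the Borkar-style program: boundedness, fast-timescale tracking, slow-timescale differential inclusion, and a Lyapunov argument with $f$. Write the update with $|\cW_n|=|\cI_n^{(l)}|=1$ and uniform arrival as
\[
x_{n+1}=x_n+\alpha_n\Bigl(\tfrac{1}{Nm}\textstyle\sum_{w\in[N]}\sum_{i\in[m]} a_i^{(w)}\sign(-y_n^{(w)}(i))+M^{x}_{n+1}\Bigr),
\]
where $M^x_{n+1}$ is a bounded martingale difference since the directions are fixed and the sign is bounded. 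Likewise $y_{n+1}^{(w)}(i)=y_n^{(w)}(i)+\frac{\beta_n}{Nm}\bigl(\mathbb E[Y_i^{(w)}\mid\cF_n]-y_n^{(w)}(i)\bigr)+\beta_n(M^y_{n+1}+b_n)$. The term $b_n$ collects the staleness error $(a_i^{(w)})^\top(\nabla f(x_{n-\tau})-\nabla f(x_n))$ and, in the zeroth-order case, the finite-difference bias $O(\lambda_n)$ from the twice-differentiability of $f$ on compacts. Because $\|x_{n}-x_{n-\tau}\|\le C\tau\sup_{k\le\tau}\alpha_{n-k}$ and $\alpha_n/\beta_n\to0$ with $\mathbb E\tau^2<\infty$, the staleness bias vanishes relative to $\beta_n$. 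The noise $M^y$ has conditional variance $O(\sigma^2)$ in the first-order case and $O(K/\lambda_n^2)$ in the zeroth-order case. Square summability then follows from $\sum\beta_n^2<\infty$ and $\sum\beta_n^2/\lambda_n^2<\infty$, giving a.s. convergence of $\sum\beta_n M^y_{n+1}$.

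\textbf{Boundedness.} The $x$-increments are bounded by $\alpha_n\max\|a_i^{(w)}\|$. For the $y$-iterates we note that for honest pairs $y^{(w)}(i)$ is a convex-combination average of bounded-mean signals plus summable noise, as long as $x$ stays in a compact. For adversarial pairs $y$ may be arbitrary, but it enters only through a sign, so it never affects boundedness of $x$. Boundedness of $x_n$ itself will come from the Lyapunov descent below together with coercivity of $f$: sublevel sets are compact, and a standard stopping-time argument keeps $x_n$ in a sublevel set eventually.

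\textbf{Fast timescale.} With $x$ quasi-static, honest $y_n^{(w)}(i)-(a_i^{(w)})^\top\nabla f(x_n)\to0$ a.s. We prove this by the two-timescale lemma of Borkar: the ODE $\dot y=(A^{(w)})^\top\nabla f(x)-y$ is globally exponentially stable for each fixed $x$, and the perturbations ($b_n$, the $x$-drift of order $\alpha_n/\beta_n$) vanish. Consequently, on the slow timescale $x_n$ tracks the differential inclusion
\[
\dot x\in F(x):=\Bigl\{\tfrac{1}{Nm}\Bigl(-\textstyle\sum_{w\notin\cA}A^{(w)}\,\Sign\bigl((A^{(w)})^\top\nabla f(x)\bigr)+\sum_{w\in\cA}A^{(w)}v_w\Bigr): v_w\in[-1,1]^m\Bigr\},
\]
where $\Sign$ is the set-valued sign (equal to $[-1,1]$ at $0$), needed because the sign is discontinuous at zero. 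This $F$ is upper semicontinuous with compact convex values, so the Benaïm–Hofbauer–Sorin asymptotic pseudotrajectory theory applies.

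\textbf{Lyapunov step and limit sets (main obstacle).} Take $V=f$. For $x$ with $g=\nabla f(x)\ne0$ and any $h\in F(x)$,
\[
Nm\,\langle g,h\rangle\le-\sum_{w\notin\cA}\|(A^{(w)})^\top g\|_1+\sum_{w\in\cA}\|(A^{(w)})^\top g\|_1<0
\]
by Assumption~\ref{fawzi-condn} with $S=\cA$. For honest pairs with $(a_i^{(w)})^\top g=0$ the set-valued sign contributes zero to the inner product, so the bound is unaffected. By compactness and continuity this gives $\sup_{h\in F(x)}\langle\nabla f(x),h\rangle<0$ off $\mathcal X^\star$. $f$ is thus a strict Lyapunov function for $F$ relative to $\mathcal X^\star$. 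Since $f(\mathcal X^\star)$ need not have empty interior for a general $C^2$ function, we use instead the uniform strict decrease on compact sets away from $\mathcal X^\star$ combined with the chain-transitivity of the limit set. The delicate part is making the fast-timescale tracking error uniform enough that the sign of $y$ matches the sign of the honest directional derivative except near hyperplanes where that derivative is small. The contribution there is small, and the set-valued sign absorbs it. Given this, Benaïm–Hofbauer–Sorin (Prop.~3.27) yields $x_n\to\mathcal X^\star$ almost surely.
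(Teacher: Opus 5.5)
Your overall architecture broadly matches the paper's: a two-timescale split, a set-valued sign inclusion, and $f$ as a Lyapunov function with Assumption~\ref{fawzi-condn} applied at $S=\cA$. Your Lyapunov computation is also correct.

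\textbf{First gap: boundedness is used before it is proved.} Three of your steps depend on it.
\begin{itemize}
\item Borkar's two-timescale lemma takes $\sup_n(\|x_n\|+\|y_n^{(w)}\|)<\infty$ a.s.\ as a hypothesis.
\item Your square-summability of $\sum_n\beta_n M^y_{n+1}$ needs it too. Uniform-arrival sampling makes the conditional variance of $M^y$ contain a term of order $\|y_n^{(w)}-(A^{(w)})^\top\nabla f(x_{n-\tau_n^{(w)}})\|^2$, not just $\sigma^2$ or $K/\lambda_n^2$.
\item The ``Lyapunov descent below'' that is supposed to bound $x_n$ concerns the limiting inclusion. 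It can only be transferred to the iterates once tracking is known, so the argument is circular.
\end{itemize}
Even $e_n^{(w)}\to0$ plus coercivity would not by itself make a stopping-time argument work. A coercive $L$-smooth $f$ can have critical points escaping to infinity, so $\eta\|\nabla f\|_1$ need not dominate the tracking error outside any compact set.

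The missing idea is the paper's Step~1. Expand $\|e_{n+1}^{(w)}\|^2$ directly, using only the \emph{global} Lipschitz bound on $\nabla f$ and the deterministic bound $\|x_{n+1}-x_n\|\le K_5\alpha_n$, so no boundedness is needed. Complete the square in the cross term $2\bar A L K_5\alpha_n\|e_n^{(w)}\|$ and apply Robbins--Siegmund. This yields both $e_n^{(w)}\to0$ and $\sum_n\alpha_n\|e_n^{(w)}\|<\infty$ a.s. It is exactly where $\sum_n\alpha_n^2/\beta_n<\infty$ and $\mathbb E[\tau^2]<\infty$ enter, the latter via $\sum_n\tau_n^{(w)}\sum_{k=n-\tau_n^{(w)}}^{n-1}\alpha_k^2<\infty$. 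Your plan never actually uses either hypothesis.

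With that summability, the discrete inequality $\mathbb E[f(x_{n+1})\mid\cF_n]\le f(x_n)-\alpha_n\eta\|\nabla f(x_n)\|_1+2\alpha_n\sum_{w\in\cA^c}\|e_n^{(w)}\|_1+\tfrac{L}{2}K_5^2\alpha_n^2$ is an almost supermartingale. Hence $f(x_n)$ converges a.s., and coercivity gives boundedness. Only then do the pseudotrajectory tools apply.

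\textbf{Second gap: identifying the limit set.} You rightly note that $f(\mathcal X^\star)$ may have nonempty interior. But Proposition~3.27 of Bena\"im--Hofbauer--Sorin requires exactly the empty-interior hypothesis. ``Uniform strict decrease away from $\mathcal X^\star$ plus chain transitivity'' does not replace it. For example, take a flow on a circle whose rest points form a fat Cantor set: it admits a Lyapunov function strictly decreasing off the rest points, yet the whole circle is internally chain transitive.

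The same missing ingredient repairs this. Since $f(x_n)$ converges a.s., $f$ is constant on the compact invariant limit set. So $f$ is constant along any solution of the inclusion lying in that set. Combined with $\sup_{h\in F(x)}\langle\nabla f(x),h\rangle<0$ off $\mathcal X^\star$, this forces the limit set into $\mathcal X^\star$.
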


\begin{remark}
Theorem~\ref{thm:almost.sure.conv} gives the standard stationarity guarantee
for smooth nonconvex objectives in a setting that combines adversarial workers,
asynchronous stale feedback, and sign-based directional updates. The
two-timescale averaging is essential: unlike vanilla sign-SGD, which can fail
under asymmetric noise even without adversaries \citep{karimireddy2019error},
\farsign requires only standard martingale noise and bounded second moments.
\end{remark}

\begin{theorem}[First-order rate]
\label{thm:rate_optimalfirstorder}
Suppose Assumptions~\ref{L-Smooth}, \ref{noise-first order}, \ref{ass:delay}, and \ref{fawzi-condn} hold. Run \farsign with
$\alpha_n = \frac{1}{mN} (n+1)^{-3/4-\epsilon},$ and
$\beta_n =\frac{1}{mN} (n+1)^{-1/2}$, where $0<\epsilon<1/4$. Then, there exists some constant $C>0$ such that, for all
$n\ge1$,
\[
    \frac{\sum_{k=1}^n \alpha_k\,\mathbb E\|\nabla f(x_k)\|_1}
    {\sum_{k=1}^n \alpha_k}
    \le
    C m^{3/2}N (n+1)^{-1/4+\epsilon}.
\]
\end{theorem}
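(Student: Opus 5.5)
The plan is a descent-lemma argument on $f$, with the sign bias controlled through the tracking error $e_n^{(l)}(i):=y_n^{(l)}(i)-(a_i^{(l)})^\top\nabla f(x_n)$ for honest $l$. With $|\cW_n|=|\cI_n^{(l)}|=1$ and uniform arrivals (probability $1/(mN)$ for each pair), $L$-smoothness and $\|x_{n+1}-x_n\|\le\alpha_n\max\|a_i^{(l)}\|$ give
\[
\mathbb E[f(x_{n+1})\mid\cF_n]\le f(x_n)+\frac{\alpha_n}{mN}\sum_{l,i}(a_i^{(l)})^\top\nabla f(x_n)\,\sign\!\bigl(-y_n^{(l)}(i)\bigr)+\frac{L c_A^2}{2}\alpha_n^2 .
\]
The drift term is split into honest and adversarial pairs. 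For an honest pair with $g=(a_i^{(l)})^\top\nabla f(x_n)$, use $g\,\sign(-y)\le -|g|+2|g|\,\mathbb 1\{\sign y\neq\sign g\}\le -|g|+2|y-g|$. For an adversarial pair, bound the term by $+|g|$. This yields drift $\le -\frac{\alpha_n}{mN}\bigl(\sum_{w\notin\cA}\|(A^{(w)})^\top\nabla f\|_1-\sum_{w\in\cA}\|(A^{(w)})^\top\nabla f\|_1\bigr)+\frac{2\alpha_n}{mN}\sum_{\text{honest}}|e_n^{(l)}(i)|$.

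By Assumption~\ref{fawzi-condn} applied with $S=\cA$, together with homogeneity and compactness of the unit $\ell_1$ sphere, there is $\gamma>0$ with $\sum_{w\notin\cA}\|(A^{(w)})^\top x\|_1-\sum_{w\in\cA}\|(A^{(w)})^\top x\|_1\ge\gamma\|x\|_1$. Telescoping and taking expectations then gives
\[
\frac{\gamma}{mN}\sum_{k\le n}\alpha_k\mathbb E\|\nabla f(x_k)\|_1\le f(x_0)-\inf f+C\sum_k\alpha_k^2+\frac{2}{mN}\sum_{k\le n}\alpha_k\sum_{\text{honest}}\mathbb E|e_k^{(l)}(i)| .
\]
Here $\inf f>-\infty$ by coercivity. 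Note that the adversarial $y$ never enters, so no bound on adversarial values is needed.

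The main obstacle is bounding $\mathbb E|e_k|$ for one honest pair, which must be $O(k^{-1/4+\epsilon})$-type. The pair is updated with probability $1/(mN)$, independently of the noise given $\cF_k$. Writing $g_k=a^\top\nabla f(x_k)$, $e_{k+1}=(1-\beta_k\chi_k)e_k+\beta_k\chi_k\bigl(a^\top(\nabla f(x_{k-\tau_k})-\nabla f(x_k))+\xi_{k+1}\bigr)-(g_{k+1}-g_k)$, where $\chi_k$ is the selection indicator. The drift perturbation satisfies $|g_{k+1}-g_k|\le L c_A^2\alpha_k$. The staleness error is at most $L c_A^2\alpha_{k-\tau}\tau$, with $\mathbb E\tau^2<\infty$. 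The noise term is a martingale difference with variance $\le c_A^2\sigma^2$.

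I will run a second-moment recursion $\mathbb E e_{k+1}^2\le(1-\frac{\beta_k}{mN})\mathbb E e_k^2+C\beta_k^2+C\alpha_k^2/\beta_k+C\beta_k\alpha_k^2\mathbb E\tau^2$, using Young's inequality on the cross term with $g_{k+1}-g_k$. With $\beta_k=\frac1{mN}(k+1)^{-1/2}$, the effective contraction is $(k+1)^{-1/2}/(mN)^2$. A standard lemma for such recursions gives $\mathbb E e_k^2\lesssim (mN)^2\bigl(\beta_k+\alpha_k^2/\beta_k^2\bigr)\lesssim (mN)\,k^{-1/2}$, since $\alpha_k/\beta_k=k^{-1/4-\epsilon}$. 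Hence $\mathbb E|e_k|\lesssim\sqrt{mN}\,k^{-1/4}$. Tracking the $m$ and $N$ factors here (the number of honest pairs $\le mN$ and the $\sqrt m$ from the $\ell_1$ norms) is where the stated $m^{3/2}N$ constant should come from; I expect the bookkeeping with the staleness term and the $1/(mN)$ contraction rate to be the delicate step.

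To conclude, divide by $\sum_{k\le n}\alpha_k\asymp (n+1)^{1/4-\epsilon}/(mN)$. The error contribution $\sum\alpha_k k^{-1/4}/\sum\alpha_k$ is $O(n^{-1/4})$. The constant terms $f(x_0)-\inf f$ and $\sum\alpha_k^2<\infty$ contribute $O(mN\,n^{-1/4+\epsilon})$. Collecting constants gives the bound $Cm^{3/2}N(n+1)^{-1/4+\epsilon}$.
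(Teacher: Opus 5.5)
\textbf{Verdict: the route is the paper's and the rate exponent is right, but the $m^{3/2}N$ prefactor in the statement is not established.}

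Your argument follows the paper step by step:
\begin{itemize}
\item the descent inequality of Lemma~\ref{descentineq}, with the same coordinatewise sign bound and robustness margin (your compactness argument for the margin, which you call $\gamma$ and the paper calls $\eta$, replaces the paper's citation);
\item a second-moment recursion for the honest tracking errors, with contraction proportional to $\beta_k$ and drivers $\beta_k^2$, $\alpha_k^2/\beta_k$ and a staleness term;
\item unrolling as in Proposition~\ref{Prop:Bigo}, then Jensen and telescoping.
\end{itemize}
The exponent comes out as in the paper, $\mathbb E e_k^2=O(k^{-1/2})$. One harmless slip: with $S_n=\sum_{k\le n}\alpha_k$, the sum $\sum_k\alpha_k k^{-1/4}\asymp\sum_k k^{-1-\epsilon}$ converges, so that ratio is $\Theta(1/S_n)=\Theta(n^{-1/4+\epsilon})$, not $O(n^{-1/4})$.

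\textbf{The gap: the $m^{3/2}N$ prefactor.} You defer this, and the accounting you sketch does not produce it.
\begin{itemize}
\item With the stepsizes read as those of Algorithm~\ref{alg:R-SIGN_Pseudocode}, $S_n\asymp(n+1)^{1/4-\epsilon}/(mN)$, and your telescoped inequality carries $\gamma/(mN)$ on the left. The initial-gap term is therefore of order $\frac{(mN)^2(f(x_0)-\inf f)}{\gamma}(n+1)^{-1/4+\epsilon}$, not $O(mN\,n^{-1/4+\epsilon})$. For identity dictionaries $\gamma=N-2|\cA|$, so this is of order $m^2N$ whenever $|\cA|/N$ stays below $1/2$.
\item Your own bound has the drift driver contributing $(mN)^2\alpha_k^2/\beta_k^2=(mN)^2(k+1)^{-1/2-2\epsilon}$ to $\mathbb E e_k^2$. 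This is not $\lesssim mN\,k^{-1/2}$ uniformly in $k$, so $\mathbb E|e_k|\lesssim\sqrt{mN}\,k^{-1/4}$ does not follow.
\end{itemize}
The paper obtains $m^{3/2}N$ in a rescaled normalization. It analyzes $x_{n+1}=x_n+\alpha_n\widetilde g(n)$ and $y^{(w)}_{n+1}=y^{(w)}_n+\beta_n\widetilde h^{(w)}(n)$ with the factor $mN$ built into $\widetilde g$ and $\widetilde h$. The tracking contraction is then $1-\beta_n$ with $\beta_n=\frac{1}{mN}(n+1)^{-1/2}$, which corresponds to Algorithm~\ref{alg:R-SIGN_Pseudocode} stepsizes $(n+1)^{-3/4-\epsilon}$ and $(n+1)^{-1/2}$. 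It then:
\begin{itemize}
\item takes $\widetilde C_1=O(m^2N^2)$;
\item gains only $\sqrt m$, from $\|v\|_1\le\sqrt m\|v\|_2$ applied to the whole vector $e^{(w)}$;
\item absorbs $|\cA^c|/\eta$ into $C$.
\end{itemize}
To prove the statement as written you would need to adopt that normalization and carry out this bookkeeping.

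\textbf{A second, fixable issue: the staleness step.} Passing from the bound $Lc_A^2\tau_k\alpha_{k-\tau_k}$ to a per-step driver $C\beta_k\alpha_k^2\mathbb E\tau^2$ is not justified under only $\mathbb E\tau^2<\infty$. When $k-\tau_k$ is small, $\alpha_{k-\tau_k}\gg\alpha_k$, and the tail $\mathbb E[\tau^2\mathbf 1\{\tau>k/2\}]$ that controls this event decays at no prescribed rate. The paper makes the same move (it asserts $\mathbb E[D_n]=O(\alpha_n^2)\,\mathbb E[\tau^2]$), so this is not a departure from it.

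It can be repaired without extra moments:
\begin{itemize}
\item The delay drivers $d_k$ are summable, since pathwise $\sum_k\tau_k\sum_{j=k-\tau_k}^{k-1}\alpha_j^2\le\tau^2\sum_j\alpha_j^2$ (the computation in Lemma~\ref{lem:delay-summable}).
\item Let $c_k$ be the contraction rate. The delay contribution to $\mathbb E e_k^2$ is $v_k=\sum_{i<k}\prod_{j=i+1}^{k-1}(1-c_j)\,d_i$, and a telescoping sum gives $\sum_k c_kv_k\le\sum_k d_k$.
\item By Cauchy--Schwarz,
\[
\sum_k\alpha_k\sqrt{v_k}\le\Bigl(\sum_k\alpha_k^2/c_k\Bigr)^{1/2}\Bigl(\sum_k d_k\Bigr)^{1/2}<\infty,
\]
because $\alpha_k^2/\beta_k$ is summable.
\end{itemize}
So the delay adds only $O(1/S_n)$ to the ergodic bound.
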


%
\begin{remark}
    With first-order feedback, \farsign achieves a near-optimal nonconvex stochastic rate, despite adversarial workers and asynchronous stale feedback.
\end{remark}

\begin{theorem}[Zeroth-order rates]
\label{thm:rate_optimal}
Suppose Assumptions~\ref{L-Smooth}, \ref{noise-assump}, \ref{ass:delay},
and \ref{fawzi-condn} hold, and let $\lambda>0$. Then, for some constant $C>0,$ the following statement hold for all $n\ge1$:
\begin{enumerate}[leftmargin=*]
    \item If
    $\alpha_n = \frac{1}{mN}(n+1)^{-5/6-\epsilon}$,
    $\beta_n =\frac{1}{mN} (n+1)^{-2/3}$, and
    $\lambda_n=\lambda(n+1)^{-1/6}$, with $0<\epsilon<1/6$ and $\lambda >0$ then
    \[
        \frac{\sum_{k=1}^n \alpha_k\,\mathbb E\|\nabla f(x_k)\|_1}
        {\sum_{k=1}^n \alpha_k}
        \le
        C m^{3/2}N (n+1)^{-1/6+\epsilon}.
    \]

    \item Additionally, if the two function-evaluation noises are coupled, i.e.,
    $\zeta_{n+1}^{(1)}=\zeta_{n+1}^{(2)}$, and
    $\alpha_n = \frac{1}{mN}(n+1)^{-1/2-\epsilon_1}$,
    $\beta_n = \frac{1}{mN} (n+1)^{-\epsilon_2}$,
    $\lambda_n=\lambda(n+1)^{-1/2}$, with
    $0<\epsilon_2<\epsilon_1<1/2$, then
    \[
        \frac{\sum_{k=1}^n \alpha_k\,\mathbb E\|\nabla f(x_k)\|_1}
        {\sum_{k=1}^n \alpha_k}
        \le
        C m^{3/2}N (n+1)^{-1/2+\epsilon_1}.
    \]
\end{enumerate}
\end{theorem}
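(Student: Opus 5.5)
We prove both parts of Theorem~\ref{thm:rate_optimal} with one descent-plus-tracking argument; the zeroth-order setting changes only the bias and variance of the fast-timescale estimates. The key quantity is the tracking error $e_n^{(l)}(i) := y_n^{(l)}(i) - (a_i^{(l)})^\top \nabla f(x_n)$ for honest $l$.

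\textbf{Step 1: descent inequality.} By $L$-smoothness, with a uniformly random pair $(l,i)$ and stepsize $\alpha_n$,
\[
\mathbb E[f(x_{n+1})\mid\cF_n] \le f(x_n) + \frac{\alpha_n}{mN}\sum_{l,i} (a_i^{(l)})^\top\nabla f(x_n)\,\sign(-y_n^{(l)}(i)) + \frac{L\alpha_n^2}{2}\max_{l,i}\|a_i^{(l)}\|^2 .
\]
Split the sum over honest and adversarial workers. For an honest pair, use the standard sign-SGD estimate
\[
u\,\sign(-y) \le -|u| + 2|u-y| .
\]
For an adversarial pair, bound the term by $+|u|$. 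Assumption~\ref{fawzi-condn} with $S=\cA$, by compactness of the unit sphere and homogeneity, gives a constant $c_A>0$ with
\[
\sum_{w\notin\cA}\|(A^{(w)})^\top g\|_1 - \sum_{w\in\cA}\|(A^{(w)})^\top g\|_1 \ge c_A\|g\|_1 .
\]
Hence the drift is at most
\[
-\frac{\alpha_n c_A}{mN}\|\nabla f(x_n)\|_1 + \frac{2\alpha_n}{mN}\sum_{l\notin\cA,\,i}|e_n^{(l)}(i)| .
\]
Take expectations, telescope, use coercivity (so $f$ is bounded below), and divide by $\sum_k\alpha_k$. This yields
\[
\frac{\sum_k\alpha_k\mathbb E\|\nabla f(x_k)\|_1}{\sum_k\alpha_k} \lesssim \frac{mN\bigl(f(x_1)-f^\star\bigr) + \sum_k\alpha_k\sum\mathbb E|e_k| + mN\sum_k\alpha_k^2}{c_A\sum_k\alpha_k}.
\]

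\textbf{Step 2: tracking-error recursion (the main obstacle).} Each honest pair is updated with probability $1/(mN)$. Write
\[
e_{n+1} = \bigl(1-\beta_n\chi_n\bigr)e_n + \beta_n\chi_n\,(\text{bias}_n + \text{noise}_n) - a^\top\bigl(\nabla f(x_{n+1})-\nabla f(x_n)\bigr),
\]
where $\chi_n$ is the selection indicator. The terms have the following sizes:
\begin{itemize}[leftmargin=*]
\item The drift term is $O(L\alpha_n)$, since $\|x_{n+1}-x_n\|=O(\alpha_n)$.
\item The staleness bias is $O(L\alpha_n\tau)$, controlled via Assumption~\ref{ass:delay} with $\mathbb E\tau^2<\infty$.
\item The finite-difference bias is $O(\lambda_n)$; it is zero in the first-order case.
\item The noise has conditional variance $O(\sigma^2)$ in the first-order case and $O(K/\lambda_n^2)$ in the zeroth-order decoupled case. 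With coupled noise $\zeta^{(1)}=\zeta^{(2)}$ the noise cancels exactly, leaving only the $O(\lambda_n)$ bias.
\end{itemize}
Square, take expectations, and absorb the cross term via Young's inequality. This gives a recursion of the form
\[
\mathbb E e_{n+1}^2 \le \Bigl(1-\frac{\beta_n}{mN}\Bigr)\mathbb E e_n^2 + O\Bigl(\beta_n^2 v_n + \beta_n\lambda_n^2 + \frac{\alpha_n^2}{\beta_n}\Bigr),
\]
where $v_n$ is the variance level. A standard Chung-type lemma for such recursions gives
\[
\mathbb E e_n^2 = O\Bigl(\beta_n v_n + \lambda_n^2 + \frac{\alpha_n^2}{\beta_n^2}\Bigr),
\]
with the $mN$ factors tracked. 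Getting these constants and the stale-iterate coupling right, so that the variance really contracts at rate $\beta_n$ despite random selection, is where I expect most of the work. Apply Jensen to pass to $\mathbb E|e_n|$, and sum over the $m$ directions of the honest workers. Cauchy--Schwarz over the $m$ directions, combined with the per-step normalisation, is what produces the $m^{3/2}N$ factor.

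\textbf{Step 3: plug in the schedules.} Since $\sum_k\alpha_k\asymp n^{1/6-\epsilon}$, it suffices that $\mathbb E|e_k| \lesssim k^{-1/6}$, because then
\[
\sum_k\alpha_k k^{-1/6}\big/\textstyle\sum_k\alpha_k = O(n^{-1/6}).
\]
In each case the contributions to $\mathbb E e_k^2$ are as follows.
\begin{itemize}[leftmargin=*]
\item \textbf{Part 1} ($v_n\asymp\lambda_n^{-2}$): $\beta_n v_n = n^{-2/3+1/3} = n^{-1/3}$, $\lambda_n^2 = n^{-1/3}$, and $\alpha_n^2/\beta_n^2 \le n^{-1/3}$. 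Hence $\mathbb E|e_k|\lesssim k^{-1/6}$.
\item \textbf{Part 2} (coupled noise): $\lambda_n^2 = n^{-1}$ and $\alpha_n^2/\beta_n^2 = n^{-1-2\epsilon_1+2\epsilon_2}$. The remaining noise is only $O(\lambda_n\|a\|^2L)$-level curvature fluctuation.
\end{itemize}
In both parts the other two terms of Step 1 are handled the same way. The initial-gap term contributes $1/\sum_k\alpha_k$, which is $n^{-1/6+\epsilon}$ in Part~1 and $n^{-1/2+\epsilon_1}$ in Part~2. The sum $\sum_k\alpha_k^2$ is bounded. This gives the stated bounds in both parts.
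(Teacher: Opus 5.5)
Your proposal is correct and follows essentially the same route as the paper. The shared steps are:
\begin{itemize}
\item a descent inequality using the margin from Assumption~\ref{fawzi-condn} together with the sign estimate $u\,\sign(-y)\le -|u|+2|u-y|$ (Lemma~\ref{descentineq});
\item a mean-square recursion for the tracking error with $O(\alpha_n)$ drift, staleness, $O(\lambda_n)$ finite-difference bias and $O(K/\lambda_n^2)$ variance (Propositions~\ref{prop:enbound} and~\ref{propoenbound 1});
\item a Chung-type finite-horizon bound giving $\mathbb{E}\|e_n\|^2=O(n^{-\gamma})$ with $\gamma=\min\{b-2p,\,2p,\,2a-2b,\dots\}$ (Proposition~\ref{Prop:Bigo}), followed by Jensen and substitution of the exponents.
\end{itemize}
The only cosmetic difference is that you keep the selection probability $1/(mN)$ in the contraction factor, whereas the paper rescales the increments by $mN$ in $\widetilde g,\widetilde h$.

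One small slip: with $\alpha_k\propto k^{-5/6-\epsilon}$ and $\mathbb{E}|e_k|\lesssim k^{-1/6}$, the series $\sum_k\alpha_k k^{-1/6}$ converges. The tracking contribution is therefore $O(n^{-1/6+\epsilon})$, matching the paper's $s_1=\min\{\gamma/2,1-a\}$, rather than $O(n^{-1/6})$; this is still enough for the stated bound.
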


\begin{remark}
    \farsign attains the standard zeroth-order nonconvex rate under independent evaluation noise and recovers the faster rate when the two function evaluations share the same noise.
\end{remark}

\section{Outline of Proofs of Main Results}
\label{sec:3}

We now outline the proofs of our main results, emphasizing the technical ideas
behind \farsign{} and how they enable full asynchrony and adversary resilience.
At a high level, the analysis combines two-timescale tracking of stale
directional feedback with a directional robustness condition that guarantees
descent despite adversarial workers. Full details are deferred to the appendix.

\subsection{Proof of Theorem~\ref{thm:almost.sure.conv}}
The proof proceeds in four steps.

\emph{Step 1 (Fast-timescale tracking).}
The condition $\alpha_n/\beta_n\to0$ makes $x_n$ quasi-static from the
perspective of the faster $y$-recursion. Thus, for each honest worker
$w\in\cA^c$, $y_n^{(w)}$ tracks the moving equilibrium
${A^{(w)}}^\top\nabla f(x_n)$. With
$e_n^{(w)}:=y_n^{(w)}-{A^{(w)}}^\top\nabla f(x_n)$, we prove
$\|e_n^{(w)}\|\to0$ and
$\sum_n\alpha_n\|e_n^{(w)}\|<\infty$ a.s. This controls sign bias and provides
the summable error term needed below.

\emph{Step 2 (Robust  descent).}
Conditioning on the current state, the mean slow drift is
\[
\bar g_n
=
\sum_{w\in\cA}A^{(w)}\xi_n^{(w)}
+
\sum_{w\in\cA^c}A^{(w)}\sign(-y_n^{(w)}),
\qquad
\xi_n^{(w)}\in[-1,1]^m .
\]
Write $\bar g_n=g_{\rm id}(x_n)+r_n$, where $g_{\rm id}(x_n)$ is obtained by
replacing $y_n^{(w)}$ with ${A^{(w)}}^\top\nabla f(x_n)$ for honest workers,
and $r_n$ is the corresponding tracking-error residual. Assumption~\ref{fawzi-condn},
following the robustness idea from \citep{ganesh2023online}, gives
$\nabla f(x_n)^\top g_{\rm id}(x_n)\le-\eta\|\nabla f(x_n)\|_1$ for $\eta > 0$.
Since $a(\sign(-b)-\sign(-a))\le2|a-b|$ for $a, b \in \bR,$ we get
$\nabla f(x_n)^\top r_n
\le2\sum_{w\in\cA^c}\|e_n^{(w)}\|_1$.
Thus, adversaries are controlled by directional robustness, while the residual
is summable by Step~1.

\emph{Step 3 (Stability).}
Combining the above drift bound with $L$-smoothness yields an
almost-supermartingale inequality for $f(x_n)$. Since the positive error terms
are summable by Step~1 and Assumption~\ref{Step-sizeA}, the
Robbins--Siegmund theorem \citep{robbins1971convergence} implies that
$f(x_n)$ converges a.s. Coercivity of $f$ then gives almost-sure boundedness of
$(x_n)$.

\emph{Step 4 (Limiting stochastic recursive inclusion).}
The slow recursion is naturally set-valued because the sign map is
discontinuous at zero and adversaries may choose arbitrary signed directions.
Using boundedness and Step~1, the two-timescale stochastic recursive inclusion
framework of \citet{yaji2020stochastic} shows that $x_n$ tracks
$\dot x(t)\in H(x(t))$, where
\[
H(x)
=
\left\{
\sum_{w\in\cA}A^{(w)}\xi^{(w)}
+
\sum_{w\in\cA^c}A^{(w)}\lambda^{(w)}
:
\xi^{(w)}\in[-1,1]^m,\ 
\lambda_j^{(w)}\in\Sign\!\left(-{a_j^{(w)}}^\top\nabla f(x)\right)
\right\},
\]
with $\Sign(z)=\{\sign(z)\}$ for $z\neq0$ and $\Sign(0)=[-1,1]$. We show that
$H$ is Marchaud and that $f$ is a Lyapunov function: Assumption~\ref{fawzi-condn}
implies $\nabla f(x)^\top v\le-\eta\|\nabla f(x)\|_1$ for every $v\in H(x)$.
Hence the largest invariant set is contained in
$\mathcal X^\star=\{x:\nabla f(x)=0\}$. The convergence theorem of
\citet{yaji2020stochastic} yields $x_n\to\mathcal X^\star$ a.s.

\subsection{ Outline of Proof of Theorem 2.7}
The proof establishes explicit non-asymptotic convergence rates for the zeroth-order setting and proceeds in four steps.

\emph{Step 1 (Finite-horizon tracking error)}. We first establish a conditional mean-square bound for the fast-timescale tracking error $e_n^{(w)} := y_n^{(w)} - {A^{(w)}}^\top \nabla f(x_n)$. We observe that $\{e_n^{(w)} \}$ satisfies a linear stochastic approximation recursion, perturbed by a delay element and a bias term of $\mathcal{O}(\alpha_n)$ arising from the time-varying nature of $x_n$. Adapting standard stochastic approximation techniques and  incorporating the above non-trivialities, we derive the conditional bound $\mathbb{E}[\|e_{n+1}^{(w)}\|^2 \mid \mathcal{F}_n] \le (1-\beta_n) \mathbb{E}[\|e_n^{(w)}\|^2] + R_n + \mathbb{E}[D_n]$, where $R_n = \mathcal{O}\big(\frac{\beta_n^2}{\lambda_n^2} + \beta_n \lambda_n^2 + \alpha_n^2 + \beta_n^2 + \frac{\alpha_n^2}{\beta_n}\big)$, and the delay component is $D_n = \mathcal{O}\big(\tau_n^{(w)} \sum_{k=n-\tau_n^{(w)}}^{n-1} \alpha_k^2\big)$ (see Proposition \ref{propoenbound 1} for the detailed derivation). Using Fubini's Theorem in Lemma \ref{lem:delay-summable}, we establish that: $\mathbb{E}[\sum_{k=1}^{n} D_k] = \mathcal{O} \big(\mathbb{E}[\tau^2]\sum_{k=1}^{n} \alpha_k^2\big)$.

\emph{Step 2 (Tracking Rate)}. Next, by iteratively expanding the above recursion and applying the discrete finite-horizon estimate from Proposition \ref{Prop:Bigo}, we obtain in Corollary  \ref{mean square tracking raye}: $\mathbb{E}\big[\|e_{n+1}^{(w)}\|^2\big] \le \widetilde{C}_1\,(1+n)^{-\gamma}$, where $\widetilde{C}_1 > 0$ is a constant and  $\gamma = \min \{b-2p,\, 2p,\, 2a-b,\, 2a-2b\}$.

\emph{Step 3 (Ergodic gradient bound)}. We then leverage the standard $L$-smoothness descent lemma. By incorporating the properties of the signed update --- following the same approach detailed in Step 2 of Theorem \ref{thm:almost.sure.conv} --- we obtain the inequality:
\[
\begin{aligned}
\mathbb{E}[f(x_{n+1}) \mid \mathcal{F}_n]
\le{}&
f(x_n)
- \alpha_n \eta \|\nabla f(x_n)\|_1
+ 2\alpha_n |\mathcal{A}^c| \|e_n^{(w)}\|_1
+ \frac{L}{2}\alpha_n^2 K_5^2 .
\end{aligned}
\]
A key non-triviality in evaluating this descent is the presence of the two-timescale tracking error term $2\alpha_n |\mathcal{A}^c| \|e_n^{(w)}\|_1$. We control this term by transitioning from the $L_2$ norm bounds established in Step 2 to the expected $L_1$ norm using Jensen's inequality ($\mathbb{E}\|e_k^{(w)}\| \le \sqrt{\mathbb{E}\|e_k^{(w)}\|^2}$) and the norm equivalence ($\|v\|_1 \le \sqrt{m}\|v\|_2$).

\emph{Step 4 (Final Convergence Rate)}. Finally, by taking the total expectation, telescoping the descent inequality over $n$ iterations, and dividing by $S_n := \sum_{k=1}^n \alpha_k$, we obtain the ergodic gradient bound (Corollary \ref{cor:ergodic_constants}). Substituting the step-size given in Theorem \ref{thm:rate_optimal} we obtain the desired result.

\section{Numerical Simulation}
\label{sec:numericalsec}

We empirically evaluate whether \farsign{}'s non-buffered signed-directional updates translate into wall-clock gains in adversarial neural-network training. 
Our focus is not to benchmark large-scale image classification, but to provide a controlled comparison against Byzantine-resilient zeroth-order robust-aggregation baselines under the same query budget.

\paragraph{Experimental setup.}

Our testbed is MNIST classification \citep{lecun1998mnist} with 60,000 training and 10,000 test samples using a two-layer MLP 
($784 \to 100 \to 10$, $d=79{,}510$ parameters) in a parameter-server architecture. 
We use $N=51$ workers, of which $12$ are adversarial, corresponding to $24\%$ adversarial worker fraction. 
The data is split randomly across workers.

We evaluate \farsign{} against CYBER-0~\citep{neto2024communicationefficientbyzantineresilientfederatedzeroorder}, a Byzantine-resilient zeroth-order federated optimization method and our closest baseline. 
We combine CYBER-0 with buffered execution~\citep{yang2023buffered} and standard robust aggregation rules: Krum~\citep{blanchard2017machine}, Multi-Krum~\citep{blanchard2017machine}, RFA~\citep{pillutla2022robust}, Median~\citep{yin2018byzantine}, Trimmed Mean~\citep{yin2018byzantine}, and Bulyan~\citep{mhamdi2018hidden}. 
We consider four Byzantine attacks: Sign-Flip~\citep{damaskinos2018asynchronous}, Constant, Gaussian~\citep{blanchard2017machine}, and ALIE~\citep{baruch2019little}. 
No communication delays or stale gradient estimates are imposed: at each iteration, a worker is selected at random, computes an estimate at the current server iterate, and returns it to the server. 
All methods use mini-batches of size $64$, perturbation radius $\lambda=0.001$, and learning rates multiplied by a factor of $0.99$ every $100$ iterations. 
Each method is run for $1{,}280{,}000$ zeroth-order calls, and results are averaged over $5$ random seeds.

For the zeroth-order implementation of \farsign{}, we use identity observation matrices, $A^{(l)}=I_d$, so each selected worker estimates coordinate-wise directional derivatives directly. 
The worker samples $K=64$ scalar entries of the flattened network-parameter vector and computes two-point finite-difference estimates along the corresponding standard basis directions. 
For each perturbation $x_n \pm \lambda_n e_i$, the flat parameter vector is reshaped into the MLP's layer-wise weights and biases, and $\hat f(x_n \pm \lambda_n e_i)$ is evaluated as the mini-batch cross-entropy loss from a standard forward pass on the same mini-batch. 
These estimates update the fast-timescale auxiliary variable, after which the server immediately applies a sparse signed update supported on the selected coordinates. 
For \farsign{}, we initialize the two stepsizes as $(\alpha,\beta)=(0.1,0.2)$.

For the buffered CYBER-0 baselines, workers sample random directions from the unit sphere, compute two-point directional derivatives, and reconstruct dense gradients before robust aggregation. 
The workers are partitioned into $B=25$ fixed buffer slots, with two to three workers per slot. 
The server stores incoming estimates in their corresponding slots and updates only after every slot has at least one estimate, at which point it applies a standard robust aggregation and updates the model. 
The buffer size satisfies the BASGD requirement $B \geq 2Nf+1$ for tolerating $\lfloor Nf \rfloor = 12$ adversaries. 
Thus, the wall-clock comparison isolates the structural cost of dense gradient reconstruction, buffering, and robust aggregation, rather than any effect of artificial delays or stale gradients. 
For the CYBER-0 baselines, we use learning rate $\gamma=0.2$.

\paragraph{Results.}

\begin{figure}[t]
    \centering
    \makebox[\textwidth][c]{%
        \includegraphics[
            width=\textwidth]{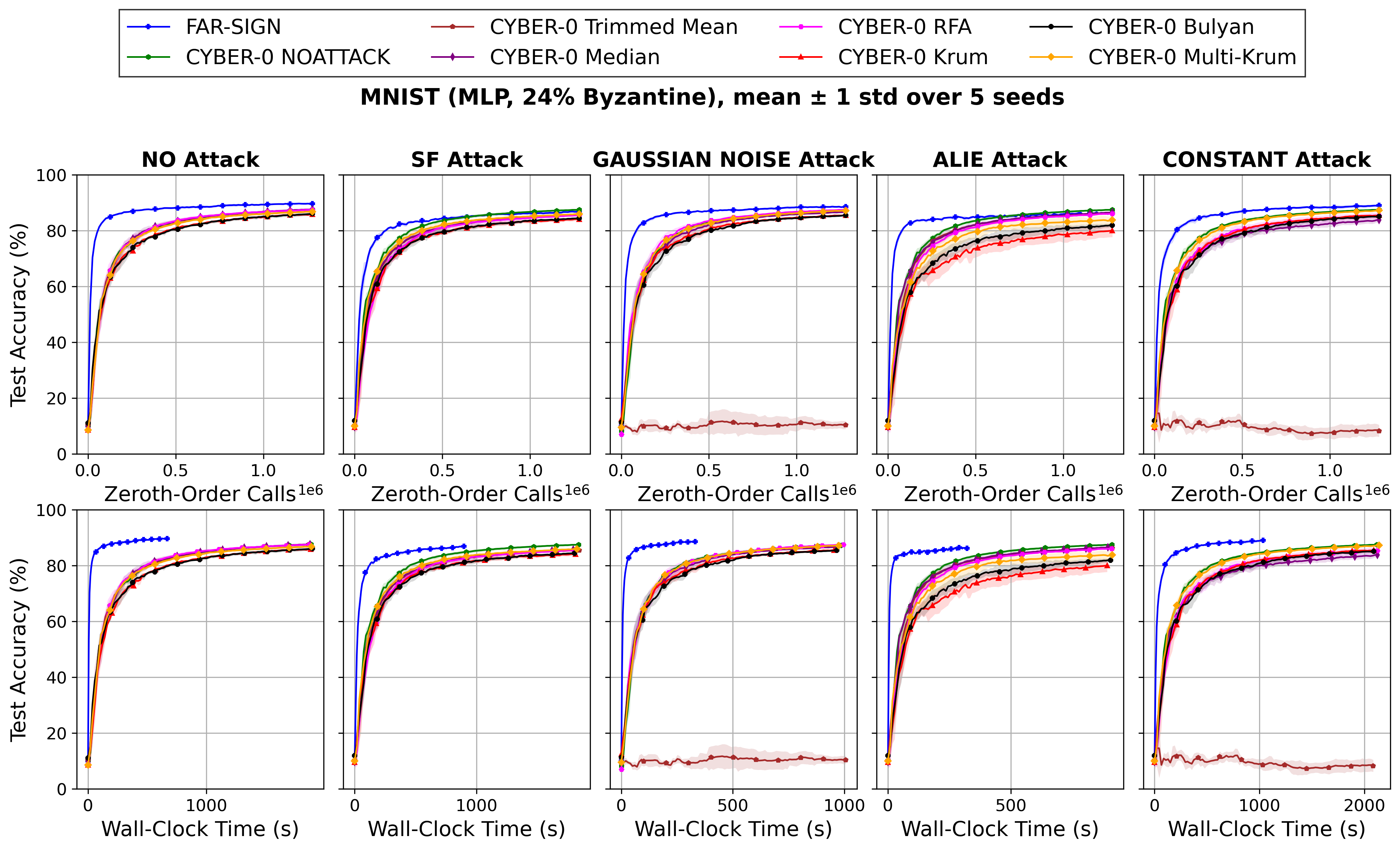}%
    }
    \vspace{0.6em}
    \caption{MNIST classification under no attack and four adversarial attacks. \farsign{} shows comparable zeroth-order sample efficiency, but substantially better wall-clock efficiency.}
    \label{fig:wall_clock_nn_gn_alie}
\end{figure}

Figure~\ref{fig:wall_clock_nn_gn_alie} compares FAR-SIGN against CYBER-0 combined with buffered robust aggregation. Each column corresponds to an independent cloud run (see Section~\ref{expsetup}), and all reported curves are averaged over five random seeds. For every seed, the MNIST dataset was randomly repartitioned across workers. The results show that FAR-SIGN consistently matches or outperforms the baselines in terms of accuracy versus the number of zeroth-order calls. However, because FAR-SIGN updates immediately upon receiving worker feedback and does not require server-side waiting or buffering, it achieves substantially better wall-clock performance.

For Table~\ref{tab:time_nn_24pct}, all experiments were conducted on the same local machine (see Section~\ref{expsetup}) to ensure a uniform comparison across algorithms and attack scenarios. The results again show that \farsign{} consistently improves wall-clock performance while preserving accuracy. 

Without Byzantine workers, \farsign{} reaches $85\%$ accuracy in $24$s, whereas the fastest CYBER-0 baseline requires $417$s, yielding a $17.4\times$ speedup. It also attains the highest maximum accuracy, $89.6\pm0.4\%$. The advantage persists under attacks: under ALIE and Gaussian attacks, \farsign{} reaches $85\%$ accuracy in $70$s and $35$s, respectively, while the fastest CYBER-0 baselines require $604$s and $446$s. Overall, \farsign{} achieves $6.2$--$35.2\times$ speedups across scenarios. 

These gains again stem from the algorithmic distinction highlighted earlier: \farsign{} updates immediately using sparse coordinate-wise operations on only $K=64$ selected coordinates, whereas CYBER-0-style baselines construct high-dimensional perturbations, reconstruct dense gradients in dimension $d=79{,}510$, and wait for buffered robust aggregation. 

We also observe that several baselines degrade substantially under stronger attacks. In particular, CYBER-0 Trimmed Mean collapses under Gaussian and Constant attacks, achieving only $10.4\pm1.4\%$ and $8.3\pm2.2\%$ accuracy, respectively. More broadly, several baselines fail to reach $85\%$ accuracy under ALIE, Constant, or Sign-Flip attacks, whereas \farsign{} consistently maintains $86.2$--$89.0\%$ accuracy under attacks and $89.6\pm0.4\%$ without attacks. The top row of Figure~\ref{fig:wall_clock_nn_gn_alie} shows comparable zeroth-order sample efficiency, while the bottom row highlights the substantial wall-clock advantage of \farsign{}.

\begin{table*}[t]
\centering
\scriptsize
\caption{Wall-clock time (seconds) to reach target accuracy (MLP, MNIST).
24\% Byzantine workers, 51 clients. Results averaged over 5 seeds.
Bold indicates best performance. ``--'' denotes failure to reach target accuracy and Max acc. denotes Max accuracy.}
\label{tab:time_nn_24pct}
\vspace*{\baselineskip}
\setlength{\tabcolsep}{1.8pt}
\renewcommand{\arraystretch}{1.0}
\resizebox{\textwidth}{!}{%
\begin{tabular}{l ccc ccc ccc ccc ccc}
\toprule
\textbf{Method}
& \multicolumn{15}{c}{\textbf{Attack}} \\
\cmidrule(lr){2-16}
& \multicolumn{3}{c}{No-Attack}
& \multicolumn{3}{c}{ALIE}
& \multicolumn{3}{c}{Gaussian}
& \multicolumn{3}{c}{Constant}
& \multicolumn{3}{c}{Sign-Flip} \\
\cmidrule(lr){2-4}
\cmidrule(lr){5-7}
\cmidrule(lr){8-10}
\cmidrule(lr){11-13}
\cmidrule(lr){14-16}
& 80\% & 85\% & {\tiny Max acc.}
& 80\% & 85\% & {\tiny Max acc.}
& 80\% & 85\% & {\tiny Max acc.}
& 80\% & 85\% & {\tiny Max acc.}
& 80\% & 85\% & {\tiny Max acc.} \\
\midrule
FAR-SIGN
& \textbf{11} & \textbf{24} & \textbf{89.6$\pm$0.4}
& \textbf{19} & \textbf{70} & 86.2$\pm$0.2
& \textbf{15} & \textbf{35} & \textbf{88.6$\pm$0.2}
& \textbf{24} & \textbf{51} & \textbf{89.0$\pm$0.4}
& \textbf{37} & \textbf{120} & \textbf{86.9$\pm$0.3} \\
RFA
& 223 & 417 & 87.5$\pm$0.3
& 285 & 604 & 86.0$\pm$0.2
& 205 & 446 & 87.5$\pm$0.3
& 312 & 839 & 85.4$\pm$1.2
& 262 & 697 & 85.5$\pm$0.4 \\
Multi-Krum
& 288 & 594 & 86.9$\pm$0.2
& 268 & 634 & 83.8$\pm$1.6
& 236 & 599 & 87.2$\pm$0.3
& 249 & 480 & 87.2$\pm$0.3
& 233 & 566 & 86.0$\pm$0.4 \\
Median
& 247 & 467 & 87.6$\pm$0.1
& 280 & 608 & \textbf{86.6$\pm$0.3}
& 293 & 542 & 86.8$\pm$0.5
& 350 & -- & 83.8$\pm$0.9
& 280 & 723 & 85.7$\pm$0.3 \\
Trimmed Mean
& 260 & 511 & 87.5$\pm$0.3
& 266 & 604 & 86.2$\pm$0.2
& -- & -- & 10.4$\pm$1.4
& -- & -- & 8.3$\pm$2.2
& 257 & 649 & 85.5$\pm$0.3 \\
Krum
& 314 & 803 & 85.8$\pm$0.3
& 643 & -- & 80.0$\pm$2.3
& 303 & 867 & 85.5$\pm$0.2
& 331 & -- & 85.5$\pm$0.5
& 372 & -- & 84.1$\pm$0.6 \\
Bulyan
& 374 & 772 & 85.8$\pm$0.5
& 408 & -- & 81.9$\pm$0.7
& 321 & 747 & 85.5$\pm$0.3
& 420 & 885 & 85.1$\pm$0.4
& 372 & 872 & 84.4$\pm$0.5 \\
\bottomrule
\end{tabular}%
}
\end{table*}

\section{Conclusion and Future Work}
\label{sec:5}
We proposed \farsign{}, a signed directional projection framework for adversary-resilient optimization with both first- and zeroth-order feedback. 
\farsign{} updates upon individual worker estimate arrivals, requiring neither server-side reference data nor buffering. 
Using a two-timescale stochastic recursive inclusion framework, we proved boundedness and almost-sure convergence to stationary points for smooth, potentially nonconvex objectives, and established rates of $O(n^{-1/4+\epsilon})$ and $O(n^{-1/6+\epsilon})$ in the first- and zeroth-order regimes, respectively. 
Our MNIST experiments with $24\%$ Byzantine workers show that this non-buffered structure yields $6.2$--$35.2\times$ wall-clock speedups over CYBER-0 robust-aggregation baselines while maintaining comparable or better accuracy.

The current work assumes homogeneous worker objectives, with all workers evaluating gradients of the same function $f$. Extending the theory to heterogeneous objectives and evaluating \farsign{} on larger architectures and federated benchmarks are important directions for future work.

\bibliographystyle{plainnat}
\bibliography{references}

\newpage

\appendix

\section{Math Preliminary}
\begin{theorem}[ {\cite{robbins1971convergence}}]
\label{Robbins Siegmund}
Let $\{X_n\}$, $\{Y_n\}$, and $\{Z_n\}$ be nonnegative random processes adapted to
a filtration $\{\mathcal{F}_n\}$. Suppose that
\[
    \mathbb{E}[X_{n+1} \mid \mathcal{F}_n]
    \le
    (1 + a_n) X_n - Y_n + Z_n
    \qquad \text{a.s.},
\]
where $\{a_n\}$ is a deterministic sequence satisfying
\[
    \sum_{n \ge 1} a_n < \infty,
\]
and assume further that
\[
    \sum_{n \ge 1} Z_n < \infty
    \qquad \text{almost surely}.
\]
Then $\{X_n\}$ converges almost surely to a finite nonnegative random variable,
and
\[
    \sum_{n \ge 1} Y_n < \infty
    \qquad \text{almost surely}.
\]
\end{theorem}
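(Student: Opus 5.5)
The plan is the classical normalization-plus-localization argument. I assume, as in the original statement of \citet{robbins1971convergence}, that $a_n\ge 0$; then the deterministic products
\[
P_n:=\prod_{k=0}^{n-1}(1+a_k)
\]
are nondecreasing, satisfy $P_n\ge 1$, and converge to a finite limit $P_\infty$ because $\sum_n a_n<\infty$.

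\emph{Step 1 (normalization).} Set $X_n':=X_n/P_n$, $Y_n':=Y_n/P_{n+1}$ and $Z_n':=Z_n/P_{n+1}$. Dividing the hypothesis by the deterministic constant $P_{n+1}=(1+a_n)P_n$ gives
\[
\mathbb{E}[X_{n+1}'\mid\mathcal F_n]\le X_n'-Y_n'+Z_n'.
\]
All three primed processes are nonnegative and adapted. Since $P_{n+1}\ge1$, we have $\sum_n Z_n'\le\sum_n Z_n<\infty$ almost surely.

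\emph{Step 2 (supermartingale and localization).} Define
\[
U_n:=X_n'+\sum_{k<n}Y_k'-\sum_{k<n}Z_k'.
\]
Step 1 shows directly that $(U_n)$ is a supermartingale. It need not be bounded below, because the $Z'$ terms enter with a negative sign. This is the main obstacle, and I would handle it by stopping.

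For $c>0$, let $T_c:=\inf\{n:\sum_{k\le n}Z_k'>c\}$. This is a stopping time, since $Z_n'$ is $\mathcal F_n$-measurable. On the event $\{n\le T_c\}$ the partial sum $\sum_{k<n}Z_k'$ is at most $c$, and $X'$, $Y'$ are nonnegative. Hence $U_{n\wedge T_c}+c\ge0$. The stopped process $(U_{n\wedge T_c}+c)_n$ is therefore a nonnegative supermartingale, and by Doob's convergence theorem it converges almost surely to a finite limit.

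On the event $\{T_c=\infty\}$ the stopped process coincides with $U_n$, so $U_n$ converges there. Since $\{\sum_n Z_n'<\infty\}=\bigcup_{c\in\mathbb N}\{T_c=\infty\}$ and this event has probability one, $U_n$ converges to a finite limit almost surely.

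\emph{Step 3 (unwinding).} Because $\sum_{k<n}Z_k'$ converges almost surely, the sum
\[
X_n'+\sum_{k<n}Y_k'
\]
also converges almost surely. Both pieces are nonnegative, and the partial sums of $Y'$ are nondecreasing. A convergent sequence is bounded, so these partial sums are bounded, which gives $\sum_n Y_n'<\infty$. Subtracting, $X_n'$ converges to a finite nonnegative limit.

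Finally I transfer back to the original processes. We have $X_n=P_nX_n'$ with $P_n\to P_\infty\in[1,\infty)$, so $X_n$ converges almost surely to a finite nonnegative random variable. Likewise $Y_n=P_{n+1}Y_n'\le P_\infty Y_n'$, so $\sum_n Y_n<\infty$ almost surely.

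The only delicate point is Step 2: the naive supermartingale $U_n$ has no lower bound. The predictable stopping times $T_c$, together with the almost-sure summability of $Z_n$, resolve this. Everything else is bookkeeping with the deterministic factors $P_n$.
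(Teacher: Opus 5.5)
The paper does not prove this statement. It quotes it from Robbins and Siegmund and uses it as a black box, with $a_n=K_2\beta_n^2$ for the tracking error $\|e_n^{(w)}\|^2$ and $a_n=0$ for $f(x_n)-\inf f$, so there is no in-paper argument to compare with.

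Your argument is the classical proof of that result, and it is correct. Dividing by the deterministic products $P_n$ reduces to the case $a_n\equiv 0$. Stopping at the first time the partial sums of $Z'$ exceed $c$ gives $U_{n\wedge T_c}\ge -c$, precisely because $U_n$ only involves $Z_k'$ with $k<n$. This localization is what the paper's applications need: there the perturbation terms (the delay term $D_n$, and $2\alpha_n|\mathcal{A}^c|\,\|e_n^{(w)}\|_1$) are random and only almost surely summable, not integrable.

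Three small points to tidy up:

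(i) The hypotheses only involve conditional expectations of nonnegative variables, so $X_n$, and hence $U_n$, need not be integrable. Your stopped process is therefore a nonnegative supermartingale only in the generalized sense. To apply Doob's theorem, multiply by the indicator of the $\mathcal{F}_0$-event $\{X_0\le k\}$. By induction, $\mathbb{E}[\mathbf{1}_{\{X_0\le k\}}(U_{n\wedge T_c}+c)]\le k+c$ for all $n$. Letting $k\to\infty$ then recovers almost-sure convergence.

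(ii) $T_c$ is an ordinary stopping time, not a predictable one. What you actually use is $\{T_c\ge n+1\}=\{\sum_{k\le n}Z_k'\le c\}\in\mathcal{F}_n$.

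(iii) Your assumption $a_n\ge 0$ is not written in the paper's statement, but it is the intended reading. It is part of the original theorem, and it holds in both places where the paper invokes the result.
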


\begin{proposition}
\label{prop:Sn_bound}
Let $\beta_n = C_b(n+1)^{-b}$ with $b \in (0,1)$ and let
$P_n = C_p(1+n)^{-p}$ for some constants $C>0$ and $p>0$. Define
\[
    S_n
    :=
    \sum_{i=0}^{n}
    \Bigg(
        \prod_{k=i+1}^{n} (1-\beta_k)
    \Bigg) P_i .
\]
Then
\begin{equation}
\label{eq:Sn_final}
    S_n
    \le
    \frac{3 C_p\,2^p (n+1)^b}{C_b (n+2)^p}
    + o\!\left(\frac{1}{(n+1)^r}\right),
    \qquad \text{for every } r>0 .
\end{equation}
\label{Prop:Bigo}
\end{proposition}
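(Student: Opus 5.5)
We bound $S_n$ by splitting the sum at the midpoint $m_n:=\lfloor n/2\rfloor$ into a \emph{tail} part ($m_n<i\le n$), which produces the polynomial main term, and a \emph{head} part ($0\le i\le m_n$), which we show is super-polynomially small. Throughout we use $1-u\le e^{-u}$, which gives
\[
\prod_{k=i+1}^{n}(1-\beta_k)\le \exp\Big(-\sum_{k=i+1}^{n}\beta_k\Big),
\]
and we assume $\beta_k\le 1$ (true for $k$ large, or by $C_b\le 1$).

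\emph{Head part.} For $i\le m_n$ the product contains at least the factors $k=m_n+1,\dots,n$. Each satisfies $\beta_k\ge C_b(n+1)^{-b}$, and there are at least $n/2$ of them. Hence the product is at most $\exp(-\tfrac{C_b}{2}\, n(n+1)^{-b})\le \exp(-c\,n^{1-b})$ for some $c>0$. Since $P_i\le C_p$, the whole head part is bounded by $C_p(n+1)\exp(-c n^{1-b})$. Because $b<1$, this is $o((n+1)^{-r})$ for every $r>0$.

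\emph{Tail part.} For $m_n<i\le n$ we have $i+1\ge (n+2)/2$, so $P_i=C_p(i+1)^{-p}\le C_p 2^p (n+2)^{-p}$. Pulling this factor out, it remains to bound
\[
T_n:=\sum_{i=m_n+1}^{n}\prod_{k=i+1}^{n}(1-\beta_k)
\]
by $3(n+1)^b/C_b$. The standard device is the identity
\[
\beta_i\prod_{k=i+1}^{n}(1-\beta_k)=\prod_{k=i+1}^{n}(1-\beta_k)-\prod_{k=i}^{n}(1-\beta_k),
\]
which telescopes to give $\sum_{i}\beta_i\prod_{k>i}(1-\beta_k)\le 1$. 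In the tail range $\beta_i\ge C_b(n+1)^{-b}$, so $T_n\le (n+1)^b/C_b$. This already beats the claimed constant $3$. The slack absorbs the boundary term $i=n$ (an empty product equal to $1$), the index shift between $(n+1)$ and $(n+2)$, and any small-$n$ cases where $\beta_k>1$; those cases can also be folded into the $o(\cdot)$ term, since they involve finitely many $n$.

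Combining the two parts gives
\[
S_n\le \frac{C_p 2^p (n+1)^b}{C_b (n+2)^p}+o\big((n+1)^{-r}\big),
\]
which implies the stated bound with constant $3$.

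The main obstacle is bookkeeping rather than substance. First, the head estimate must be genuinely uniform in $i$, with the count of factors handled carefully for small $n$. Second, the telescoping step needs $\beta_k\le1$ so that the products are nonnegative and decreasing. If the telescoping route is awkward with the indices, the fallback is to compare $\sum_{k=i+1}^n\beta_k$ with the integral $\int_{i+1}^{n+1}C_b x^{-b}\,dx$ and sum the resulting geometric-type series directly. That route yields a constant of the same order, and the factor $3$ leaves enough room for it.
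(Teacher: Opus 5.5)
Your argument is correct and follows the paper's skeleton. You split at $\lfloor n/2\rfloor$, use $P_i\le C_p2^p(n+2)^{-p}$ on the recent indices, and show that the old indices contribute a super-polynomially small amount.

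The genuine difference is how you bound $\sum_{i>\lfloor n/2\rfloor}\prod_{k=i+1}^{n}(1-\beta_k)$. The paper bounds each product by $\exp\bigl(-C_b(n-i)(n+1)^{-b}\bigr)$ and sums the geometric series to get $1/(1-e^{-x})$ with $x=C_b(n+1)^{-b}$. It then invokes $1-e^{-x}\ge x/3$, which is where both the constant $3$ and the caveat ``valid for large $n$'' come from. You instead use the exact telescoping identity $\beta_i\prod_{k=i+1}^{n}(1-\beta_k)=\prod_{k=i+1}^{n}(1-\beta_k)-\prod_{k=i}^{n}(1-\beta_k)$ together with $\beta_i\ge C_b(n+1)^{-b}$. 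This gives the sharper constant $1$ and holds for every $n$ such that $\beta_k\le 1$ on the tail range.

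The paper's exponential estimate has the mild advantage of treating both halves with one bound; on the old half it again sums a geometric series, where you simply count at least $n/2$ factors. Its step $\prod(1-\beta_k)\le e^{-\sum\beta_k}$ silently needs the same nonnegativity $\beta_k\le 1$ that you state explicitly. This holds in the paper's application, since $C_b=1/(mN)\le 1$.

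Two minor points. First, the ``slack'' you set aside is not needed. The telescoping sum already contains the $i=n$ term, and the factors $(n+1)^b$ and $(n+2)^{-p}$ combine exactly into the stated main term. Second, if $C_b>1$, the factors with $\beta_k>1$ enter the head products for \emph{every} $n$, not just finitely many $n$. They involve only finitely many $k$, so they contribute a fixed multiplicative constant, and the head is still $o\bigl((n+1)^{-r}\bigr)$.
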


\begin{remark}
The bound in Proposition~\ref{prop:Sn_bound} admits the following interpretation.
The leading term is of order
\[
    \mathcal{O}\!\left((n+1)^{-(p-b)}\right),
\]
while the remainder term is super-polynomially small, i.e., it decays faster
than any inverse polynomial. Consequently, the estimate separates the dominant
polynomial decay from an exponentially small tail.
\end{remark}

\begin{proof}
We start by noting that
 
 \[
\begin{aligned}
\sum_{i=0}^{n}
\Bigg(
\prod_{k=i+1}^{n} (1-\beta_k)
\Bigg) P_i
\le
\sum_{i=0}^{n}
\exp\!\left(
- \sum_{k=i+1}^{n} \beta_k
\right)
\frac{C}{(1+i)^p}.
\end{aligned}
\]
With \(\beta_k=C_b(k+1)^{-b}\) and \(0<b<1\),
\[
\sum_{k=i+1}^n \beta_k
= C_b\sum_{k=i+1}^n (k+1)^{-b}
\ge C_b\frac{n-i}{(n+1)^b}.
\]
Hence
\[
\exp\!\Big(-\sum_{k=i+1}^n\beta_k\Big)
\le
\exp\!\Big(-C_b\frac{n-i}{(n+1)^b}\Big),
\]
and, with the change of variable \(m=n-i\),
\[
\begin{aligned}
S_n
&\le
\sum_{m=0}^n
\exp\!\Big(-C_b\frac{m}{(n+1)^b}\Big)
\frac{C}{(n-m+1)^p}
\\
&=
\sum_{m=0}^{\lfloor n/2\rfloor}
\exp\!\Big(-C_b\frac{m}{(n+1)^b}\Big)
\frac{C}{(n-m+1)^p}
\\
&\quad+
\sum_{m=\lfloor n/2\rfloor+1}^{n}
\exp\!\Big(-C_b\frac{m}{(n+1)^b}\Big)
\frac{C}{(n-m+1)^p}
\\
&=: \mathbf T_1 + \mathbf T_2.
\end{aligned}
\]
For \(\mathbf T_1\) we have \(n-m+1\ge (n+2)/2\), so
\[
\begin{aligned}
\mathbf T_1
&\le
\frac{C\,2^p}{(n+2)^p}
\sum_{m=0}^{\lfloor n/2\rfloor}
\exp\!\Big(-C_b\frac{m}{(n+1)^b}\Big)
\\
&\le
\frac{C\,2^p}{(n+2)^p}
\frac{1}{1-\exp\!\big(-C_b/(n+1)^b\big)}. 
\end{aligned}
\]
Using \(1-e^{-x}\ge x/3\) for \(0<x<1\) with \(x=C_b/(n+1)^b\) (valid for large \(n\)), we get
\[
\mathbf T_1
\le
\frac{C\,2^p}{(n+2)^p}\cdot\frac{3(n+1)^b}{C_b}
\le
\frac{3\cdot 2^p}{C_b}\,C\,(n+1)^{b-p}.
\]
For \(\mathbf{T}_2\) we have
\[
\begin{aligned}
&\sum_{m=\lfloor n/2 \rfloor + 1}^{n}
\exp\!\Big(-C_b\frac{m}{(n+1)^b}\Big)
\frac{C}{(n-m+1)^p}
\\
\le{}&
C \sum_{m=\lfloor n/2 \rfloor + 1}^{n}
\exp\!\Big(-C_b\frac{m}{(n+1)^b}\Big)
\\
={}&
C\,
\frac{
    \exp\!\Big(-C_b\frac{\lfloor n/2 \rfloor + 1}{(n+1)^b}\Big)
    \Big(1-\exp\!\big(-C_b\frac{n-\lfloor n/2 \rfloor}{(n+1)^b}\big)\Big)
}{
    1-\exp\!\big(-C_b/(n+1)^b\big)
}
\\
\le{}&
\frac{C\,\exp\!\big(-C_b\frac{\tfrac{n}{2}+1}{(n+1)^b}\big)}
{1-\exp\!\big(-C_b/(n+1)^b\big)}
\\
\le{}&
\frac{3C\,(n+1)^b}{C_b}\,
\exp\!\Big(-C_b\frac{\tfrac{n}{2}+1}{(n+1)^b}\Big),
\end{aligned}
\]
where the last inequality uses \(1-e^{-x}\ge x/3\) applied with
\(x=C_b/(n+1)^b\) (valid for large \(n\)).

Define
\[
A_n
:=
(n+1)^b
\exp\!\Big(-C_b\frac{\tfrac{n}{2}+1}{(n+1)^b}\Big).
\]
To show \(\mathbf{T}_2=o((n+1)^{-r})\) for any \(r>0\), consider
\[
\ln\!\big((n+1)^r A_n\big)
=(r+b)\ln(n+1)-C_b\frac{\tfrac{n}{2}+1}{(n+1)^b}.
\]
Since \(0<b<1\), the second term grows on the order of \(n^{1-b}\) (with factor
\(C_b\)) and dominates the logarithmic term. Hence
\(\ln\!\big((n+1)^r A_n\big)\to -\infty\), so
\((n+1)^r A_n\to 0\). Therefore
\[
\mathbf{T}_2 = o\!\big((n+1)^{-r}\big)\qquad\text{for every }r>0,
\]
and the claim is proved.

\end{proof}

\section{Proof of Almost Sure Convergence (Theorem \ref{thm:almost.sure.conv})}

Before proceeding with the proof of Theorem~\ref{thm:almost.sure.conv}, it is convenient to rewrite the algorithmic updates in the framework of a two--time--scale stochastic approximation scheme. This representation will allow us to clearly separate the dynamics of the local variables from that of the global iterate.

In particular, the local variable corresponding to each worker $w$ evolves on the slower time scale. Its update can be written as
\begin{equation}
y_{n+1}^{(w)} 
= y_n^{(w)} + \beta_n\,\widetilde{h}^{(w)}(n),
\label{eq:slow time scaleA}
\end{equation}
where $\{\beta_n\}$ denotes the associated step--size sequence.

On the other hand, the global iterate $x_n$ evolves on the faster time scale and is updated according to
\begin{equation}
x_{n+1} 
= x_n + \alpha_n\,\widetilde{g}(n),
\label{eq:fast time scaleB}
\end{equation}
where $\{\alpha_n\}$ is the corresponding step--size sequence.

The quantities $\widetilde{h}^{(w)}(n)$ and $\widetilde{g}(n)$ capture the effective increments in the slow and fast time scales, respectively. Using Algorithm~\ref{alg:R-SIGN_Pseudocode}, we now make these expressions explicit under the different settings considered in this work.

\paragraph{Asynchronous setting.}
In the asynchronous regime, at each iteration only one worker $l \in [N]$ and one coordinate $i \in [m]$ are selected. Consequently, the update affects only a single component of a single worker. This leads to
\begin{equation}
\widetilde{h}^{(w)}(n)(j)
= mN \bigl[\,Y_{n+1}^{(w)}(j) - y_n^{(w)}(j)\,\bigr]
\mathbf{1}_{\{j=i\}}
\mathbf{1}_{\{w=l\}},
\label{AFOhwnr}
\end{equation}
where the indicator functions ensure that only the selected coordinate and worker are updated.

The corresponding fast time scale update is given by
\begin{equation}
\widetilde{g}(n)
= Nm\, a_i^{(l)}\,\mathrm{sign}\!\bigl(-y_n^{(l)}(i)\bigr),
\label{asynslowr}
\end{equation}
where $l$ and $i$ denote the randomly selected worker and coordinate, respectively.

\paragraph{Oracle models.}
We now specify how the quantities $Y_{n+1}^{(w)}(j)$ are obtained under different oracle models.

\textbf{First--order case.}
In the first--order setting, the oracle provides (possibly noisy) gradient information, and we have
\begin{equation}
Y_{n+1}^{(w)}(j)
= {a_j^{(w)}}^\top \widetilde{\nabla} f(x_{n- \tau_n^{(w)}}).
\label{AFOoravr}
\end{equation}
Note that the delayed gradient satisfies Assumption \ref{noise-first order}.

\textbf{Zeroth--order case.}
In the zeroth--order setting, gradient information is approximated using finite differences. Specifically,
\begin{equation}
Y_{n+1}^{(w)}(j)
=
\frac{\widehat{f}(x_{n-\tau_n^{(w)}} + \lambda_n a_j^{(w)})
      - \widehat{f}(x_{n-\tau_n^{(w)}} - \lambda_n a_j^{(w)})}
     {2\lambda_n}.
\label{AZOSJDr}
\end{equation}

The oracle evaluations are corrupted by additive noise. In particular,
\[
\widehat{f}(x_{n-\tau_n^{(w)}} + \lambda_n a_i^{(w)})
= f(x_{n-\tau_n^{(w)}} + \lambda_n a_i^{(w)}) + \zeta_{n+1}^{(1)},
\]
\[
\widehat{f}(x_{n-\tau_n^{(w)}} - \lambda_n a_i^{(w)})
= f(x_{n-\tau_n^{(w)}} - \lambda_n a_i^{(w)}) + \zeta_{n+1}^{(2)}.
\]

Here, $\{\zeta_n^{(1)}\}$ and $\{\zeta_n^{(2)}\}$ are martingale difference sequences with respect to the filtration
\[
\mathcal{F}_n := \sigma\!\bigl(x_k, y_k^{(l)}, \zeta_k^{(1)}, \zeta_k^{(2)}, \tau_k^{(l)} \,:\, k \le n,\, l \in [N] \bigr).
\]
Moreover, there exists a constant $\mathrm{K} > 0$ such that
\[
\mathbb{E}\!\left[\|\zeta_{n+1}^{(i)}\|^2 \mid \mathcal{F}_n\right]
\le \mathrm{K},
\qquad i \in \{1,2\},\; n \ge 0.
\]

\subsection{Properties of Drift Term}

\subsubsection{Properties of Drift Term for First-Order Setting}

\begin{lemma}
For any honest worker $w \in [N]$, the following statements hold.

\begin{enumerate}
\item[(a)] The conditional expectation of $\widetilde{h}^{(w)}(n)$ satisfies
\begin{equation*}
\mathbb{E}\!\left[\widetilde{h}^{(w)}(n) \,\big|\, \mathcal{F}_n\right] 
= {A^{(w)}}^\top \nabla f(x_{n-\tau_n^{(w)}}) - y_n^{(w)}.
\end{equation*}

\item[(b)] The second moment admits the bound
\begin{equation*}
\mathbb{E}\!\left[\norm{\widetilde{h}^{(w)}(n)}^2 \,\big|\, \mathcal{F}_n\right]
\;\leq\; 
2 mN \, \norm{ {A^{(w)}}^\top\nabla f(x_{n-\tau_n^{(w)}}) - y_n^{(w)} }^2 
+ 2m N \Bar{A}^2 \sigma^2.
\end{equation*}
\end{enumerate}

\label{lem:approximated gradientf}
\end{lemma}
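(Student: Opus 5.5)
The plan is a direct computation: condition on $\mathcal{F}_n$ and average over the uniformly random pair $(l,i)\in[N]\times[m]$, then over the fresh oracle noise. By the modeling assumption, given $\mathcal{F}_n$ the selected pair is independent of the oracle noise, and each pair has probability $1/(mN)$. We also use the fact that $\tau_n^{(w)}$ and hence $x_{n-\tau_n^{(w)}}$ are $\mathcal{F}_n$-measurable.

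\textbf{Part (a).} Fix an honest $w$ and a coordinate $j$. By \eqref{AFOhwnr}, $\widetilde h^{(w)}(n)(j)$ is nonzero only on the event $\{l=w,\ i=j\}$, which has conditional probability $1/(mN)$. On that event the value is $mN\bigl[Y^{(w)}_{n+1}(j)-y_n^{(w)}(j)\bigr]$, and by independence we may take the conditional expectation of $Y$ separately. The factor $mN$ cancels the selection probability, so
\[
\mathbb{E}\bigl[\widetilde h^{(w)}(n)(j)\mid\mathcal{F}_n\bigr]=\mathbb{E}\bigl[Y^{(w)}_{n+1}(j)\mid\mathcal{F}_n\bigr]-y_n^{(w)}(j).
\]
By \eqref{AFOoravr} and the unbiasedness in Assumption~\ref{noise-first order}, the right-hand side equals ${a_j^{(w)}}^\top\nabla f(x_{n-\tau_n^{(w)}})-y_n^{(w)}(j)$. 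Stacking over $j$ gives (a).

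\textbf{Part (b).} Only one coordinate is nonzero at a time, so
\[
\|\widetilde h^{(w)}(n)\|^2=(mN)^2\bigl(Y^{(w)}_{n+1}(i)-y_n^{(w)}(i)\bigr)^2\,\mathbf 1_{\{l=w\}}.
\]
Taking conditional expectations over the selection gives
\[
\mathbb{E}\bigl[\|\widetilde h^{(w)}(n)\|^2\mid\mathcal{F}_n\bigr]=mN\sum_j\mathbb{E}\bigl[(Y(j)-y_n(j))^2\mid\mathcal{F}_n\bigr].
\]
Next write
\[
Y(j)-y_n(j)=\bigl[{a_j^{(w)}}^\top\nabla f(x_{n-\tau_n^{(w)}})-y_n(j)\bigr]+{a_j^{(w)}}^\top\bigl(\widetilde\nabla f-\nabla f\bigr)(x_{n-\tau_n^{(w)}}).
\]
Apply $(u+v)^2\le 2u^2+2v^2$. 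Summing the first square over $j$ gives $\|{A^{(w)}}^\top\nabla f(x_{n-\tau_n^{(w)}})-y_n^{(w)}\|^2$. For the second, use Cauchy--Schwarz, ${a_j^{(w)}}^\top v\le\|a_j^{(w)}\|\,\|v\|$, then the variance bound $\sigma^2$. This gives $\sum_j\|a_j^{(w)}\|^2\sigma^2$, which is at most $\bar A^2\sigma^2$ provided $\bar A^2$ denotes a bound such as $\max_w\|A^{(w)}\|_F^2$. Multiplying by $mN$ yields the stated bound.

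\textbf{Main obstacle.} The only delicate point is the conditioning bookkeeping. One must check that $y_n^{(w)}$ and the stale point $x_{n-\tau_n^{(w)}}$ are $\mathcal{F}_n$-measurable, so they pass through the conditional expectation. One must also check that the noise bound holds conditionally on $\mathcal{F}_n$ jointly with the independent selection event, which is what makes it legitimate to factor the indicator probability out. The remaining work is fixing the definition of $\bar A$ consistently with how the constant is used later.
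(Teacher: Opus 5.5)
Your proposal is correct and matches the paper's proof. Like the paper, you factor out the $1/(mN)$ selection probability to reduce to $mN\,\mathbb{E}[\|Y^{(w)}_{n+1}-y_n^{(w)}\|^2\mid\mathcal{F}_n]$, split off the noise with $(u+v)^2\le 2u^2+2v^2$, and bound the noise term by $\bar A^2\sigma^2$. One minor difference: the paper implicitly uses $\|{A^{(w)}}^\top v\|\le\|A^{(w)}\|\,\|v\|$, so an operator-norm bound $\bar A$ suffices, whereas your column-wise Cauchy--Schwarz needs $\bar A$ to bound the Frobenius norm.
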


\begin{proof}
    The proof of (a) is straightforward under the assumption \ref{noise-first order}.

    \emph{proof of (b)}
   \begin{equation}
\begin{aligned}
\mathbb{E}\!\left[
    \big\|\widetilde{h}^{(w)}(n)\big\|^2
    \,\big|\, \mathcal{F}_n
\right]
&= mN \,
\mathbb{E}\!\left[
    \big\|Y_{n+1}^{(w)} - y_n^{(w)}\big\|^2
    \,\big|\, \mathcal{F}_n
\right].
\end{aligned}
\label{eq:afo_second_moment_step1}
\end{equation}

Next, we add and subtract \({A^{(w)}}^\top \nabla f(x_n)\) inside the
norm and apply the inequality \(\|u+v\|^2 \le 2\|u\|^2 + 2\|v\|^2\), which
yields
\begin{equation}
\begin{aligned}
\mathbb{E}\!\left[
    \big\|Y_{n+1}^{(w)} - y_n^{(w)}\big\|^2
    \,\big|\, \mathcal{F}_n
\right]
&\le
2 \big\|{A^{(w)}}^\top \nabla f(x_{n- \tau_n^{(w)}}) - y_n^{(w)}\big\|^2 \\
&\quad
+ 2\,
\mathbb{E}\!\left[
    \big\|
        {A^{(w)}}^\top \widetilde{\nabla} f(x_{n- \tau_n^{(w)}})
        - {A^{(w)}}^\top \nabla f(x_{n- \tau_n^{(w)}})
    \big\|^2
    \,\big|\, \mathcal{F}_n
\right].
\end{aligned}
\label{eq:afo_second_moment_step2}
\end{equation}

In view of Assumption \ref{noise-first order} we obtain
\[
\mathbb{E}\!\left[
    \big\|\widetilde{h}^{(w)}(n)\big\|^2
    \,\big|\, \mathcal{F}_n
\right]
\le
2 mN
\big\|{A^{(w)}}^\top \nabla f(x_n) - y_n^{(w)}\big\|^2
+ 2 mN \bar{A}^2 \sigma^2,
\]
which establishes the desired bound and completes the proof.
\end{proof}

\subsubsection{Properties of Drift Term for Zeroth-Order Setting}

\begin{lemma}
For any honest worker $w \in [N]$, the following statements hold.

\begin{enumerate}
\item[(a)] The conditional expectation of $\widetilde{h}^{(w)}(n)$ satisfies
\begin{equation*}
\mathbb{E}\!\left[\widetilde{h}^{(w)}(n) \,\big|\, \mathcal{F}_n\right] 
= {A^{(w)}}^\top \nabla f(x_{n-\tau_n^{(w)}}) - y_n^{(w)} + \mathrm{B}(n),
\end{equation*}
where the bias term $\mathrm{B}(n)$ obeys the bound
\begin{equation*}
\norm{\mathrm{B}(n)}_\infty \leq L \lambda_n \bar{A}^2.
\end{equation*}

\item[(b)] The second moment admits the bound
\begin{equation*}
\mathbb{E}\!\left[\norm{\widetilde{h}^{(w)}(n)}^2 \,\big|\, \mathcal{F}_n\right]
\;\leq\; 
2 mN \, \norm{ {A^{(w)}}^\top\nabla f(x_{n-\tau_n^{(w)}}) - y_n^{(w)} }^2 
+ \frac{K m}{\lambda_n^2}.
\end{equation*}
\end{enumerate}

\label{lem:approximated gradient}
\end{lemma}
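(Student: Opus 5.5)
We prove Lemma~\ref{lem:approximated gradient} by a direct computation on the single nonzero coordinate of $\widetilde h^{(w)}(n)$, mirroring the proof of Lemma~\ref{lem:approximated gradientf}.

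\emph{Reduction.} By \eqref{AFOhwnr}, conditional on $\mathcal F_n$, worker $w$ is selected with probability $1/N$ and coordinate $j$ with probability $1/m$. The selection is independent of the fresh noises $\zeta^{(1)}_{n+1},\zeta^{(2)}_{n+1}$, and the staleness $\tau_n^{(w)}$ is $\mathcal F_n$-measurable. Hence
\[
\mathbb E[\widetilde h^{(w)}(n)(j)\mid\mathcal F_n]=\mathbb E[Y_{n+1}^{(w)}(j)\mid\mathcal F_n]-y_n^{(w)}(j).
\]
Write $z:=x_{n-\tau_n^{(w)}}$ and $a:=a_j^{(w)}$. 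By Assumption~\ref{noise-assump}, the noises have zero conditional mean, so
\[
\mathbb E[Y_{n+1}^{(w)}(j)\mid\mathcal F_n]=\frac{f(z+\lambda_n a)-f(z-\lambda_n a)}{2\lambda_n}.
\]

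\emph{Bias (part (a)).} Define $\mathrm B(n)(j)$ as this symmetric difference quotient minus $a^\top\nabla f(z)$. Using the integral form of Taylor's theorem,
\[
f(z\pm\lambda_n a)=f(z)\pm\lambda_n a^\top\nabla f(z)+\int_0^1\bigl(\nabla f(z\pm s\lambda_n a)-\nabla f(z)\bigr)^\top(\pm\lambda_n a)\,ds .
\]
By $L$-smoothness, each remainder is at most $L\lambda_n^2\|a\|^2/2$ in absolute value. Subtracting the two expansions and dividing by $2\lambda_n$ gives
\[
|\mathrm B(n)(j)|\le \tfrac{L\lambda_n}{2}\|a\|^2\le L\lambda_n\bar A^2,
\]
where $\bar A:=\max_{w,i}\|a_i^{(w)}\|$. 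Since this holds for every $j$, we get the $\ell_\infty$ bound. The symmetric difference cancels the second-order term, but only $C^{1,1}$ regularity is used, so we obtain order $\lambda_n$ rather than $\lambda_n^2$; that suffices here.

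\emph{Second moment (part (b)).} On the selection event, $\|\widetilde h^{(w)}(n)\|^2=(mN)^2|Y_{n+1}^{(w)}(j)-y_n^{(w)}(j)|^2$. Averaging over the selection with probability $1/(mN)$ per pair gives
\[
\mathbb E\|\widetilde h^{(w)}(n)\|^2=mN\,\mathbb E\|Y_{n+1}^{(w)}-y_n^{(w)}\|^2,
\]
as in \eqref{eq:afo_second_moment_step1}, where $Y$ is read as a vector over all $j$. Decompose
\[
Y_{n+1}^{(w)}(j)-y_n^{(w)}(j)=\bigl(a^\top\nabla f(z)-y_n^{(w)}(j)\bigr)+\mathrm B(n)(j)+\frac{\zeta^{(1)}_{n+1}-\zeta^{(2)}_{n+1}}{2\lambda_n}.
\]
The noise term is conditionally mean zero, so its cross terms with the $\mathcal F_n$-measurable terms vanish. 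It contributes at most
\[
\frac{\mathbb E|\zeta^{(1)}-\zeta^{(2)}|^2}{4\lambda_n^2}\le\frac{2\cdot 2K}{4\lambda_n^2}=\frac{K}{\lambda_n^2}
\]
per coordinate, hence $Km/\lambda_n^2$ over all coordinates.

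\emph{Main obstacle: matching the stated form.} The stated bound has no bias contribution, and it carries the noise term as $Km/\lambda_n^2$ without the $mN$ factor. Two points need care. First, the deterministic bias $\mathrm B(n)$ must be absorbed. I would handle it with $\|u+v\|^2\le 2\|u\|^2+2\|v\|^2$, grouping the bias with the tracking term, or simply note that $\lambda_n^2\bar A^4L^2$ is dominated by $K/\lambda_n^2$ up to constants for small $\lambda_n$. Second, the $mN$ factor on the noise term must be reconciled. I would first try normalizing per the convention of \eqref{eq:afo_second_moment_step2}, i.e.\ keeping the $mN$ only on the tracking term, and if the constants do not close, state the bound with a generic constant, since only the order $1/\lambda_n^2$ matters downstream in Assumption~\ref{Step-sizeA}.
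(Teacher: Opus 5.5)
Your part (a) is correct and follows the paper's route: tower property over the uniform selection, zero-mean noise, and a symmetric-difference expansion at the stale point. The only difference is that you bound the remainder using the integral form and the Lipschitz gradient, where the paper uses the second-order mean-value form with $\|\nabla^2 f\|\le L$. Your version is cleaner and even gives $|\mathrm B(n)(j)|\le L\lambda_n\|a_j^{(w)}\|^2/2$.

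For part (b), your reduction and decomposition are also the paper's, and your computation is right. There is nonetheless a genuine gap, and no bookkeeping can close it: the displayed inequality is false as written. Take $f(x)=\tfrac12\|x\|^2$, so that $\mathrm B(n)\equiv0$. Set $x_0=0$ and $y_0^{(w)}=0$, and let $\zeta_1^{(1)},\zeta_1^{(2)}$ be independent with conditional variance exactly $K$. At $n=0$ the tracking term vanishes, and your own identity gives
\[
\mathbb E\bigl[\|\widetilde h^{(w)}(0)\|^2\mid\mathcal F_0\bigr]=mN\cdot m\cdot\frac{2K}{4\lambda_0^2}=\frac{m^2NK}{2\lambda_0^2}.
\]
This exceeds $Km/\lambda_0^2$ as soon as $mN>2$. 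The paper's own proof reaches the stated form only by two omissions: it discards the $O(\lambda_n^2)$ bias remainder from \eqref{first}, and it does not carry the factor $mN$ onto the noise term when it ``sums over coordinates.''

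Consequently, your first two reconciliations are not legitimate steps.
\begin{itemize}
\item Keeping $mN$ only on the tracking term simply deletes a factor. The first-order lemma does not do this either; its noise term is $2mN\bar A^2\sigma^2$.
\item Grouping the bias with the tracking term via $\|u+v\|^2\le2\|u\|^2+2\|v\|^2$ leaves $2mN\|\mathrm B(n)\|^2$, and nothing on the right-hand side can absorb it.
\end{itemize}

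Your fallback is the correct conclusion. What your argument actually proves is
\[
\mathbb E\bigl[\|\widetilde h^{(w)}(n)\|^2\mid\mathcal F_n\bigr]\le2mN\bigl\|{A^{(w)}}^\top\nabla f(x_{n-\tau_n^{(w)}})-y_n^{(w)}\bigr\|^2+\tfrac12m^2NL^2\bar A^4\lambda_n^2+\frac{m^2NK}{\lambda_n^2}.
\]
This has exactly the generic form \eqref{C2M} used downstream, with $K_2=2mN$, $K_3=m^2NK$, and the $\lambda_n^2$ term absorbed into $K_4$ for bounded $\lambda_n$. So only the explicit constants later in the paper are affected.
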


\begin{proof}

\textbf{Proof of (a)}

We begin by introducing the filtration $\mathcal{G}_n$, defined as
\[
\mathcal{G}_n 
= \sigma \Big( \{x_k\}, \{y_k\}, \{\zeta_{k+1}^{(1)}\}, \{\zeta_{k+1}^{(2)}\}, \{\tau_k^{(w)}\} \;\; \big| \;\; 0 \leq k \leq n,\; w \in [N] \Big).
\]
This filtration captures the entire history of the iterates, delays, and noise realizations up to time $n$.

For any honest worker $w$ and any coordinate $j \in [d]$, we now evaluate the conditional expectation of $\widetilde{h}^{(w)}(n)(j)$. We write
\begin{equation}
\begin{split}
\mathbb{E}\!\left[\widetilde{h}^{(w)}(n)(j) \,\big|\, \mathcal{F}_n \right]
&\overset{(a)}{=}
\mathbb{E}\!\left[
    \mathbb{E}\!\left[\widetilde{h}^{(w)}(n)(j)\,\big|\,\mathcal{G}_n\right]
    \,\big|\, \mathcal{F}_n
\right] \\
&\overset{(b)}{=}
\mathbb{E}\!\left[
    Y_{n+1}^{(w)}(j) - y_n^{(w)}(j)
    \,\big|\, \mathcal{F}_n
\right] \\
&\overset{(c)}{=}
\frac{f\!\left(x_{n-\tau_n^{(w)}} + \lambda_n a_j^{(w)}\right)
      - f\!\left(x_{n-\tau_n^{(w)}} - \lambda_n a_j^{(w)}\right)}
     {2\lambda_n}
- y_n^{(w)}(j),
\end{split}
\label{imt}
\end{equation}
Step (a) follows directly from the tower property of conditional expectation.

To establish step (b), we compute the inner conditional expectation explicitly:
\[
\begin{split}
\mathbb{E}\!\left[\widetilde{h}^{(w)}(n)(j)\,\big|\, \mathcal{G}_n\right]
&=
\mathbb{E}\!\left[
    mN \big(Y_{n+1}^{(w)}(j) - y_n^{(w)}(j)\big)
    \mathbf{1}_{\{j=i\}} \mathbf{1}_{\{w=l\}}
    \,\big|\, \mathcal{G}_n
\right] \\
&\overset{(1)}{=}
mN \big(Y_{n+1}^{(w)}(j) - y_n^{(w)}(j)\big)
\mathbb{E}\!\left[
    \mathbf{1}_{\{j=i\}} \mathbf{1}_{\{w=l\}}
    \,\big|\, \mathcal{G}_n
\right] \\
&\overset{(2)}{=}
mN \big(Y_{n+1}^{(w)}(j) - y_n^{(w)}(j)\big)
\mathbb{E}\!\left[
    \mathbf{1}_{\{j=i\}} \mathbf{1}_{\{w=l\}}
\right] \\
&\overset{(3)}{=}
Y_{n+1}^{(w)}(j) - y_n^{(w)}(j).
\end{split}
\]

Here, step (1) uses the fact that $Y_{n+1}^{(w)}(j) - y_n^{(w)}(j)$ is $\mathcal{G}_n$--measurable.  
Step (2) follows from the independence of the random selections $i$ and $l$ from $\mathcal{G}_n$.  
Step (3) uses the identity
\[
\mathbb{E}\!\left[\mathbf{1}_{\{j=i\}} \mathbf{1}_{\{w=l\}}\right]
= \frac{1}{mN}.
\]

Finally, step (c) is obtained by substituting the definition of $Y_{n+1}^{(w)}(j)$ in the zeroth--order oracle model, together with the martingale difference property of the noise. 

Next, we analyze the expression in \eqref{imt} by applying a second--order expansion of the function $f$ along the direction $a_j^{(w)}$ around the delayed iterate $x_{n-\tau_n^{(w)}}$. By the second--order mean value theorem, there exist $\theta_1, \theta_2 \in (0,1)$ such that
\begin{equation}
\begin{split}
&\frac{f\!\left(x_{n-\tau_n^{(w)}} + \lambda_n a_j^{(w)}\right) 
- f\!\left(x_{n-\tau_n^{(w)}} - \lambda_n a_j^{(w)}\right)}{2\lambda_n}
- y_n(j) \\
&= \frac{1}{2\lambda_n} \Big[
    \big( f(x_{n-\tau_n^{(w)}}) 
    + \lambda_n \langle \nabla f(x_{n-\tau_n^{(w)}}), a_j^{(w)} \rangle 
    + \lambda_n^2 \langle a_j^{(w)}, \nabla^2 f(x_{n-\tau_n^{(w)}} + \theta_1 \lambda_n a_j^{(w)}) a_j^{(w)} \rangle \big) \\
&\qquad\qquad
    - \big( f(x_{n-\tau_n^{(w)}}) 
    - \lambda_n \langle \nabla f(x_{n-\tau_n^{(w)}}), a_j^{(w)} \rangle 
    + \lambda_n^2 \langle a_j^{(w)}, \nabla^2 f(x_{n-\tau_n^{(w)}} - \theta_2 \lambda_n a_j^{(w)}) a_j^{(w)} \rangle \big)
\Big] \\
&\qquad - y_n(j).
\end{split}
\label{properties of gradient}
\end{equation}

We now simplify the expression. The zeroth--order terms cancel, and the first--order terms combine as
\[
\frac{1}{2\lambda_n} \cdot 2\lambda_n \langle \nabla f(x_{n-\tau_n^{(w)}}), a_j^{(w)} \rangle
= {a_j^{(w)}}^\top \nabla f(x_{n-\tau_n^{(w)}}).
\]
The remaining second--order terms give rise to a bias term. Therefore,
\[
\frac{f\!\left(x_{n-\tau_n^{(w)}} + \lambda_n a_j^{(w)}\right) 
- f\!\left(x_{n-\tau_n^{(w)}} - \lambda_n a_j^{(w)}\right)}{2\lambda_n}
- y_n(j)
= {a_j^{(w)}}^\top \nabla f(x_{n-\tau_n^{(w)}}) - y_n(j) + B(n),
\]
where the bias term $B(n)$ is defined as
\begin{equation*}
\begin{split}
B(n)
= \frac{\lambda_n}{2} \Big(
\langle a_j^{(w)}, \nabla^2 f(x_{n-\tau_n^{(w)}} + \theta_1 \lambda_n a_j^{(w)}) a_j^{(w)} \rangle
+
\langle a_j^{(w)}, \nabla^2 f(x_{n-\tau_n^{(w)}} - \theta_2 \lambda_n a_j^{(w)}) a_j^{(w)} \rangle
\Big).
\end{split}
\end{equation*}

Finally, under the assumption that $\nabla f$ is Lipschitz continuous with constant $L$, the Hessian is bounded in operator norm by $L$. Hence, we obtain
\[
|B(n)| \;\leq\; L \lambda_n \bar{A}^2,
\]
where $\bar{A}$ is an upper bound on $\|a_j^{(w)}\|$.


\textbf{Proof of (b)}

We begin by expanding the squared norm coordinate-wise:
\begin{equation*}
\mathbb{E}\!\left[\norm{\widetilde{h}^{(w)} (n)}^2 \,\big|\, \mathcal{F}_n \right] 
= \sum_{j=1}^{m} 
\mathbb{E}\!\left[ \big(\widetilde{h}^{(w)} (n)(j)\big)^2 \,\big|\, \mathcal{F}_n \right].
\end{equation*}
Thus, it suffices to obtain an upper bound on each coordinate term 
\(\mathbb{E}\!\left[\big(\widetilde{h}^{(w)} (n)(j)\big)^2 \,\big|\, \mathcal{F}_n \right]\).

Using the tower property of conditional expectation, we write
\begin{equation}
\begin{split}
&\mathbb{E}\!\left[\big(\widetilde{h}^{(w)}(n)(j)\big)^2 \,\big|\, \mathcal{F}_n \right] \\
&= \mathbb{E}\!\left[
    \mathbb{E}\!\left[\big(\widetilde{h}^{(w)}(n)(j)\big)^2 \,\big|\, \mathcal{G}_n \right]
    \,\Big|\, \mathcal{F}_n
\right] \\
&\overset{(a)}{=}
m N \, \mathbb{E}\!\left[
    \big(Y_{n+1}^{(w)}(j) - y_n^{(w)}(j)\big)^2 
    \,\big|\, \mathcal{F}_n
\right],
\end{split}
\label{b prof}
\end{equation}
where step (a) follows from the definition of $\widetilde{h}^{(w)}(n)$ together with the independence of the random coordinate $i \sim U([m])$ and the worker index $l \sim U([N])$ from $\mathcal{G}_n$.

We now bound the squared term. Using the inequality $(a+b)^2 \leq 2(a^2 + b^2)$, we obtain
\begin{equation*}
\begin{split}
&\big(Y_{n+1}^{(w)}(j) - y_n^{(w)}(j)\big)^2 \\
&\leq 
2 \Bigg(
\frac{
f\!\left(x_{n-\tau_n^{(w)}} + \lambda_n a_j^{(w)}\right)
- f\!\left(x_{n-\tau_n^{(w)}} - \lambda_n a_j^{(w)}\right)
}{2\lambda_n}
- y_n^{(w)}(j)
\Bigg)^2 \\
&\quad + \frac{\big(\zeta_{n+1}^{(1)} - \zeta_{n+1}^{(2)}\big)^2}{2\lambda_n^2}.
\end{split}
\end{equation*}

We next bound the expectation of each term separately.

For the first term, using the second--order expansion established in \eqref{properties of gradient}, we obtain
\begin{equation}
\begin{split}
&\mathbb{E}\!\left[
\left(
\frac{
f\!\left(x_{n-\tau_n^{(w)}} + \lambda_n a_j^{(w)}\right)
- f\!\left(x_{n-\tau_n^{(w)}} - \lambda_n a_j^{(w)}\right)
}{2\lambda_n}
- y_n^{(w)}(j)
\right)^2
\,\Big|\, \mathcal{F}_n
\right] \\
&\leq 
\big({a_j^{(w)}}^\top \nabla f(x_{n-\tau_n^{(w)}}) - y_n^{(w)}(j)\big)^2 
+ \mathcal{O}(\lambda_n^2).
\end{split}
\label{first}
\end{equation}

For the second term, under Assumption~\ref{noise-assump}, we have
\begin{equation}
\mathbb{E}\!\left[
\frac{\norm{\zeta_{n+1}^{(1)} - \zeta_{n+1}^{(2)}}^2}{2\lambda_n^2}
\,\Big|\, \mathcal{F}_n
\right]
\leq \frac{K}{\lambda_n^2}.
\label{second}
\end{equation}

Substituting the bounds from \eqref{first} and \eqref{second} into \eqref{b prof}, and summing over all coordinates, completes the proof.
\end{proof}

\begin{remark}
    Note that when the function evaluation noises are same  it can be verified from the Lemma \ref{lem:approximated gradient} that $K = 0$. 
\end{remark}

\subsection{Properties of Drift Term for Global Iterates}
\begin{lemma}
\label{lem:drift_g}
The stochastic drift $\widetilde g(n)$ admits the decomposition
\[
\mathbb{E}\big[\widetilde g(n)\mid\mathcal{F}_n\big] 
= g(x_n,y_n) + b(x_n,y_n),
\]
where
\[
\begin{aligned}
g(x_n,y_n)
&= \sum_{w \in \mathcal{A}} A^{(w)} \,\sign\!\big(-y_n^{(w)}\big)
+ \sum_{w \in \mathcal{A}^c} A^{(w)} \,\sign\!\big(- {A^{(w)}}^\top \nabla f(x_n)\big), \\[4pt]
b(x_n,y_n)
&= \sum_{w \in \mathcal{A}^c} A^{(w)} \,\sign\!\big(-y_n^{(w)}\big)
- \sum_{w \in \mathcal{A}^c} A^{(w)} \,\sign\!\big(- {A^{(w)}}^\top \nabla f(x_n)\big).
\end{aligned}
\]

Moreover, the iterates satisfy the uniform almost--sure bound
\[
\|\widetilde g(n)\| \le m N\,\bar{A}.
\]
\end{lemma}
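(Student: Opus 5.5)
The plan is a direct computation of the conditional expectation of $\widetilde g(n)$ from its definition \eqref{asynslowr}, followed by an add-and-subtract step. In the asynchronous regime the pair $(l,i)$ is drawn uniformly from $[N]\times[m]$, independently of $\mathcal F_n$. Since $y_n$ is $\mathcal F_n$-measurable, the sign $\sign(-y_n^{(l)}(i))$ is determined once $(l,i)$ is fixed. We therefore compute
\[
\mathbb E\big[\widetilde g(n)\mid\mathcal F_n\big]
=\sum_{l\in[N]}\sum_{i\in[m]}\frac{1}{mN}\,mN\,a_i^{(l)}\sign\!\big(-y_n^{(l)}(i)\big)
=\sum_{w\in[N]}A^{(w)}\sign\!\big(-y_n^{(w)}\big),
\]
where $\sign$ acts componentwise on vectors and $A^{(w)}v=\sum_i a_i^{(w)}v(i)$. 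The $mN$ scaling in \eqref{asynslowr} exactly cancels the selection probability $1/(mN)$.

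For adversarial workers the returned value enters the $x$-update only through $y_n^{(w)}$, which is already $\mathcal F_n$-measurable. Adversarial feedback at step $n$ affects only $y_{n+1}$, so the same computation applies to those workers. (If one prefers to regard adversarial signs as arbitrary, this is where the set-valued $\xi_n^{(w)}\in[-1,1]^m$ of the outline enters; in the present lemma we keep the literal $\sign(-y_n^{(w)})$.) We then split the sum over $[N]$ into $\mathcal A$ and $\mathcal A^c$. In the $\mathcal A^c$ part we add and subtract $\sum_{w\in\mathcal A^c}A^{(w)}\sign(-{A^{(w)}}^\top\nabla f(x_n))$. Grouping the adversarial terms with the subtracted ideal honest terms gives $g(x_n,y_n)$, and the remainder is $b(x_n,y_n)$, exactly as in the statement.

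For the almost-sure bound, $|\sign(\cdot)|\le1$ and $\|a_i^{(l)}\|\le\bar A$ for every column, so $\|\widetilde g(n)\|=mN\|a_i^{(l)}\|\,|\sign(-y_n^{(l)}(i))|\le mN\bar A$ pointwise. The only step needing care is measurability: the expectation must be taken over the fresh selection $(l,i)$ alone. This uses the stated conditional independence of the responding pairs from the history given $\mathcal F_n$. Conditioning on the enlarged $\mathcal G_n$ and then applying the tower property, as in the proof of Lemma~\ref{lem:approximated gradient}, handles this cleanly. I do not expect a genuine obstacle here.
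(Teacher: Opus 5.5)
Your proposal is correct and follows the paper's proof essentially step for step. Like the paper, you take the conditional expectation over the uniformly drawn pair $(l,i)$, independent of $\mathcal{F}_n$, so that the factor $mN$ cancels the selection probability. You then split $[N]$ into $\mathcal{A}$ and $\mathcal{A}^c$, add and subtract the ideal honest term $\sum_{w\in\mathcal{A}^c}A^{(w)}\sign(-{A^{(w)}}^\top\nabla f(x_n))$, and bound $\|\widetilde g(n)\|$ pointwise by $mN\bar A$. Your use of $|\sign(\cdot)|\le 1$ is a small tightening: the paper's equality $\|\widetilde g(n)\| = mN\|a_i^{(l)}\|$ fails when $y_n^{(l)}(i)=0$.
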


\begin{proof}
At iteration $n$, a worker--coordinate pair $(l,i)$ is sampled uniformly from 
$[N]\times[m]$. By construction, the update takes the form
\[
\widetilde g(n)
= m N \, a_i^{(l)}\,\mathrm{sign}\!\bigl(-y_n^{(l)}(i)\bigr).
\]

Taking conditional expectation with respect to $\mathcal{F}_n$, and using the independence of $(l,i)$ from $\mathcal{F}_n$, we obtain
\[
\begin{aligned}
\mathbb{E}\big[\widetilde g(n)\mid\mathcal{F}_n\big]
&= \sum_{w=1}^N \sum_{j=1}^m 
a_j^{(w)} \,\mathrm{sign}\!\bigl(-y_n^{(w)}(j)\bigr).
\end{aligned}
\]

We now separate the contribution of honest and adversarial workers. Adding and subtracting the term
\[
\sum_{w\in\mathcal{A}^c} \sum_{j=1}^m 
a_j^{(w)} \,\mathrm{sign}\!\big(-{a_j^{(w)}}^\top\nabla f(x_n)\big),
\]
we obtain
\[
\mathbb{E}\big[\widetilde g(n)\mid\mathcal{F}_n\big]
= g(x_n,y_n) + b(x_n,y_n),
\]
where $g(x_n,y_n)$ and $b(x_n,y_n)$ are defined as above.

Finally, since $\|a_j^{(w)}\|\le \bar A$ for all $j$ and $w$, we have the pointwise bound
\[
\|\widetilde g(n)\|
= m N \,\|a_i^{(l)}\|
\le m N \bar A,
\]
which completes the proof.
\end{proof}

\subsection{A Two Time Scale Stochastic Approximation Scheme}
We first record several useful properties of the stochastic drift terms
$\widetilde{h}^{(w)}(n)$ and $\widetilde{g}(n)$ that will be used repeatedly in the subsequent analysis.

The conditional expectation of $\widetilde{h}^{(w)}(n)$ satisfies
\begin{equation}
\mathbb{E}\!\left[
\widetilde{h}^{(w)}(n)\mid \mathcal{F}_n
\right]
=
{A^{(w)}}^\top \nabla f(x_{n-\tau_n^{(w)}})
-
y_n^{(w)}
+
B_n,
\qquad
\|B_n\|_\ast \le K_1 \lambda_n ,
\label{hwn}
\end{equation}
where the term $B_n$ represents the bias introduced by the gradient approximation.

Further, the conditional second moment of
$\widetilde{h}^{(w)}(n)$ admits the bound
\begin{equation}
\mathbb{E}\!\left[
\|\widetilde{h}^{(w)}(n)\|^2
\mid \mathcal{F}_n
\right]
\le
K_2
\left\|
{A^{(w)}}^\top \nabla f(x_{n-\tau_n^{(w)}})
-
y_n^{(w)}
\right\|^2
+
\frac{K_3}{\lambda_n^2}
+
K_4 .
\label{C2M}
\end{equation}

Similarly, the stochastic drift term $\widetilde g(n)$ satisfies
\begin{equation}
\mathbb{E}\!\left[
\widetilde g(n)
\mid
\mathcal{F}_n
\right]
=
g(x_n,y_n)
+
b(x_n,y_n),
\label{Tgn}
\end{equation}
where $b(x_n,y_n)$ denotes the corresponding bias term. In addition, we have the uniform bound
\begin{equation}
\|\widetilde g(n)\|
\le
K_5 .
\label{rysh}
\end{equation}

The constants $K_1,K_2,K_3,K_4,$ and $K_5$ are given in Lemma \ref{lem:approximated gradientf}, 
Lemma~\ref{lem:approximated gradient} and
Lemma~\ref{lem:drift_g}.

\subsection{Proof of Almost Sure Boundedness}
In this subsection, we prove the following Theorem. 
\begin{theorem}[Boundedness of the Iterate] Suppose that Assumptions~\ref{L-Smooth}, \ref{noise-assump}, \ref{Step-sizeA}, hold. Assume further that the stochastic drift terms
satisfy conditions \eqref{hwn}, \eqref{C2M}, \eqref{Tgn}, and
\eqref{rysh}. 
 Then  the following hold:
    \begin{enumerate}
      \item The iterate sequence $\{x_n \}$   is almost surely bounded:
\begin{equation*}
            \mathbb{P}(\sup\limits_{n \geq 1} \norm{x_n} < \infty) = 1. 
        \end{equation*}
 \item The iterate sequence $\{y_n^{(w)} \}$  for each honest worker is also almost surely bounded:

        \begin{equation*}
            \mathbb{P}(\sup\limits_{n \geq 1} \norm{y_n^{(w)}} < \infty) = 1. 
        \end{equation*}
    \end{enumerate}
    \label{thm:almst boundeness}
\end{theorem}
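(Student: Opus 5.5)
The plan is a two-stage Lyapunov argument: first control the tracking errors of the honest workers through a Robbins--Siegmund recursion, then feed the resulting summability into a descent inequality for $f(x_n)$ and use coercivity. Boundedness of $y_n^{(w)}$ then follows from the boundedness of $x_n$.

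\textbf{Tracking errors.} For each honest worker $w$, set $e_n^{(w)}:=y_n^{(w)}-{A^{(w)}}^\top\nabla f(x_n)$. Using \eqref{eq:slow time scaleA}, write
\[
e_{n+1}^{(w)}=e_n^{(w)}+\beta_n\widetilde h^{(w)}(n)-{A^{(w)}}^\top\bigl(\nabla f(x_{n+1})-\nabla f(x_n)\bigr).
\]
By $L$-smoothness and \eqref{rysh}, the last term is at most $\bar A L K_5\alpha_n$ in norm. By \eqref{hwn}, the conditional mean of $\widetilde h^{(w)}(n)$ equals $-e_n^{(w)}+\Delta_n+B_n$. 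The staleness term is
\[
\Delta_n:={A^{(w)}}^\top\bigl(\nabla f(x_{n-\tau_n^{(w)}})-\nabla f(x_n)\bigr),
\qquad
\|\Delta_n\|\le \bar A L K_5\sum_{k=n-\tau_n^{(w)}}^{n-1}\alpha_k\le \bar A L K_5\,\tau\,\alpha_{n-\tau}.
\]
Expanding $\|e_{n+1}^{(w)}\|^2$, taking $\mathbb E[\cdot\mid\cF_n]$, and using \eqref{C2M} should give
\[
\mathbb E\bigl[\|e_{n+1}^{(w)}\|^2\mid\cF_n\bigr]\le(1-\beta_n+c\beta_n^2)\|e_n^{(w)}\|^2+c\Bigl(\tfrac{\beta_n^2}{\lambda_n^2}+\beta_n^2+\beta_n\lambda_n^2+\tfrac{\alpha_n^2}{\beta_n}+\beta_n\|\Delta_n\|^2\Bigr).
\]
Here the cross terms $\langle e_n,\Delta_n\rangle$, $\langle e_n,B_n\rangle$ and $\langle e_n,\alpha_n\cdot\rangle$ are each absorbed by Young's inequality into $\tfrac{\beta_n}{4}\|e_n\|^2$ plus the corresponding squared term divided by $\beta_n$. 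I also use $\|\nabla f(x_{n-\tau})-\nabla f(x_n)\|^2\lesssim \|\Delta_n\|^2$ inside $K_2\|\cdot\|^2$.

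Apply Theorem~\ref{Robbins Siegmund} with $X_n=\|e_n^{(w)}\|^2$, $Y_n=\tfrac{\beta_n}{2}\|e_n^{(w)}\|^2$, $a_n=c\beta_n^2$, and $Z_n$ the remaining terms. Summability of $Z_n$ follows from Assumption~\ref{Step-sizeA}, which covers $\beta_n^2/\lambda_n^2$, $\beta_n\lambda_n$ (hence $\beta_n\lambda_n^2$), and $\alpha_n^2/\beta_n$, and from $\beta_n\tau^2\alpha_{n-\tau}^2$. The last term is summable a.s. because $\tau$ is a.s.\ finite by Assumption~\ref{ass:delay}, and $\alpha_{n-\tau}\lesssim\alpha_n$ for polynomial-type stepsizes; otherwise one bounds the expectation via Fubini using $\mathbb E\tau^2<\infty$. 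The conclusions are that $\|e_n^{(w)}\|$ converges a.s. and $\sum_n\beta_n\|e_n^{(w)}\|^2<\infty$ a.s.

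\textbf{Descent.} By $L$-smoothness, \eqref{Tgn} and \eqref{rysh},
\[
\mathbb E[f(x_{n+1})\mid\cF_n]\le f(x_n)+\alpha_n\nabla f(x_n)^\top\bigl(g(x_n,y_n)+b(x_n,y_n)\bigr)+\tfrac L2\alpha_n^2K_5^2.
\]
Assumption~\ref{fawzi-condn}, treating the adversarial signs as arbitrary vectors in $[-1,1]^m$, gives $\nabla f(x_n)^\top g\le-\eta\|\nabla f(x_n)\|_1\le0$. The inequality $a(\sign(-b)-\sign(-a))\le 2|a-b|$ gives $\nabla f(x_n)^\top b\le 2\sum_{w\in\cA^c}\|e_n^{(w)}\|_1$.

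The main obstacle is showing $\sum_n\alpha_n\|e_n^{(w)}\|_1<\infty$ a.s. The first thing I will try is Cauchy--Schwarz/Young:
\[
\alpha_n\|e_n\|\le \tfrac12\beta_n\|e_n\|^2+\tfrac12\alpha_n^2/\beta_n,
\]
where both pieces are summable, the first by the tracking step and the second by Assumption~\ref{Step-sizeA}. Applying Theorem~\ref{Robbins Siegmund} to $X_n=f(x_n)-\inf f\ge0$ (coercivity plus continuity makes $f$ bounded below) then shows that $f(x_n)$ converges a.s. In particular $\sup_n f(x_n)<\infty$ a.s., and coercivity yields $\sup_n\|x_n\|<\infty$ a.s.

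\textbf{Boundedness of $y_n^{(w)}$.} On this event $\nabla f(x_n)$ is bounded by continuity. Since $\|e_n^{(w)}\|$ converges a.s., it is bounded, so $\|y_n^{(w)}\|\le\|e_n^{(w)}\|+\bar A\sqrt m\,\|\nabla f(x_n)\|$ is bounded a.s. In the first-order case the same argument applies with $K_3=0$ and $B_n=0$.
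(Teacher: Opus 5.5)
Your proposal is correct and follows the paper's argument essentially step for step. The steps are a Robbins--Siegmund recursion for $\|e_n^{(w)}\|^2$ (Proposition~\ref{prop:enbound} and Theorem~\ref{Corol:grdioent trachking}), Young's inequality to obtain $\sum_n\alpha_n\|e_n^{(w)}\|<\infty$, the descent inequality of Lemma~\ref{descentineq} with a second Robbins--Siegmund application plus coercivity, and finally boundedness of $y_n^{(w)}$ via the triangle inequality. The differences are only bookkeeping: you absorb the linear $\alpha_n\|e_n^{(w)}\|$ term by Young's inequality before invoking Robbins--Siegmund rather than completing the square, and you control the staleness term pathwise using $\tau<\infty$ a.s. instead of the paper's Tonelli bound in Lemma~\ref{lem:delay-summable}; in fact the shift by a finite $\tau$ makes $\sum_n\beta_n\tau^2\alpha_{n-\tau}^2$ finite for any square-summable nonincreasing $(\alpha_n)$, so your polynomial-stepsize comparison is not needed.
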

The proof strategy for Theorem~\ref{thm:almst boundeness} proceeds in two steps.

\textbf{Proof Strategy}

\begin{enumerate}
    \item \textbf{Step 1 (Fast-timescale tracking):} We show that the fast iterate $y_n^{(w)}$ tracks the directional gradient ${A^{(w)}}^\top \nabla f(x_n)$ almost surely. This is achieved by establishing that the squared tracking error $\norm{e_n^{(w)}}^2$ satisfies an almost supermartingale recursion (Theorem \ref{Robbins Siegmund}), as detailed in Theorem \ref{Corol:grdioent trachking}. The main non-triviality lies in controlling the bias introduced by the time-varying parameter $x_n$ and the asynchronous delays. We bound the iterate movement via $\norm{x_{n+1} - x_n} \leq \mathcal{O}(\alpha_n)$, and handle the delay-induced error using Cauchy-Schwarz:
    \begin{equation*}
        \norm{x_{n-\tau_n^{(w)}} - x_n}^2 \le K_5^2 \, \tau_n^{(w)} \sum_{k=n-\tau_n^{(w)}}^{n-1} \alpha_k^2.
    \end{equation*}
    Lemma \ref{lem:delay-summable} then ensures this delay perturbation is summable by proving $\sum_{n \ge 1} \tau_n^{(w)} \sum_{k=n-\tau_n^{(w)}}^{n-1} \alpha_k^2 < \infty$ a.s.

    \item \textbf{Step 2 : Stability of $\mathbf{x_n}$} We show that the objective suboptimality $f(x_n) - \inf f(x)$ also satisfies an almost supermartingale recursion (Lemma \ref{descentineq}). To successfully apply Theorem \ref{Robbins Siegmund} to the objective sequence, we must guarantee that $\sum_{n \ge 1} \alpha_n \norm{e_n^{(w)}}_1 < \infty$ almost surely, a crucial summability condition formally proven in Theorem \ref{Corol:grdioent trachking}. Consequently, we see that $f(x_n)$
converges almost surely.
Finally, since the objective function $f$ is coercive, the almost sure convergence of $f(x_n)$ implies that the sequence $\{x_n\}$ remains almost surely bounded.
\end{enumerate}

Before proceeding we need the following Proposition.
\begin{proposition}
\label{prop:enbound}
Fix a worker $w \in \mathcal{A}^c$ and define
\[
e_n^{(w)} := y_n^{(w)} - {A^{(w)}}^\top \nabla f(x_n).
\]
Then
\begin{equation}
\begin{split}
\mathbb{E}\!\left[\|e_{n+1}^{(w)}\|^2 \mid \mathcal{F}_n\right]
\le
\big(1 - A_n\big)\|e_n^{(w)}\|^2
+ B_n \|e_n^{(w)}\|
+ C_n + D_n,
\end{split}
\label{18eq}
\end{equation}
where
\[
A_n := 2\beta_n - c_2 \beta_n - 4 K_2 \beta_n^2 - \beta_n^2, \qquad c_2>0,
\] 

\[
B_n := 2 \bar{A} L K_5 \alpha_n,
\]
\[
C_n :=
\frac{2K_3 \beta_n^2}{\lambda_n^2}
+ \frac{\beta_n}{c_2} K_1^2 \lambda_n^2
+ 2\alpha_n^2 L^2 \bar{A}^2 K_5^2
+ 2K_4 \beta_n^2,
\]
and
\[
D_n :=
4 K_2 L^2 \bar{A}^2 K_5^2 \beta_n^2 \tau_n^{(w)}
\sum_{k=n-\tau_n^{(w)}}^{n-1} \alpha_k^2
+ L^2 \bar{A}^2 K_5^2 \tau_n^{(w)}
\sum_{k=n-\tau_n^{(w)}}^{n-1} \alpha_k^2.
\]
\end{proposition}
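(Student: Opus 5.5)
We will derive a one-step recursion for $e_{n+1}^{(w)}$ and then expand its square. The recursion is obtained from the $y$-update \eqref{eq:slow time scaleA} and the $x$-update \eqref{eq:fast time scaleB}:
\[
e_{n+1}^{(w)} = e_n^{(w)} + \beta_n \widetilde h^{(w)}(n) - {A^{(w)}}^\top\bigl(\nabla f(x_{n+1})-\nabla f(x_n)\bigr).
\]
We write $\delta_n := {A^{(w)}}^\top(\nabla f(x_{n+1})-\nabla f(x_n))$. By $L$-smoothness and \eqref{rysh},
\[
\|\delta_n\|\le \bar A L\|x_{n+1}-x_n\| \le \bar A L K_5\alpha_n
\]
almost surely. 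We also decompose $\mathbb E[\widetilde h^{(w)}(n)\mid\mathcal F_n]$ using \eqref{hwn} as $-e_n^{(w)} + \Delta_n + B_n$, where the staleness term is
\[
\Delta_n := {A^{(w)}}^\top\bigl(\nabla f(x_{n-\tau_n^{(w)}})-\nabla f(x_n)\bigr).
\]
Telescoping $x_n-x_{n-\tau}$ through the $x$-update and applying Cauchy--Schwarz gives
\[
\|\Delta_n\|^2\le L^2\bar A^2 K_5^2\,\tau_n^{(w)}\sum_{k=n-\tau_n^{(w)}}^{n-1}\alpha_k^2 .
\]
This is exactly the quantity appearing in $D_n$.

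Next we expand
\[
\|e_{n+1}\|^2 = \|e_n\|^2 + \beta_n^2\|\widetilde h\|^2 + \|\delta_n\|^2 + 2\beta_n\langle e_n,\widetilde h\rangle - 2\langle e_n,\delta_n\rangle - 2\beta_n\langle \widetilde h,\delta_n\rangle ,
\]
take conditional expectations, and bound each term as follows.

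\emph{The term $2\beta_n\langle e_n,\mathbb E\widetilde h\rangle$.} It contributes $-2\beta_n\|e_n\|^2 + 2\beta_n\langle e_n,\Delta_n\rangle + 2\beta_n\langle e_n,B_n\rangle$. Young's inequality gives
\[
2\beta_n\langle e_n,B_n\rangle\le c_2\beta_n\|e_n\|^2+\frac{\beta_n}{c_2}K_1^2\lambda_n^2 ,
\]
which produces the $c_2\beta_n$ term in $A_n$ and the $\lambda_n^2$ term in $C_n$.

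\emph{The term $-2\langle e_n,\delta_n\rangle$.} It is bounded by $2\bar A L K_5\alpha_n\|e_n\|$, which is exactly $B_n\|e_n\|$.

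\emph{The term $\|\delta_n\|^2$ together with $-2\beta_n\langle\widetilde h,\delta_n\rangle$.} The cross term is bounded via $2ab\le a^2+b^2$ by $\beta_n^2\|\widetilde h\|^2+\|\delta_n\|^2$. Together with $\|\delta_n\|^2$ this yields $2\alpha_n^2L^2\bar A^2K_5^2$ in $C_n$ and doubles the $\beta_n^2\|\widetilde h\|^2$ coefficient.

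\emph{The term $\beta_n^2\mathbb E\|\widetilde h\|^2$ (now with coefficient $2$).} We bound it using \eqref{C2M} after splitting
\[
\|{A^{(w)}}^\top\nabla f(x_{n-\tau})-y_n\|^2\le 2\|e_n\|^2+2\|\Delta_n\|^2 .
\]
This produces the $4K_2\beta_n^2\|e_n\|^2$ term in $A_n$, the $2K_3\beta_n^2/\lambda_n^2$ and $2K_4\beta_n^2$ terms in $C_n$, and the $4K_2\beta_n^2\|\Delta_n\|^2$ term in $D_n$.

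The main obstacle is the leftover staleness cross term $2\beta_n\langle e_n,\Delta_n\rangle$. We plan to handle it with Young's inequality in the form
\[
2\beta_n\langle e_n,\Delta_n\rangle\le \beta_n^2\|e_n\|^2+\|\Delta_n\|^2 .
\]
This accounts for the extra $-\beta_n^2$ in $A_n$ and the unweighted $L^2\bar A^2K_5^2\tau\sum\alpha_k^2$ term in $D_n$. The point is that we deliberately avoid a $\beta_n$-weighted split here, so that the delay error lands entirely in $D_n$, which Lemma~\ref{lem:delay-summable} later shows to be summable. Collecting all the terms then reproduces \eqref{18eq} with the stated $A_n,B_n,C_n,D_n$. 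Throughout, the stale-point quantities are $\mathcal F_n$-measurable, so conditioning affects only the fresh oracle noise and the random worker-direction selection.
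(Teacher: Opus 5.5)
Your proposal is correct and matches the paper's proof: the paper writes $\|e_{n+1}^{(w)}\|^2$ as $\|e_n^{(w)}+\beta_n\widetilde h^{(w)}(n)\|^2$ plus the squared gradient increment plus their cross term, which is your full expansion regrouped. It bounds each piece with the same inequalities you use (Young with $c_2$ on the bias, $2ab\le a^2+b^2$ on both the $\beta_n\widetilde h$--increment cross term and the staleness term $2\beta_n\langle e_n,\Delta_n\rangle$, the split $2\|e_n\|^2+2\|\Delta_n\|^2$ inside \eqref{C2M}, and Cauchy--Schwarz for the delay), and arrives at the same $A_n,B_n,C_n,D_n$.
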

\begin{proof}
 From \eqref{eq:slow time scaleA}, the iterate update for each honest worker $w$ is  
\[
    y_{n+1}^{(w)} = y_n^{(w)} + \beta_n \Tilde{h}^{(w)}(n).
\]  
Define the error vector for each worker $w$ as  
\[
    e_{n+1}^{(w)} = y_{n+1}^{(w)} - {A^{(w)}}^\top \nabla f(x_{n+1}).
\]  
Then we have  
\begin{equation}
\begin{split}
\|e_{n+1}^{(w)}\|^2
&= \Big\|
y_n^{(w)} + \beta_n \widetilde{h}^{(w)}(n) - {A^{(w)}}^\top \nabla f(x_{n+1})
\Big\|^2 \\
&= \Big\|
y_n^{(w)} + \beta_n \widetilde{h}^{(w)}(n) - {A^{(w)}}^\top \nabla f(x_n)
+ {A^{(w)}}^\top \nabla f(x_n) - {A^{(w)}}^\top \nabla f(x_{n+1})
\Big\|^2 \\
&=
\underbrace{\big\|
y_n^{(w)} + \beta_n \widetilde{h}^{(w)}(n) - {A^{(w)}}^\top \nabla f(x_n)
\big\|^2}_{(a)} \\
&\quad +
\underbrace{\big\|
{A^{(w)}}^\top \nabla f(x_n) - {A^{(w)}}^\top \nabla f(x_{n+1})
\big\|^2}_{(b)} \\
&\quad +
\underbrace{2 \Big(
y_n^{(w)} + \beta_n \widetilde{h}^{(w)}(n) - {A^{(w)}}^\top \nabla f(x_n)
\Big)^\top 
{A^{(w)}}^\top\big(
\nabla f(x_n) - \nabla f(x_{n+1})
\big)}_{(c)}.
\end{split}
\label{abc1}
\end{equation}

Taking conditional expectation with respect to $\mathcal{F}_n$ on both sides of \eqref{abc1}, we bound $(a)$, $(b)$, and $(c)$ separately.

\textbf{Bound on the term (a)}

Consider term~(a). Taking conditional expectation with respect to
$\mathcal{F}_n$ yields
\begin{equation*}
\begin{split}
    &\mathbb{E}\!\left[
    \big\|
    y_n^{(w)} + \beta_n \widetilde{h}^{(w)}(n)
    -
    {A^{(w)}}^\top \nabla f(x_n)
    \big\|^2
    \,\middle|\,
    \mathcal{F}_n
    \right]
    \\[0.3em]
    \le{}&
    \|e_n^{(w)}\|^2
    +
    \underbrace{\beta_n^2
    \mathbb{E}\!\left[
    \|\widetilde{h}^{(w)}(n)\|^2
    \mid
    \mathcal{F}_n
    \right]}_{(d)}
    +
    \underbrace{2\beta_n (e_n^{(w)})^\top
    \mathbb{E}\!\left[
    \widetilde{h}^{(w)}(n)
    \mid
    \mathcal{F}_n
    \right]}_{(e)}.
\end{split}
\end{equation*}
\textbf{Bound on the term (e)}
\begin{equation*}
\begin{split}
& 2\beta_n (e_n^{(w)})^\top
\mathbb{E}\!\left[
\widetilde{h}^{(w)}(n)
\mid \mathcal{F}_n
\right] \\
&= 2 \beta_n (e_n^{(w)})^\top
\big(
{A^{(w)}}^\top \nabla f(x_{n-\tau_n^{(w)}})
- y_n^{(w)}
+ B_n
\big) \\
&\overset{(a)}{=}
2 \beta_n (e_n^{(w)})^\top
\big(
- e_n^{(w)}
+ {A^{(w)}}^\top \nabla f(x_{n-\tau_n^{(w)}})
- {A^{(w)}}^\top \nabla f(x_n)
+ B_n
\big) \\
&\overset{(b)}{\le}
- 2 \beta_n \norm{e_n^{(w)}}^2
+ \underbrace{
2 \beta_n (e_n^{(w)})^\top
\big(
{A^{(w)}}^\top \nabla f(x_{n-\tau_n^{(w)}})
- {A^{(w)}}^\top \nabla f(x_n)
\big)
}_{(f)} \\
&\quad + c_2 \beta_n \norm{e_n^{(w)}}^2
+ \frac{\beta_n}{c_2} K_1^2 \lambda_n^2.
\end{split}
\end{equation*}

Step (a) uses
\[
e_n^{(w)} = y_n^{(w)} - {A^{(w)}}^\top \nabla f(x_n).
\]
Step (b) follows from
\[
\langle a,b\rangle \le \frac{c}{2}\|a\|^2 + \frac{1}{2c}\|b\|^2, \qquad c>0,
\]
applied to $(e_n^{(w)})^\top B_n$ with $c=c_2$ and $\|B_n\|\le K_1 \lambda_n$.

\textbf{Bound on the term (f)}

Note that
\begin{equation}
\begin{split}
& 2 \beta_n (e_n^{(w)})^\top
\big(
{A^{(w)}}^\top \nabla f(x_{n-\tau_n^{(w)}})
- {A^{(w)}}^\top \nabla f(x_n)
\big) \\
&\le 
\beta_n^2 \norm{e_n^{(w)}}^2
+ \norm{
{A^{(w)}}^\top \nabla f(x_{n-\tau_n^{(w)}})
- {A^{(w)}}^\top \nabla f(x_n)
}^2 \\
&\le 
\beta_n^2 \norm{e_n^{(w)}}^2
+ L^2 \bar{A}^2 \norm{x_{n-\tau_n^{(w)}} - x_n}^2.
\end{split}
\label{eq:f1}
\end{equation}

Further, in view of \eqref{eq:fast time scaleB} we obtain
\begin{equation*}
\begin{split}
\norm{x_{n-\tau_n^{(w)}} - x_n}
&\le \sum_{k=n-\tau_n^{(w)}}^{n-1} \alpha_k \norm{\widetilde{g}(k)} \le K_5 \sum_{k=n-\tau_n^{(w)}}^{n-1} \alpha_k.
\end{split}
\end{equation*}

Using Cauchy--Schwarz $(\sum\limits_{k \geq 1} u_k v_k)^2 \leq \sum\limits_{k \geq} u_k^2  \sum\limits_{k \geq} u_k^2$, we obtain
\begin{equation*}
\begin{split}
\norm{x_{n-\tau_n^{(w)}} - x_n}^2
&\le K_5^2
\left( \sum_{k=n-\tau_n^{(w)}}^{n-1} \alpha_k \right)^2 \le K_5^2 \, \tau_n^{(w)}
\sum_{k=n-\tau_n^{(w)}}^{n-1} \alpha_k^2.
\end{split}
\end{equation*}

\textbf{Bound on the term (d)}

\begin{equation*}
\begin{split}
& \beta_n^2
\mathbb{E}\!\left[
\|\widetilde{h}^{(w)}(n)\|^2
\mid \mathcal{F}_n
\right] \\
&\le 
\beta_n^2 \Big(
K_2 \norm{
{A^{(w)}}^\top \nabla f(x_{n-\tau_n^{(w)}})
- y_n^{(w)}
}^2
+ \frac{K_3}{\lambda_n^2}
+ K_4
\Big) \\
&\le 
\beta_n^2 \Big(
2 K_2 \norm{
{A^{(w)}}^\top \nabla f(x_{n-\tau_n^{(w)}})
- {A^{(w)}}^\top \nabla f(x_n)
}^2
+ 2 K_2 \norm{
{A^{(w)}}^\top \nabla f(x_n) - y_n^{(w)}
}^2 \\
&\qquad\qquad
+ \frac{K_3}{\lambda_n^2}
+ K_4
\Big) \\
&\le 
\beta_n^2 \Big(
2 K_2 L^2 \bar{A}^2 
\norm{x_{n-\tau_n^{(w)}} - x_n}^2
+ 2 K_2 \norm{e_n^{(w)}}^2
+ \frac{K_3}{\lambda_n^2}
+ K_4
\Big) \\
&\le 
\beta_n^2 \Big(
2 K_2 L^2 \bar{A}^2 K_5^2 \tau_n^{(w)}
\sum_{k=n-\tau_n^{(w)}}^{n-1} \alpha_k^2
+ 2 K_2 \norm{e_n^{(w)}}^2
+ \frac{K_3}{\lambda_n^2}
+ K_4
\Big).
\end{split}
\end{equation*}

\textbf{Bound on the term (b)}

\begin{equation}
\begin{split}
& \| {A^{(w)}}^\top \nabla f(x_n) - {A^{(w)}}^\top \nabla f(x_{n+1})\|^2 \\
&\overset{(a)}{\leq} \norm{A^{(w)}}^2 \, \norm{\nabla f(x_n) - \nabla f(x_{n+1})}^2 \\
&\overset{(b)}{\leq} \bar{A}^2 L^2 \, \norm{x_{n+1}-x_n}^2 \\
&\overset{(c)}{\leq} \bar{A}^2 L^2 \, \alpha_n^2 K_5^2.
\end{split}
\label{ssmothprop}
\end{equation}

(a) uses $\|A^\top v\|\le \|A\|\|v\|$. 
(b) uses $L$-smoothness. 
(c) follows from the explicit form \eqref{eq:fast time scaleB}.

\textbf{Bound on the term (c)}

\begin{equation*}
\begin{split}
& 2\big(y_n^{(w)} + \beta_n \widetilde{h}^{(w)}(n) - {A^{(w)}}^\top \nabla f(x_n)\big)^\top
{A^{(w)}}^\top\big(\nabla f(x_n) - \nabla f(x_{n+1})\big) \\
&= 2\big(e_n^{(w)} + \beta_n \widetilde{h}^{(w)}(n)\big)^\top
{A^{(w)}}^\top\big(\nabla f(x_n) - \nabla f(x_{n+1})\big) \\
&\le 
2 \norm{e_n^{(w)}} \,
\norm{{A^{(w)}}^\top(\nabla f(x_n)-\nabla f(x_{n+1}))} \\
&\quad +
\beta_n^2 \norm{\widetilde{h}^{(w)}(n)}^2
+ \norm{{A^{(w)}}^\top(\nabla f(x_n)-\nabla f(x_{n+1}))}^2 \\
&\le 
2 \bar{A} L \norm{e_n^{(w)}} \norm{x_{n+1}-x_n}
+ \beta_n^2 \norm{\widetilde{h}^{(w)}(n)}^2
+ \bar{A}^2 L^2 \norm{x_{n+1}-x_n}^2 \\
&\le 
2 \bar{A} L K_5 \alpha_n \norm{e_n^{(w)}}
+ \beta_n^2 \norm{\widetilde{h}^{(w)}(n)}^2
+ \bar{A}^2 L^2 K_5^2 \alpha_n^2.
\end{split}
\end{equation*}
The first inequality is in view of \[ 2 a b  \leq  a^2 + b^2,
\] and the Cauchy-Schwarz $\inprod{a}{b} \leq \norm{a}\norm{b}$. The last inequality follows from \eqref{ssmothprop}. Combining the bounds for terms~(a), (b), and (c) in~\eqref{abc1} we obtain the result. 
\end{proof}

In the next theorem, we show that the stochastic error sequence 
$
\{\|e_n^{(w)}\|^2\}
$
satisfies an almost supermartingale recursion in the sense of the Robbins--Siegmund theorem. This property allows us to apply the Robbins--Siegmund theorem directly and conclude that
\[
e_n^{(w)} \to 0
\qquad \text{a.s.}
\]
\begin{theorem}
    \begin{enumerate}
        \item 
For each honest worker $w$, the sequence $\{y_n^{(w)}\}$ almost surely tracks the true gradient in the sense that
\[
    \lim_{n \to \infty} \|y_n^{(w)} - {A^{(w)}}^\top \nabla f(x_n)\| = 0 \qquad \text{a.s.}
\]
\item For each honest worker $w$, the sequence $\{\alpha_n \|e_n^{(w)}\|\}$ is summable almost surely:
\[
    \sum_{n \geq 1} \alpha_n \|e_n^{(w)}\| < \infty \qquad \text{a.s.}
\]
    \end{enumerate}
    \label{Corol:grdioent trachking}
\end{theorem}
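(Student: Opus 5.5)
We apply the Robbins--Siegmund theorem (Theorem~\ref{Robbins Siegmund}) to $X_n := \|e_n^{(w)}\|^2$, starting from the one-step inequality \eqref{18eq} of Proposition~\ref{prop:enbound}. The only term that does not already fit that template is the cross term $B_n\|e_n^{(w)}\| = 2\bar A L K_5 \alpha_n \|e_n^{(w)}\|$. We absorb it with Young's inequality in the form
\[
2\bar A L K_5\,\alpha_n \|e_n^{(w)}\| \le \frac{\beta_n}{2}\|e_n^{(w)}\|^2 + \frac{2\bar A^2L^2K_5^2\,\alpha_n^2}{\beta_n}.
\]
The choice $\beta_n/2$ is deliberate: it keeps a negative drift of order $\beta_n$, and it turns the residual into a multiple of $\alpha_n^2/\beta_n$, which is summable by Assumption~\ref{Step-sizeA}.

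Next we fix $c_2 = 1/2$. For all large $n$, since $\beta_n\to 0$, we have
\[
A_n - \tfrac{\beta_n}{2} = \beta_n - (4K_2+1)\beta_n^2 \ge \tfrac{\beta_n}{2}.
\]
Hence
\[
\mathbb E\big[\|e_{n+1}^{(w)}\|^2\mid\mathcal F_n\big]\le \|e_n^{(w)}\|^2 - \tfrac{\beta_n}{2}\|e_n^{(w)}\|^2 + Z_n,
\]
with $Z_n := C_n + D_n + 2\bar A^2L^2K_5^2\alpha_n^2/\beta_n$. The finitely many early indices can be absorbed into the $(1+a_n)$ factor with a summable deterministic $a_n$.

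We then verify $\sum_n Z_n<\infty$ almost surely. The pieces of $C_n$ are summable by Assumption~\ref{Step-sizeA}:
\begin{itemize}[leftmargin=*]
\item $\beta_n^2/\lambda_n^2$ and $\beta_n^2$ are summable directly.
\item $\beta_n\lambda_n^2$ is summable because $\sum_n\beta_n\lambda_n<\infty$ and $\lambda_n\to 0$.
\item $\alpha_n^2$ is summable directly.
\item In the first-order case $K_1=K_3=0$, so only the $\alpha_n^2,\beta_n^2$ terms remain.
\end{itemize}
The delay term $D_n$ is the main obstacle. We need $\sum_n \tau_n^{(w)}\sum_{k=n-\tau_n^{(w)}}^{n-1}\alpha_k^2<\infty$ almost surely; this is Lemma~\ref{lem:delay-summable}, which we would prove here if it is not taken as given. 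Pathwise, $\tau_n^{(w)}\le\tau$ with $\tau<\infty$ almost surely, so the sum is at most $\tau\sum_n\sum_{k=n-\tau}^{n-1}\alpha_k^2$. Exchanging the order of summation (Fubini/Tonelli), each $\alpha_k^2$ appears at most $\tau$ times, so the sum is at most $\tau^2\sum_k\alpha_k^2<\infty$. Taking expectations instead gives the bound $\mathbb E[\tau^2]\sum_k\alpha_k^2$, which is what the rate analysis uses. One caveat: indices $k<0$ for small $n$ must be handled by the convention $x_k=x_0$.

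Robbins--Siegmund now yields two conclusions: $\|e_n^{(w)}\|^2$ converges almost surely to a finite limit, and $\sum_n\beta_n\|e_n^{(w)}\|^2<\infty$ almost surely. Because $\sum_n\beta_n=\infty$, the limit must be $0$, which proves part~1. For part~2, apply Cauchy--Schwarz:
\[
\sum_n\alpha_n\|e_n^{(w)}\| \le \Big(\sum_n \frac{\alpha_n^2}{\beta_n}\Big)^{1/2}\Big(\sum_n\beta_n\|e_n^{(w)}\|^2\Big)^{1/2} < \infty \quad\text{a.s.},
\]
again using $\sum_n\alpha_n^2/\beta_n<\infty$.

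Boundedness of $y_n^{(w)}$ is not needed for any of these steps, since the recursion is written directly in terms of $e_n^{(w)}$.
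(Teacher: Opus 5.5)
Your proposal is correct and follows essentially the paper's route: Robbins--Siegmund applied to $\|e_n^{(w)}\|^2$ via Proposition~\ref{prop:enbound}, with the delay term controlled as in Lemma~\ref{lem:delay-summable} and $\sum_n\beta_n=\infty$ forcing the a.s.\ limit to be zero. The differences are only cosmetic. You absorb the cross term by Young's inequality, so Robbins--Siegmund directly gives $\sum_n\beta_n\|e_n^{(w)}\|^2<\infty$ and part~2 follows by Cauchy--Schwarz, whereas the paper completes the square. Your pathwise delay bound $\tau^2\sum_k\alpha_k^2$ needs only $\tau<\infty$ a.s. Note that, like the paper, you tacitly need $\lambda_n$ bounded to get $\sum_n\beta_n\lambda_n^2<\infty$ from $\sum_n\beta_n\lambda_n<\infty$; this is not literally stated in Assumption~\ref{Step-sizeA}.
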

\begin{proof}
To prove the theorem, we set $c_2 = 1$ and rewrite
Proposition~\ref{prop:enbound} in the form
\begin{equation}
\begin{split}
    \mathbb{E}\!\left[\|e_{n+1}^{(w)}\|^2 \mid \mathcal{F}_n\right]
    \le{}&
    \big(1 + K_2 \beta_n^2\big)\|e_n^{(w)}\|^2
    \\ &\quad
    +
    \underbrace{
    \Big(
    -\beta_n \|e_n^{(w)}\|^2
    +
    2\bar{A} L K_5 \alpha_n \|e_n^{(w)}\|
    \Big)
    }_{\mathbf{T}_4}
    + C_n + D_n .
\end{split}
\label{23er}
\end{equation}

We now analyze the term $\mathbf{T}_4$. The first term is negative and provides a stabilizing effect, whereas the second term acts as a perturbation. To make this structure explicit, we complete the square:
\begin{equation}
\begin{split}
    &-\beta_n \|e_n^{(w)}\|^2
    +
    2\bar{A} L K_5 \alpha_n \|e_n^{(w)}|
    \\[0.2em]
    ={}&
    -\beta_n
    \Bigg(
    \|e_n^{(w)}\|^2
    -
    \frac{2\bar{A} L K_5 \alpha_n}{\beta_n}
    \|e_n^{(w)}\|
    +
    \frac{\bar{A}^2 L^2 K_5^2 \alpha_n^2}{\beta_n^2}
    \Bigg)
    +
    \frac{\bar{A}^2 L^2 K_5^2 \alpha_n^2}{\beta_n}
    \\[0.2em]
    ={}&
    -\beta_n
    \Bigg(
    \|e_n^{(w)}\|
    -
    \frac{\bar{A} L K_5 \alpha_n}{\beta_n}
    \Bigg)^2
    +
    \frac{\bar{A}^2 L^2 K_5^2 \alpha_n^2}{\beta_n}.
\end{split}
\end{equation}

Substituting the above expression into~\eqref{23er}, we obtain
\begin{equation}
\begin{split}
    \mathbb{E}\!\left[\|e_{n+1}^{(w)}\|^2 \mid \mathcal{F}_n\right]
    \le{}&
    \big(1 + K_2 \beta_n^2 \big)\|e_n^{(w)}\|^2
    -
    \beta_n
    \Bigg(
    \|e_n^{(w)}\|
    -
    \frac{\bar{A} L K_5 \alpha_n}{\beta_n}
    \Bigg)^2
    \\ &\quad
    +
    C_n
    +
    \frac{\bar{A}^2 L^2 K_5^2 \alpha_n^2}{\beta_n}
    +
    D_n.
\end{split}
\label{23er1}
\end{equation}

Next, observe that Assumption~\ref{Step-sizeA} implies
\[
\sum_{n \ge 1} \beta_n^2 < \infty,
\]
and
\[
\sum_{n \ge 1}
\left(
C_n
+
\frac{\bar{A}^2 L^2 K_5^2 \alpha_n^2}{\beta_n}
\right)
< \infty .
\]

Therefore, in order to verify the hypotheses of
Theorem~\ref{Robbins Siegmund}, it remains to establish that
\[
\sum_{n \ge 1} D_n < \infty
\qquad \text{a.s.}
\]

By the definition of $D_n$, this reduces to proving that
\begin{equation}
\sum\limits_{n \geq 1}\tau_n^{(w)}
\sum_{k=n-\tau_n^{(w)}}^{n-1}
\alpha_k^2
< \infty \quad \text{a.s.} .
\label{Delay}
\end{equation}

However, \eqref{Delay} holds in view of Lemma \ref{lem:delay-summable}.  

Therefore, by applying Theorem~\ref{Robbins Siegmund}, we obtain:
\begin{itemize}
    \item The sequence $\{\|e_n^{(w)}\|^2\}$ converges almost surely to a finite
    random variable.
    \item Moreover,
    \begin{equation}
        \label{enw}
        \sum_{n \ge 1}
        \beta_n
        \left(
        \|e_n^{(w)}\|
        -
        \frac{L \alpha_n K_5 \bar{A}}{\beta_n}
        \right)^2
        < \infty
        \qquad \text{a.s.}
    \end{equation}
\end{itemize}

\textbf{Proof of Statement 1.}

Since $\sum_{n \ge 1} \beta_n = \infty$, the summability condition in~\eqref{enw}
implies that
\[
    \liminf_{n \to \infty}
    \left(
    \|e_n^{(w)}\|
    -
    \frac{L \alpha_n K_5 \bar{A}}{\beta_n}
    \right)^2
    = 0,
\]
or equivalently,
\[
    \liminf_{n \to \infty} \|e_n^{(w)}\| = 0.
\]
Together with the almost sure convergence of $\|e_n^{(w)}\|$ to a finite random
variable, this yields
\[
    \lim_{n \to \infty} \|e_n^{(w)}\| = 0
    \qquad \text{almost surely}.
\]

\textbf{Proof of Statement -2.}

Observe that
\[
    0 \le
    \beta_n
    \Big(
    \|e_n^{(w)}\|
    -
    \tfrac{L\alpha_n K_5 \bar{A}}{\beta_n}
    \Big)^2.
\]
By the Young--Fenchel inequality,
\[
    \alpha_n \|e_n^{(w)}\|
    \le
    \frac{1}{2}\beta_n \|e_n^{(w)}\|^2
    +
    \frac{\alpha_n^2}{2\beta_n}.
\]
Moreover, using the inequality $(a+b)^2 \le 2(a^2+b^2)$, we obtain
\begin{equation*}
\begin{split}
    \beta_n \|e_n^{(w)}\|^2
    \le{}&
    2\beta_n
    \Bigg[
    \Big(
    \|e_n^{(w)}\|
    -
    \frac{L\alpha_n K_5 \bar{A}}{\beta_n}
    \Big)^2
    +
    \frac{L^2 K_5^2 \bar{A}^2 \alpha_n^2}{\beta_n^2}
    \Bigg].
\end{split}
\end{equation*}
Consequently,
\[
\begin{aligned}
    \sum_{n \ge 1} \alpha_n \|e_n^{(w)}\|
    \le{}&
    \sum_{n \ge 1}
    \frac{1}{2}\beta_n \|e_n^{(w)}\|^2
    +
    \frac{\alpha_n^2}{2\beta_n}
    \\[0.2em]
    \le{}&
    \sum_{n \ge 1}
    \beta_n
    \Big(
    \|e_n^{(w)}\|
    -
    \tfrac{L\alpha_n K_5 \bar{A}}{\beta_n}
    \Big)^2
    +
    \sum_{n \ge 1}
    \frac{\alpha_n^2}{2\beta_n}
    \big(
    1
    +
    2L^2 K_5^2 \bar{A}^2
    \big).
\end{aligned}
\]
By Assumption~\ref{Step-sizeA}, the right-hand side is finite almost surely, which
establishes the claim.
\end{proof}

\begin{lemma}
\label{lem:delay-summable}
\[
\mathbb{E}[\sum_{n \ge 1}
\tau_n^{(w)}
\sum_{k=n-\tau_n^{(w)}}^{n-1}
\alpha_k^2] \leq \mathbb{E}[\tau^2]
\sum_{k \ge 1}
\alpha_k^2
\]
\end{lemma}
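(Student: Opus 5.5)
The plan is to bound the sum pathwise by the dominating variable $\tau$, exchange the order of summation (Tonelli/Fubini for nonnegative terms), and only then take expectations. By Assumption~\ref{ass:delay}, $\tau_n^{(w)}\le\tau$ almost surely for every $n$. The inner sum runs over $k\in\{n-\tau_n^{(w)},\dots,n-1\}\subseteq\{n-\tau,\dots,n-1\}$, and all summands are nonnegative. Hence, pathwise,
\begin{equation*}
\sum_{n\ge1}\tau_n^{(w)}\sum_{k=n-\tau_n^{(w)}}^{n-1}\alpha_k^2
\;\le\;\tau\sum_{n\ge1}\sum_{k=\max(n-\tau,0)}^{n-1}\alpha_k^2 .
\end{equation*}
Here I truncate at $k\ge0$, since iterates with negative index do not exist. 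Equivalently, one may set $\alpha_k=0$ for $k<0$.

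Next I swap the order of the double sum, which Tonelli allows because the terms are nonnegative. A fixed index $k$ appears in the inner sum for those $n$ with $k+1\le n\le k+\tau$, that is, for at most $\tau$ values of $n$. Therefore
\begin{equation*}
\sum_{n\ge1}\sum_{k=\max(n-\tau,0)}^{n-1}\alpha_k^2=\sum_{k\ge0}\alpha_k^2\,\#\{n\ge1:\,k+1\le n\le k+\tau\}\le\tau\sum_{k\ge0}\alpha_k^2 .
\end{equation*}
Combining the two displays, the whole random sum is bounded pathwise by $\tau^2\sum_k\alpha_k^2$.

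Taking expectations, the left side is a nonnegative series, so by monotone convergence (Fubini for nonnegative terms) its expectation equals the sum of the expectations. The right side is $\mathbb{E}[\tau^2]\sum_k\alpha_k^2$, since $\alpha_k$ is deterministic. This yields the claim, with the sum understood over all $k$ for which $\alpha_k$ is defined. The statement indexes from $k\ge1$; the $k=0$ term can be absorbed by the indexing convention or by a harmless constant. The finiteness of $\mathbb{E}[\tau^2]$ and of $\sum\alpha_k^2$ (Assumption~\ref{Step-sizeA}) then gives almost-sure summability, which is what \eqref{Delay} uses.

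The only delicate point is the counting step. The bound $\tau_n^{(w)}\le\tau$ with a single $n$-independent $\tau$ is exactly what makes each $\alpha_k^2$ be counted at most $\tau$ times. With merely $n$-dependent bounds this would fail. I would state this counting argument carefully, together with the convention for indices below zero.
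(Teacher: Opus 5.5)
Your proof is correct and is essentially the paper's argument: both swap the order of summation by Tonelli and use the domination $\tau_n^{(w)}\le\tau$ so that each $\alpha_k^2$ is counted at most $\tau$ times. The paper does the swap inside the expectation via $\sum_{m\ge1}\mathbf{1}_{\{\tau\ge m\}}=\tau$, while you work pathwise, which has the small bonus of giving the almost-sure bound $\tau^2\sum_k\alpha_k^2$ directly; the $k=0$ boundary term you flag is also silently dropped in the paper's rewriting, so your handling of it is if anything more careful.
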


\begin{proof}
Consider the nonnegative random series
\[
\sum_{n \ge 1}
\tau_n^{(w)}
\sum_{k=n-\tau_n^{(w)}}^{n-1}
\alpha_k^2  .
\]
Since all terms are nonnegative, Tonelli's theorem allows us to interchange the order of expectation and summation. Hence,
\begin{equation*}
\begin{split}
&\mathbb{E}\!\left[
\sum_{n \ge 1}
\tau_n^{(w)}
\sum_{k=n-\tau_n^{(w)}}^{n-1}
\alpha_k^2
\right]
\\
={}&
\mathbb{E}\!\left[
\sum_{n \ge 1}
\tau_n^{(w)}
\sum_{k=1}^{\infty}
\alpha_k^2
\mathbf{1}_{\{n-\tau_n^{(w)} \le k \le n-1\}}
\right]
\\
={}&
\sum_{k \ge 1}
\alpha_k^2
\sum_{n=k+1}^{\infty}
\mathbb{E}\!\left[
\tau_n^{(w)}
\mathbf{1}_{\{\tau_n^{(w)} \ge n-k\}}
\right].
\end{split}
\end{equation*}

Now introduce the change of variables
\[
m = n-k.
\]
Then
\begin{equation*}
\begin{split}
&\sum_{k \ge 1}
\alpha_k^2
\sum_{n=k+1}^{\infty}
\mathbb{E}\!\left[
\tau_n^{(w)}
\mathbf{1}_{\{\tau_n^{(w)} \ge n-k\}}
\right]
\\
={}&
\sum_{k \ge 1}
\alpha_k^2
\sum_{m=1}^{\infty}
\mathbb{E}\!\left[
\tau_{k+m}^{(w)}
\mathbf{1}_{\{\tau_{k+m}^{(w)} \ge m\}}
\right].
\end{split}
\end{equation*}

By Assumption~\ref{ass:delay},
\[
\tau_n^{(w)} \le \tau
\qquad \text{a.s.}
\]
for all $n$, where $\tau$ is an integer-valued random variable satisfying
\[
\mathbb{E}[\tau^2] < \infty.
\]
Therefore,
\begin{equation*}
\begin{split}
&\sum_{k \ge 1}
\alpha_k^2
\sum_{m=1}^{\infty}
\mathbb{E}\!\left[
\tau_{k+m}^{(w)}
\mathbf{1}_{\{\tau_{k+m}^{(w)} \ge m\}}
\right]
\\
\le{}&
\sum_{k \ge 1}
\alpha_k^2
\,
\mathbb{E}\!\left[
\tau
\sum_{m=1}^{\infty}
\mathbf{1}_{\{\tau \ge m\}}
\right].
\end{split}
\end{equation*}

Since $\tau$ is integer-valued, we have
\[
\sum_{m=1}^{\infty}
\mathbf{1}_{\{\tau \ge m\}}
=
\tau.
\]
Consequently,
\[
\mathbb{E}\!\left[
\sum_{n \ge 1}
\tau_n^{(w)}
\sum_{k=n-\tau_n^{(w)}}^{n-1}
\alpha_k^2
\right]
\le
\mathbb{E}[\tau^2]
\sum_{k \ge 1}
\alpha_k^2
<
\infty .
\]

Since the random variable
\[
\sum_{n \ge 1}
\tau_n^{(w)}
\sum_{k=n-\tau_n^{(w)}}^{n-1}
\alpha_k^2
\]
is nonnegative and has finite expectation, it follows that it is finite almost surely. This completes the proof.
\end{proof}
To complete the proof of almost sure boundedness of the iterates $\{x_n\}$, it remains to analyze the behavior of the objective sequence $\{f(x_n)\}$. The next lemma shows that $\{f(x_n)\}$ satisfies an almost supermartingale-type recursion.

\begin{lemma}
\label{descentineq}
Under Assumptions~\ref{L-Smooth}, \ref{noise-assump}, and~\ref{fawzi-condn}, the following descent inequality holds:
\[
\begin{aligned}
    \mathbb{E}[f(x_{n+1}) \mid \mathcal{F}_n]
    \leq f(x_n)
    &- \alpha_n \eta \|\nabla f(x_n)\|_1
    + 2\alpha_n |\mathcal{A}^c| \|e_n^{(w)}\|_1
    + \frac{L}{2}\alpha_n^2 K_5^2,
\end{aligned}
\]
where $\eta > 0$ is a constant.
\end{lemma}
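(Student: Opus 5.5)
The plan is to start from the $L$-smoothness descent lemma applied to the update \eqref{eq:fast time scaleB}:
\[
f(x_{n+1}) \le f(x_n) + \alpha_n \nabla f(x_n)^\top \widetilde g(n) + \frac{L}{2}\alpha_n^2\|\widetilde g(n)\|^2 .
\]
Taking conditional expectation given $\mathcal{F}_n$, the bound \eqref{rysh} controls the last term by $\frac{L}{2}\alpha_n^2K_5^2$. By Lemma~\ref{lem:drift_g}, the linear term equals $\alpha_n\nabla f(x_n)^\top\big(g(x_n,y_n)+b(x_n,y_n)\big)$. It therefore suffices to establish two bounds:
\[
\nabla f(x_n)^\top g(x_n,y_n)\le -\eta\|\nabla f(x_n)\|_1,
\qquad
\nabla f(x_n)^\top b(x_n,y_n)\le 2\sum_{w\in\mathcal{A}^c}\|e_n^{(w)}\|_1 .
\]
The latter is then bounded by $2|\mathcal{A}^c|\max_w\|e_n^{(w)}\|_1$, which is how the term $2\alpha_n|\mathcal{A}^c|\|e_n^{(w)}\|_1$ in the statement should be read.

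\textbf{Bias term.} Write $\nabla f(x_n)^\top b = \sum_{w\in\mathcal{A}^c}\sum_j u_j\big(\sign(-y_n^{(w)}(j))-\sign(-u_j)\big)$ with $u_j={a_j^{(w)}}^\top\nabla f(x_n)$. Use the scalar inequality $a(\sign(-b)-\sign(-a))\le 2|a-b|$, verified by cases. If the signs agree, the left side is $0$. If $a$ and $b$ have strictly opposite signs, the left side is $2|a|\le 2|a-b|$. If $b=0$, the left side is $|a| = |a-b|$. Summing over $j$ gives $\|e_n^{(w)}\|_1$ times $2$, since $y_n^{(w)}(j)-u_j=e_n^{(w)}(j)$.

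\textbf{Robust descent term.} Set $v=\nabla f(x_n)$. For honest workers, $v^\top A^{(w)}\sign(-{A^{(w)}}^\top v) = -\|{A^{(w)}}^\top v\|_1$. For adversarial workers, the contribution is at most $+\|{A^{(w)}}^\top v\|_1$, whatever signs they induce. Hence
\[
v^\top g \le -\Big(\sum_{w\in\mathcal{A}^c}\|{A^{(w)}}^\top v\|_1-\sum_{w\in\mathcal{A}}\|{A^{(w)}}^\top v\|_1\Big) =: -\Phi(v).
\]
Assumption~\ref{fawzi-condn} with $S=\mathcal{A}$ gives $\Phi(v)>0$ for $v\neq0$. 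Since $\Phi$ is positively $1$-homogeneous and continuous, its minimum $\eta$ over the compact set $\{\|v\|_1=1\}$ is strictly positive, and so $\Phi(v)\ge\eta\|v\|_1$ for all $v$.

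The main obstacle is this last step: justifying a uniform constant $\eta$ rather than only a pointwise strict inequality. The compactness/homogeneity argument resolves it, and $\eta$ depends only on the dictionaries and on $\mathcal{A}$, not on $f$.

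Combining the three bounds yields the stated inequality.
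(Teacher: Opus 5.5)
Your proposal is correct and follows the paper's proof essentially step for step: you apply the $L$-smoothness descent inequality, use the decomposition $\mathbb{E}[\widetilde g(n)\mid\mathcal F_n]=g+b$ from Lemma~\ref{lem:drift_g}, bound $\nabla f(x_n)^\top g$ via Assumption~\ref{fawzi-condn}, and bound the bias with the scalar inequality $a(\sign(-b)-\sign(-a))\le 2|a-b|$, arriving (as the paper's proof does) at $2\alpha_n\sum_{w\in\mathcal{A}^c}\|e_n^{(w)}\|_1$. Your only additions are a case-by-case proof of that scalar inequality and a homogeneity-plus-compactness argument for the uniform constant $\eta>0$, where the paper instead cites Lemma~2 of \citet{ganesh2023online}.
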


\begin{proof}
Since $f$ is $L$--smooth by Assumption~\ref{L-Smooth}, we have the standard descent inequality
\begin{equation}
    f(x_{n+1})
    \le
    f(x_n)
    +
    \nabla f(x_n)^\top (x_{n+1}-x_n)
    +
    \frac{L}{2}\|x_{n+1}-x_n\|^2 .
    \label{smoothness-1}
\end{equation}

Recall that the global iterate evolves according to
\[
    x_{n+1}
    =
    x_n + \alpha_n \widetilde g(n).
\]
Substituting this update into~\eqref{smoothness-1} gives
\begin{equation*}
\begin{split}
    f(x_{n+1})
    \le{}&
    f(x_n)
    +
    \alpha_n
    \nabla f(x_n)^\top \widetilde g(n)
    +
    \frac{L}{2}\alpha_n^2
    \|\widetilde g(n)\|^2 .
\end{split}
\end{equation*}

Taking conditional expectation with respect to $\mathcal{F}_n$, we obtain
\begin{equation}
\begin{split}
    \mathbb{E}[f(x_{n+1}) \mid \mathcal{F}_n]
    \le{}&
    f(x_n)
    +
    \alpha_n
    \nabla f(x_n)^\top
    \mathbb{E}[\widetilde g(n)\mid \mathcal{F}_n]
    \\
    &\quad
    +
    \frac{L}{2}\alpha_n^2
    \mathbb{E}[\|\widetilde g(n)\|^2 \mid \mathcal{F}_n].
\end{split}
\label{cond11}
\end{equation}

Next, recall that from Lemma \ref{lem:drift_g}
\begin{equation}
    \mathbb{E}[\widetilde g(n)\mid\mathcal{F}_n]
    =
    g(x_n,y_n)
    +
    b(x_n,y_n).
    \label{rmmm}
\end{equation}
Substituting~\eqref{rmmm} into~\eqref{cond11}, we obtain
\begin{equation*}
\begin{split}
    \mathbb{E}[f(x_{n+1}) \mid \mathcal{F}_n]
    \le{}&
    f(x_n)
    +
    \alpha_n
    \nabla f(x_n)^\top g(x_n,y_n)
    \\
    &\quad
    +
    \alpha_n
    \nabla f(x_n)^\top b(x_n,y_n)
    +
    \frac{L}{2}\alpha_n^2 K_5^2 .
\end{split}
\end{equation*}

We now estimate the two drift terms separately.

\medskip

\noindent
\textbf{Step 1: Bound on $\nabla f(x_n)^\top g(x_n,y_n)$.}

Recall from~ Lemma \ref{lem:drift_g} that
\[
    g(x_n,y_n)
    =
    \sum_{w \in \mathcal{A}}
    A^{(w)} \sign(-y_n^{(w)})
    +
    \sum_{w \in \mathcal{A}^c}
    A^{(w)}
    \sign\!\big(-{A^{(w)}}^\top \nabla f(x_n)\big).
\]
Taking inner product with $\nabla f(x_n)$ gives
\begin{equation*}
\begin{split}
    \nabla f(x_n)^\top g(x_n,y_n)
    ={}&
    \sum_{w \in \mathcal{A}}
    \nabla f(x_n)^\top
    A^{(w)}
    \sign(-y_n^{(w)})
    \\
    &\quad
    +
    \sum_{w \in \mathcal{A}^c}
    \nabla f(x_n)^\top
    A^{(w)}
    \sign\!\big(-{A^{(w)}}^\top \nabla f(x_n)\big).
\end{split}
\end{equation*}

Using the identity
\[
a^\top \sign(-a) = -\|a\|_1,
\]
we obtain
\[
\begin{aligned}
    \nabla f(x_n)^\top g(x_n,y_n)
    \le{}&
    \sum_{w \in \mathcal{A}}
    \|{A^{(w)}}^\top \nabla f(x_n)\|_1
    -
    \sum_{w \in \mathcal{A}^c}
    \|{A^{(w)}}^\top \nabla f(x_n)\|_1 .
\end{aligned}
\]

Finally, Assumption~\ref{fawzi-condn} implies
\[
    \nabla f(x_n)^\top g(x_n,y_n)
    \le
    -\eta \|\nabla f(x_n)\|_1 \quad \eta > 0 \quad \text{(see Lemma 2; \cite{ganesh2023online} for the detailed proof)} .
\]

\medskip

\noindent
\textbf{Step 2: Bound on $\nabla f(x_n)^\top b(x_n,y_n)$.}

From~Lemma \ref{lem:drift_g}, the bias term satisfies
\[
    b(x_n,y_n)
    =
    \sum_{w \in \mathcal{A}^c}
    A^{(w)} \sign(-y_n^{(w)})
    -
    \sum_{w \in \mathcal{A}^c}
    A^{(w)}
    \sign\!\big(-{A^{(w)}}^\top \nabla f(x_n)\big).
\]
Therefore,
\begin{equation*}
\begin{split}
    \nabla f(x_n)^\top b(x_n,y_n)
    ={}&
    \sum_{w \in \mathcal{A}^c}
    ({A^{(w)}}^\top \nabla f(x_n))^\top
    \\
    &\quad \times
    \Big(
    \sign(-y_n^{(w)})
    -
    \sign(-{A^{(w)}}^\top \nabla f(x_n))
    \Big).
\end{split}
\end{equation*}

Using the elementary inequality
\[
a\big(\sign(-b)-\sign(-a)\big)
\le
2|b-a|,
\qquad
\forall\, a,b \in \mathbb{R},
\]
coordinatewise, we obtain
\[
\begin{aligned}
    \nabla f(x_n)^\top b(x_n,y_n)
    \le{}&
    2
    \sum_{w \in \mathcal{A}^c}
    \|
    {A^{(w)}}^\top \nabla f(x_n)
    -
    y_n^{(w)}
    \|_1  = \norm{e_n^{(w)}}_1.
\end{aligned}
\]

Combining the bounds obtained in Steps~1 and~2, we arrive at
\[
\begin{aligned}
    \mathbb{E}[f(x_{n+1}) \mid \mathcal{F}_n]
    \le{}&
    f(x_n)
    -
    \alpha_n \eta \|\nabla f(x_n)\|_1
    \\
    &\quad
    +
    2\alpha_n
    \sum_{w \in \mathcal{A}^c}
    \|e_n^{(w)}\|_1
    +
    \frac{L}{2}\alpha_n^2 K_5^2 ,
\end{aligned}
\]
which proves the claim.
\end{proof}

\begin{remark}
Lemma~\ref{descentineq} shows that the objective sequence $\{f(x_n)\}$ satisfies an almost supermartingale-type recursion. Indeed, by Corollary~\ref{Corol:grdioent trachking},
\[
\sum_{n\ge1}
\alpha_n \|e_n^{(w)}\|_1
<
\infty
\qquad \text{a.s.}
\]
for every honest worker $w \in \mathcal{A}^c$. Together with the condition
\[
\sum_{n\ge1} \alpha_n^2 < \infty,
\]
the additional error terms are summable almost surely. Hence, the Robbins--Siegmund theorem applies to the sequence $\{f(x_n)\}$.
\end{remark}

We are now in a position to establish the almost sure boundedness of the global iterates.

\begin{proof}
From Lemma~\ref{descentineq}, we have
\[
\begin{aligned}
    \mathbb{E}[f(x_{n+1}) \mid \mathcal{F}_n]
    \le{}&
    f(x_n)
    -
    \alpha_n \eta \|\nabla f(x_n)\|_1
    \\
    &\quad
    +
    2\alpha_n |\mathcal{A}^c| \|e_n^{(w)}\|_1
    +
    \frac{L}{2}\alpha_n^2 K_5^2 .
\end{aligned}
\]

By Assumption~\ref{Step-sizeA},
\[
\sum_{n\ge1} \alpha_n^2 < \infty.
\]
Moreover, Corollary~\ref{Corol:grdioent trachking} implies that, for each honest worker $w \in \mathcal{A}^c$,
\[
\sum_{n\ge1}
\alpha_n \|e_n^{(w)}\|_1
<
\infty
\qquad \text{a.s.}
\]

Therefore, all positive error terms in the above recursion are almost surely summable. Applying the Robbins--Siegmund theorem (Theorem~\ref{Robbins Siegmund}), we conclude that the sequence $\{f(x_n)\}$ converges almost surely to a finite random variable.

Finally, since $f$ is coercive, every sublevel set of $f$ is bounded. Because the sequence $\{f(x_n)\}$ converges almost surely, it remains almost surely bounded, and consequently the iterates $\{x_n\}$ remain in a bounded sublevel set of $f$. Hence, $\{x_n\}$
is almost surely bounded.

The almost sure boundedness of $\{y_n^{(w)}\}$ now follows directly from Corollary~\ref{Corol:grdioent trachking}.
\end{proof}

\subsection{Proof of Asymptotic Convergence ( Theorem \ref{thm:almost.sure.conv} )}

\textbf{Almost sure convergence: two time-scale analysis.}
In this subsection, we establish the almost sure convergence result using the
two time-scale stochastic recursive inclusion framework of \cite{yaji2020stochastic}.

\begin{proof}[Proof of Theorem~\ref{thm:almost.sure.conv}] The proof of this Theorem follows through the following steps.

\paragraph{Step-1: Rewriting the updates in two--time--scale stochastic approximation form}

To analyze the algorithm using the standard two--time--scale stochastic approximation framework of \cite{yaji2020stochastic}, we first express the recursions \eqref{eq:slow time scaleA} and \eqref{eq:fast time scaleB} in canonical stochastic approximation form. This separates the deterministic drift from the martingale noise and clarifies the limiting dynamics.

\paragraph{Fast time--scale recursion (local iterates).}
For each honest worker \(w\in\mathcal A^c\), the update \eqref{eq:slow time scaleA} can be written as
\begin{equation}
y_{n+1}^{(w)} - y_n^{(w)} - \beta_n M_{n+1}^{(1)}
= \beta_n ( H_1^{(w)}(x_n,y_n) + B_n),
\label{eq:sa_fast_form}
\end{equation}
where \(\{M_{n+1}^{(1)}\}\) is a martingale difference sequence with respect to the filtration \(\{\mathcal F_n\}\).
From \eqref{hwn} and \eqref{C2M}, the drift term is
\begin{equation}
H_1^{(w)}(x_n,y_n)
= {A^{(w)}}^\top \nabla f(x_n) - y_n^{(w)} \quad \text{and} \;  B_n =   {A^{(w)}}^\top \nabla f(x_{n- \tau_n^{(w)}}) - {A^{(w)}}^\top \nabla f(x_n) .
\label{eq:H1_def}
\end{equation}
Note that in view of L-smoothness the bias term $B_n$ satisfies
\begin{equation*}
    \begin{split}
        \norm{B_n} \leq L \Bar{A} \norm{x_n - x_{n-\tau_n^{(w)}}} \leq K_5 \sum_{k=n-\tau_n^{(w)}}^{n-1} \alpha_k. 
    \end{split}
\end{equation*}
Note that in view of \eqref{Delay} $B_n \to 0$ a.s.
The martingale noise satisfies
\begin{equation}
\mathbb{E}\!\left[\|M_{n+1}^{(1)}\|^2 \mid \mathcal F_n\right]
\le (1+K_2)\| y_n - {A^{(w)}}^\top \nabla f(x_{n- \tau_n^{(w)}})\|^2
+ \frac{K_3}{\lambda_n^2}
+ K_4 .
\label{eq:noise_fast}
\end{equation}

\paragraph{Slow time--scale recursion (global iterate).}
The update \eqref{eq:fast time scaleB} can be written as the stochastic recursive inclusion
\begin{equation}
x_{n+1} - x_n - \alpha_n M_{n+1}^{(2)}
\in \alpha_n H_2(x_n,y_n),
\label{eq:sa_slow_form}
\end{equation}
where \(\{M_{n+1}^{(2)}\}\) is a martingale difference sequence.
\\ The mean--field map admits the decomposition (see   Lemma \ref{lem:drift_g})
\begin{equation}
H_2(x_n,y_n)
=
\sum_{w\in\mathcal A} A^{(w)} \xi^{(w)}
+
\sum_{w\in\mathcal A^c}
A^{(w)} \lambda_w(y_n^{(w)}),
\label{eq:H2_def}
\end{equation}
where \(\xi^{(w)}\in[-1,1]^m\) represents arbitrary adversarial actions and
\[
\lambda_w(y)(j)\in
\begin{cases}
\operatorname{sign}\!\big(-{a_j^{(w)}}^\top y\big),
& \text{if } w\in\mathcal A^c \text{ and } {a_j^{(w)}}^\top y\neq 0,
\\[4pt]
[-1,1],
& \text{otherwise}.
\end{cases}
\]

The martingale noise in the slow recursion satisfies
\begin{equation}
\mathbb{E}\!\left[\|M_{n+1}^{(2)}\|^2 \mid \mathcal F_n\right]
\le K,
\qquad K>0.
\label{eq:noise_slow}
\end{equation}

\paragraph{Step-2: Stability of the Iterates} In Theorem \ref{thm:almst boundeness} we prove that $(x_n)$ and $(y_n)$ are almost surely bounded. 
\paragraph{Step-3: Properties of the Noise}
In Lemmas \ref{Corolloary assumption on noise} and \ref{LemmaNoiseSlow}, we establish this property for both the fast and slow time scales.

\paragraph{Fast time--scale noise.}
For the martingale difference sequence \(\{M_{n+1}^{(1)}\}\), we show that for any fixed \(T>0\),
\begin{equation}
\lim_{n\to\infty}
\sup_{\, n\le k\le \tau^1(n,T)}
\left\|
\sum_{m=n}^{k}\beta_m M_{m+1}^{(1)}
\right\|
=0
\qquad \text{a.s.},
\label{eq:noise_fast_vanish}
\end{equation}
where the stopping time \(\tau^1(n,T)\) is defined by
\begin{equation}
\tau^1(n,T)
:=
\min\left\{
m\ge n \;\middle|\;
\sum_{k=n}^{m+1}\beta_k \ge T
\right\}.
\label{eq:tau_fast}
\end{equation}

\paragraph{Slow time--scale noise.}
Similarly, for the martingale difference sequence \(\{M_{n+1}^{(2)}\}\), we prove that for any \(T>0\),
\begin{equation}
\lim_{n\to\infty}
\sup_{\, n\le k\le \tau^2(n,T)}
\left\|
\sum_{m=n}^{k}\alpha_m M_{m+1}^{(2)}
\right\|
=0
\qquad \text{a.s.},
\label{eq:noise_slow_vanish}
\end{equation}
where the stopping time \(\tau^2(n,T)\) is defined as
\begin{equation}
\tau^2(n,T)
:=
\min\left\{
m\ge n \;\middle|\;
\sum_{k=n}^{m+1}\alpha_k \ge T
\right\}.
\label{eq:tau_slow}
\end{equation}
\paragraph{Step-4: Marchaud property of \(H_2(x_n,y_n)\)}

The set-valued map
\[
H_2 : \mathbb{R}^d \times \mathbb{R}^{Nm} \rightrightarrows \mathbb{R}^d,
\]
is a Marchaud map (see Lemma~\ref{lemmarchaud}).
\paragraph{Step-5: Fast time--scale analysis}

We have already established in Theorem~\ref{Corol:grdioent trachking} that the fast iterates \(y_n^{(w)}\) track almost surely \({A^{(w)}}^\top \nabla f(x_n)\).

\paragraph{Step-6: Slow time--scale analysis and limiting dynamics}

We now turn to the evolution of the slow variable \(\{x_n\}\). The tracking result from Step~1 implies that, asymptotically,
\[
y_n^{(w)} \approx {A^{(w)}}^\top \nabla f(x_n),
\qquad \text{for all } w \in \mathcal{A}^c.
\]
Substituting this relation into the slow-time-scale recursion and observing that
\[
b\!\left(x_n,{A^{(w)}}^\top \nabla f(x_n)\right)=0,
\]
the update for \(x_n\) can be expressed as
\[
x_{n+1}
=
x_n
+
\alpha_n\big(g'(x_n)+M_{n+1}^{(2)}\big),
\]
where the (possibly set-valued) drift \(g'(\cdot)\) is given by
\[
g'(x)
:=
\sum_{w \in \mathcal{A}}
A^{(w)}\xi^{(w)}
+
\sum_{w \in \mathcal{A}^c}
A^{(w)}
\operatorname{sign}\!\big(-{A^{(w)}}^\top \nabla f(x)\big),
\]
with \(\xi^{(w)} \in [-1,1]^m\) representing arbitrary adversarial actions.

By construction, \(g'(x)\in H(x)\), where the set-valued map \(H\) is defined as
\[
H(x)
:=
\sum_{w=1}^{N} A^{(w)} \lambda_w,
\]
with
\[
\lambda_w(j)\in
\begin{cases}
\operatorname{sign}\!\big(-{a_j^{(w)}}^\top \nabla f(x)\big),
& \text{if } w\in\mathcal{A}^c \text{ and } {a_j^{(w)}}^\top \nabla f(x)\neq 0,
\\[0.5ex]
[-1,1],
& \text{otherwise}.
\end{cases}
\]

Consequently, the asymptotic behavior of the sequence \(\{x_n\}\) is governed by the ordinary differential inclusion
\begin{equation}
\dot{x}(t)\in H(x(t)).
\label{eqodi}
\end{equation}

Lemma~\ref{Marchaud} shows that \(H\) is a Marchaud map, ensuring well-posedness of \eqref{eqodi}, while Lemma~\ref{ref:odistability} establishes that every Carath\'eodory solution of \eqref{eqodi} converges asymptotically to the  set of stationary points \(\mathcal{X}^\ast\).

Steps~1--6 together verify that the two--time--scale stochastic recursive inclusion satisfies the required conditions for the convergence theory of stochastic approximation \cite{yaji2020stochastic}. Therefore, invoking the stochastic approximation framework of~\cite{yaji2020stochastic} (in particular, Theorem~5.9 therein), we conclude that the iterates \(\{x_n\}\) converge almost surely to \(\mathcal{X}^\ast\).

\end{proof}

To complete the proof of Theorem~\ref{thm:almost.sure.conv}, we need to establsh the following key technical Lemmas.
\begin{lemma}
For any $T>0$,
\[
\lim_{n \to \infty}
\sup_{\, n \le k \le \tau^1(n,T)}
\left\|
\sum_{m=n}^{k} \beta_m M_{m+1}^{(1)}
\right\|
= 0 \quad \text{a.s.},
\]
where
\[
\tau^1(n,T)
:= \min \left\{ m \ge n \;\middle|\; \sum_{k=n}^{m+1} \beta_k \ge T \right\}.
\]
\label{Corolloary assumption on noise}
\end{lemma}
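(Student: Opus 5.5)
The plan is to show that the martingale $\zeta_n := \sum_{m=0}^{n-1} \beta_m M_{m+1}^{(1)}$ converges almost surely. Once it does, it is Cauchy, so
\[
\sup_{n\le k\le \tau^1(n,T)}\Big\|\sum_{m=n}^{k}\beta_m M_{m+1}^{(1)}\Big\| \le \sup_{k\ge n}\|\zeta_{k+1}-\zeta_n\| \to 0 \quad \text{a.s.},
\]
for every $T>0$, independently of how $\tau^1(n,T)$ is defined. Convergence of a square-integrable-type martingale follows from the standard criterion that $\sum_n \beta_n^2\,\mathbb{E}[\|M_{n+1}^{(1)}\|^2\mid\mathcal F_n]<\infty$ a.s. One can use the conditional version of Doob's convergence theorem, or apply Theorem~\ref{Robbins Siegmund} to $\|\zeta_n\|^2$ after a localization/stopping argument. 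Since the conditional variances are only a.s. finite rather than bounded in expectation, I would use the localized form: stop at $\sigma_R=\inf\{n:\sum_{k\le n}\beta_k^2\mathbb{E}[\|M_{k+1}^{(1)}\|^2\mid\mathcal F_k]>R\}$, which is predictable. The stopped martingale is then $L^2$-bounded and converges, and letting $R\to\infty$ gives a.s. convergence.

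It remains to verify the summability using \eqref{eq:noise_fast}:
\[
\beta_n^2\mathbb{E}[\|M_{n+1}^{(1)}\|^2\mid\mathcal F_n]\le (1+K_2)\beta_n^2\|y_n^{(w)}-{A^{(w)}}^\top\nabla f(x_{n-\tau_n^{(w)}})\|^2 + \frac{K_3\beta_n^2}{\lambda_n^2}+K_4\beta_n^2 .
\]
The last two terms are summable by Assumption~\ref{Step-sizeA}: $\sum\beta_n^2/\lambda_n^2<\infty$ and $\sum\beta_n^2<\infty$. In the first-order case $K_3=0$. For the first term, split
\[
\|y_n^{(w)}-{A^{(w)}}^\top\nabla f(x_{n-\tau_n^{(w)}})\|^2\le 2\|e_n^{(w)}\|^2 + 2L^2\bar A^2\|x_n-x_{n-\tau_n^{(w)}}\|^2 .
\]
By Theorem~\ref{Corol:grdioent trachking}, $\|e_n^{(w)}\|\to0$ a.s., so $\sup_n\|e_n^{(w)}\|^2<\infty$ a.s. and $\sum\beta_n^2\|e_n^{(w)}\|^2<\infty$ a.s. For the delay part, $\|x_n-x_{n-\tau_n^{(w)}}\|^2\le K_5^2\tau_n^{(w)}\sum_{k=n-\tau_n^{(w)}}^{n-1}\alpha_k^2$, which is summable a.s. by Lemma~\ref{lem:delay-summable}, even without the extra factor $\beta_n^2\le C$. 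An alternative bound is $\|x_n-x_{n-\tau_n^{(w)}}\|\le K_5\tau\sup_k\alpha_k$, which is finite a.s.

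The main obstacle is that the bound on the conditional variance is random and involves the tracking error and the stale iterate, so there is no deterministic bound. This is why I rely on a.s. boundedness of $e_n^{(w)}$ (Theorem~\ref{Corol:grdioent trachking}) and the delay summability (Lemma~\ref{lem:delay-summable}) together with the localized martingale convergence theorem, rather than a direct $L^2$ bound. A further subtlety is that $M_{n+1}^{(1)}$ is a martingale difference only once the bias $B_n$ from \eqref{eq:H1_def} and the zeroth-order bias are moved into the drift. I would check that the decomposition in \eqref{eq:sa_fast_form} indeed defines $M_{n+1}^{(1)}=\widetilde h^{(w)}(n)-\mathbb{E}[\widetilde h^{(w)}(n)\mid\mathcal F_n]$, so that its conditional second moment is bounded by that of $\widetilde h^{(w)}(n)$, i.e. by \eqref{C2M}.
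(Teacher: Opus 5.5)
Your proposal is correct and takes essentially the paper's route. Almost-sure convergence of $\sum_m \beta_m M_{m+1}^{(1)}$ follows from almost-sure summability of the conditional second moments, using almost-sure boundedness of the tracking error together with $\sum_n\beta_n^2<\infty$ and $\sum_n\beta_n^2/\lambda_n^2<\infty$, and the Cauchy/tail argument then makes the choice of $\tau^1(n,T)$ irrelevant. Your split of $\|y_n^{(w)}-{A^{(w)}}^\top\nabla f(x_{n-\tau_n^{(w)}})\|^2$ into $2\|e_n^{(w)}\|^2$ plus a staleness term, and your localization behind the conditional martingale convergence theorem, fill in details the paper glosses over: the paper invokes boundedness of $e_n^{(w)}$, which is defined at $x_n$ rather than at the stale iterate.
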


\begin{proof}
Define
\[
S_n := \sum_{k=1}^{n} \beta_k M_{k+1}^{(1)}, \qquad n \ge 1 .
\]
Then $\{S_n\}$ is a square-integrable martingale with respect to
$\{\mathcal{F}_n\}$.
\\ Since $S_{n+1}-S_n=\beta_n M_{n+1}^{(1)}$, we have
\begin{equation}
\begin{aligned}
\sum_{n \ge 1}
\mathbb{E}\!\left[
\| S_{n+1} - S_n \|^2 \mid \mathcal{F}_n
\right]
&= \sum_{n \ge 1} \beta_n^2
\mathbb{E}\!\left[
\| M_{n+1}^{(1)} \|^2 \mid \mathcal{F}_n
\right] \\
&\le \sum_{n \ge 1} \beta_n^2
\Bigg(
(1+K_2)
\big\|
{A^{(w)}}^\top \nabla f(x_{n- \tau_n^{(w)}}) - y_n^{(w)}
\big\|^2
+ \frac{K_3}{\lambda_n^2}
+ K_4
\Bigg).
\end{aligned}
\end{equation}
By the almost sure boundedness of
$e_n^{(w)} := {A^{(w)}}^\top \nabla f(x_{n}) - y_n^{(w)}$
and Assumption~\ref{Step-sizeA},
\[
\sum_{n \ge 1} \beta_n^2 < \infty,
\qquad
\sum_{n \ge 1} \frac{\beta_n^2}{\lambda_n^2} < \infty.
\]
Hence,
\[
\sum_{n \ge 1}
\mathbb{E}\!\left[
\| S_{n+1} - S_n \|^2 \mid \mathcal{F}_n
\right]
< \infty
\quad \text{a.s.}
\]
and the martingale convergence theorem implies that $S_n$ converges almost surely.
\\ Define the tail sequence
\[
T_n := \sum_{k=n}^{\infty} \beta_k M_{k+1}^{(1)}, \qquad n \ge 1 .
\]
Then $T_n \to 0$ almost surely. For any $k \ge n$,
\[
\sum_{m=n}^{k} \beta_m M_{m+1}^{(1)}
= T_n - \sum_{m=k+1}^{\infty} \beta_m M_{m+1}^{(1)} .
\]
Taking norms,
\[
\sup_{n \le k \le \tau^1(n,T)}
\left\|
\sum_{m=n}^{k} \beta_m M_{m+1}^{(1)}
\right\|
\le \|T_n\| + \sup_{m \ge n} \|T_m\|.
\]
Letting $n \to \infty$ yields the claim.
\end{proof}

\begin{lemma}
For any $T>0$,
\[
\lim_{n \to \infty}
\sup_{\, n \le k \le \tau^2(n,T)}
\left\|
\sum_{m=n}^{k} \alpha_m M_{m+1}^{(2)}
\right\|
= 0 \quad \text{a.s.},
\]
where
\[
\tau^2(n,T)
:= \min \left\{ m \ge n \;\middle|\; \sum_{k=n}^{m+1} \alpha_k \ge T \right\}.
\]
\label{LemmaNoiseSlow}
\end{lemma}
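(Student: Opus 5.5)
We will argue exactly as in the proof of Lemma~\ref{Corolloary assumption on noise}, replacing $\beta_n$ by $\alpha_n$ and using that the slow-timescale noise is uniformly bounded. First we make the martingale difference explicit: $M_{n+1}^{(2)} := \widetilde g(n) - \mathbb{E}[\widetilde g(n)\mid\mathcal F_n]$, which is $\mathcal F_{n+1}$-measurable and satisfies $\mathbb{E}[M_{n+1}^{(2)}\mid\mathcal F_n]=0$. Since Lemma~\ref{lem:drift_g} gives $\|\widetilde g(n)\|\le K_5 = mN\bar A$ almost surely, Jensen's inequality gives $\|\mathbb{E}[\widetilde g(n)\mid\mathcal F_n]\|\le K_5$. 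Hence $\|M_{n+1}^{(2)}\|\le 2K_5$ almost surely. This yields \eqref{eq:noise_slow} with $K=4K_5^2$, and it does not depend on boundedness of the iterates.

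Next define $S_n := \sum_{k=1}^n \alpha_k M_{k+1}^{(2)}$. This is a square-integrable $\{\mathcal F_{n+1}\}$-martingale with
\[
\sum_{n\ge1}\mathbb{E}\big[\|S_{n+1}-S_n\|^2\mid\mathcal F_n\big]\le 4K_5^2\sum_{n\ge1}\alpha_n^2<\infty
\]
by Assumption~\ref{Step-sizeA}. Because this bound is deterministic, $\sup_n\mathbb{E}\|S_n\|^2<\infty$. The $L^2$ martingale convergence theorem then gives $S_n\to S_\infty$ almost surely, and it can be applied coordinatewise in $\mathbb{R}^d$ if needed.

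Finally set $T_n := \sum_{k\ge n}\alpha_k M_{k+1}^{(2)} = S_\infty - S_{n-1}$, so that $T_n\to0$ almost surely. For every $k\ge n$ we have $\sum_{m=n}^k \alpha_m M_{m+1}^{(2)} = T_n - T_{k+1}$. Therefore
\[
\sup_{n\le k\le\tau^2(n,T)}\Big\|\sum_{m=n}^k\alpha_m M_{m+1}^{(2)}\Big\|\le 2\sup_{m\ge n}\|T_m\|\to0
\]
almost surely. This bound does not depend on $T$, and $\tau^2(n,T)$ is finite because $\sum\alpha_n=\infty$. We do not expect a real obstacle here.

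The only point needing care is the measurability and independence bookkeeping. The random pair $(l,i)$ at step $n$ must be independent of $\mathcal F_n$, as stated in the setup, so that the conditional mean is the one computed in Lemma~\ref{lem:drift_g}. The adversarial outputs enter $x_{n+1}$ only through $\sign(-y_n^{(l)}(i))$, which is $\mathcal F_n$-measurable. Their arbitrariness therefore affects only the mean drift and not the bounded difference, so no assumption on adversarial behaviour is needed.
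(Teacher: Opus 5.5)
Your proof is correct and follows the same route as the paper, which simply reruns the argument of Lemma~\ref{Corolloary assumption on noise} (square-integrable martingale $S_n$, almost-sure convergence, tail sums $T_n\to0$) with $\alpha_n$ and $M_{n+1}^{(2)}$ in place of $\beta_n$ and $M_{n+1}^{(1)}$. You also derive $\|M_{n+1}^{(2)}\|\le 2K_5$ explicitly from Lemma~\ref{lem:drift_g}, which supplies the bound \eqref{eq:noise_slow} that the paper only asserts; because $\sum_n\alpha_n^2<\infty$ is deterministic, this lets you use the unconditional $L^2$ martingale convergence theorem directly.
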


\begin{proof}
The proof follows along the same lines as Lemma~\ref{Corolloary assumption on noise},
with $\alpha_n$ in place of $\beta_n$ and $M_{n+1}^{(2)}$ in place of
$M_{n+1}^{(1)}$.
\end{proof}

\begin{lemma}
    The map $H_2: \mathbb{R}^d \times \mathbb{R}^{m N} \rightrightarrows \mathbb{R}^d$ is Marchaud. 
    \label{lemmarchaud}
\end{lemma}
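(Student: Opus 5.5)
We need to verify the three defining properties of a Marchaud map for $H_2$, defined in \eqref{eq:H2_def}: (i) for every $(x,y)$ the set $H_2(x,y)$ is nonempty, convex and compact; (ii) linear growth, i.e.\ $\sup_{v\in H_2(x,y)}\|v\|\le K(1+\|x\|+\|y\|)$ for some $K>0$; and (iii) upper semicontinuity, equivalently a closed graph. The graph formulation is the most convenient, since $H_2$ is uniformly bounded.

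The plan is to write $H_2(x,y)$ as the image of a product set under a fixed linear map. Set $\Lambda(y)=\prod_{w\in\cA}[-1,1]^m\times\prod_{w\in\cA^c}\prod_{j=1}^m\Sign(-y^{(w)}(j))$. Here I use the componentwise reading in which the honest $j$-th entry is $\Sign$ of $-y^{(w)}(j)$, consistent with the algorithm's update $\sign(-y_n^{(l)}(i))$. Then
\[
H_2(x,y)=\Big\{\textstyle\sum_{w}A^{(w)}\lambda^{(w)}:\lambda\in\Lambda(y)\Big\}.
\]
Each factor $\Sign(z)$ is either a singleton $\{\pm1\}$ or the interval $[-1,1]$, so it is nonempty, convex and compact. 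A finite product of such sets is again nonempty, convex and compact, and its image under the linear map $\lambda\mapsto\sum_w A^{(w)}\lambda^{(w)}$ keeps all three properties. This gives (i).

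For (ii), every $\lambda\in\Lambda(y)$ has entries in $[-1,1]$. Since $\|a_j^{(w)}\|\le\bar A$, every $v\in H_2(x,y)$ satisfies $\|v\|\le mN\bar A$. This uniform bound is stronger than linear growth, so we may take $K=mN\bar A$.

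The main step is (iii). Take $(x_k,y_k)\to(x,y)$ and $v_k\in H_2(x_k,y_k)$ with $v_k\to v$, and choose $\lambda_k\in\Lambda(y_k)$ with $v_k=\sum_w A^{(w)}\lambda_k^{(w)}$. Because $[-1,1]^{mN}$ is compact, a subsequence $\lambda_k\to\lambda$ exists, and continuity of the linear map gives $v=\sum_wA^{(w)}\lambda^{(w)}$. It remains to show $\lambda\in\Lambda(y)$.
\begin{itemize}
\item Adversarial coordinates lie in $[-1,1]$, which is closed, so they pass to the limit.
\item For an honest coordinate with $y^{(w)}(j)\neq0$, continuity gives $y_k^{(w)}(j)$ eventually of the same sign as $y^{(w)}(j)$. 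Then $\lambda_k^{(w)}(j)=\sign(-y^{(w)}(j))$ eventually, and so does its limit.
\item If $y^{(w)}(j)=0$, the target set is $[-1,1]$, which the limit lies in trivially.
\end{itemize}
This is precisely the closed-graph (outer semicontinuity) property of the scalar $\Sign$ map, lifted through the finite product and the linear image. Note that $H_2$ does not depend on $x$ at all, so the $x$-argument causes no difficulty.

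I expect (iii) to be the only point needing care, and only because of the discontinuity of $\sign$ at $0$. The convexification $\Sign(0)=[-1,1]$ is exactly what restores the closed graph, so it must be used explicitly there. Everything else is routine.
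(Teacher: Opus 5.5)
Your proof is correct and follows essentially the same route as the paper, which only says the argument mirrors the proof of Lemma~\ref{Marchaud}. That proof has three parts: convex compact values as a finite Minkowski sum (your linear image of a product set), a uniform bound that gives linear growth, and a closed-graph argument using compactness of $[-1,1]^{mN}$, eventual sign stability at nonzero arguments, and the convexified value $[-1,1]$ at zero. Your componentwise reading $\mathrm{Sign}(-y^{(w)}(j))$ is the sensible repair of the paper's dimensionally inconsistent ${a_j^{(w)}}^\top y$, and the argument works for any continuous scalar argument of the sign anyway.
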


\begin{proof}
    The proof is similar to Lemma \ref{Marchaud} and we skip the details.
\end{proof}

\begin{lemma}
    The set-valued map $H: \mathbb{R}^d \rightrightarrows \mathbb{R}^d$ is a Marchaud map.
    \label{Marchaud}
\end{lemma}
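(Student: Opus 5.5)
We verify the three defining properties of a Marchaud map for
\[
H(x)=\Big\{\sum_{w\in\mathcal A}A^{(w)}\xi^{(w)}+\sum_{w\in\mathcal A^c}A^{(w)}\lambda^{(w)}:\ \xi^{(w)}\in[-1,1]^m,\ \lambda^{(w)}_j\in\Sign\!\big(-{a_j^{(w)}}^\top\nabla f(x)\big)\Big\}:
\]
(i) $H(x)$ is nonempty, convex and compact for every $x$; (ii) linear growth, i.e.\ $\sup_{v\in H(x)}\|v\|\le K(1+\|x\|)$; (iii) $H$ is upper semicontinuous, equivalently its graph is closed.

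\emph{Properties (i) and (ii).} For each $x$, $H(x)$ is the image of a product set under a linear map. Each factor is either $[-1,1]$ (adversarial coordinates, or honest coordinates with ${a_j^{(w)}}^\top\nabla f(x)=0$) or a singleton $\{\pm1\}$. The product is therefore a nonempty compact convex box. Its image under the linear map $(\xi,\lambda)\mapsto\sum_w A^{(w)}(\cdot)$ is nonempty, convex and compact. Every element of the box lies in $[-1,1]^{mN}$, so $\|v\|\le\sum_{w,j}\|a_j^{(w)}\|\le mN\bar A$ for all $v\in H(x)$. This uniform bound gives linear growth trivially.

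\emph{Property (iii): closed graph.} This is the only step that uses the structure of the sign map. Take $x_k\to x$ and $v_k\in H(x_k)$ with $v_k\to v$, and write $v_k=\sum_w A^{(w)}\mu_k^{(w)}$ with $\mu_k\in[-1,1]^{mN}$. By compactness, pass to a subsequence along which $\mu_k\to\mu$; then $v=\sum_wA^{(w)}\mu^{(w)}$ by continuity. It remains to check that $\mu$ is admissible at $x$. Adversarial coordinates only need $\mu\in[-1,1]$, which holds because the interval is closed. For an honest coordinate $(w,j)$, let $s=-{a_j^{(w)}}^\top\nabla f(x)$. If $s=0$, any value in $[-1,1]$ is admissible. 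If $s\neq0$, continuity of $\nabla f$ (Assumption~\ref{L-Smooth}) gives $\operatorname{sign}(-{a_j^{(w)}}^\top\nabla f(x_k))=\operatorname{sign}(s)$ for all large $k$. Hence $\mu_k^{(w)}(j)=\operatorname{sign}(s)$ eventually, and so $\mu^{(w)}(j)=\operatorname{sign}(s)$. Thus $\mu$ is admissible and $v\in H(x)$.

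Behind this is the fact that $z\mapsto\Sign(z)$ has closed graph, composed with the continuous map $x\mapsto-{a_j^{(w)}}^\top\nabla f(x)$. The main point to handle carefully is the passage to a convergent subsequence of the coefficient vectors $\mu_k$; the paragraph above addresses it.

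For Lemma~\ref{lemmarchaud}, the same argument applies to $H_2(x,y)$ with $y^{(w)}$ in place of ${A^{(w)}}^\top\nabla f(x)$. There the dependence on the argument is linear in $y$ rather than through $\nabla f$, and the convergence in the argument is in the product space $\mathbb{R}^d\times\mathbb{R}^{mN}$.
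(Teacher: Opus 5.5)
Your proposal is correct and follows essentially the same route as the paper. It obtains convex compact values from the box/Minkowski-sum structure, uses a uniform bound for boundedness, and proves the closed graph by extracting a convergent subsequence of coefficient vectors, with continuity of $\nabla f$ freezing the sign on honest coordinates where the directional derivative is nonzero. Your bound $\|v\|\le mN\bar A$ is the accurate one; the paper's claim that every coordinate lies in $[-N,N]$ holds for identity dictionaries but not in general, though this does not affect the conclusion.
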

\begin{proof}
We prove the three required properties of \(H\).

\medskip
\noindent\textbf{Step 1: Convexity and compactness.}  
For fixed \(x\), each coordinate set defining \(H(x)\) is convex and compact (either a singleton \(\{\pm1\}\) or the interval \([-1,1]\)). Hence \(H(x)\) is a finite Minkowski sum of convex compact sets, and is therefore convex and compact.

\medskip
\noindent\textbf{Step 2: Boundedness.}  
Every coordinate of any \(h\in H(x)\) lies in \([-N,N]\); thus \(\|h\|_\infty\le N\), and in particular \(\|h\|\) is uniformly bounded.

\medskip
\noindent\textbf{Step 3: Closedness of the graph (upper semicontinuity).}  
Let \(x_n\to x\) and \(h_n\in H(x_n)\) with \(h_n\to h\). By construction there exist vectors \(\lambda_{w,n}\in[-1,1]^m\) such that
\[
h_n=\sum_{w=1}^N A^{(w)}\lambda_{w,n},\qquad n\ge0.
\]
By compactness of \([-1,1]^m\) we may (passing to a subsequence if necessary) assume \(\lambda_{w,n}\to\xi_w\in[-1,1]^m\) for each \(w\). We claim that the limit \(\xi_w\) satisfies the coordinate--wise rules defining \(H(x)\), namely
\[
\xi_w(j)\in
\begin{cases}
\operatorname{sign}\!\big(-{a_j^{(w)}}^\top \nabla f(x)\big), &
w\in\mathcal A^c,\ {a_j^{(w)}}^\top \nabla f(x)\neq0,\\[4pt]
[-1,1], & \text{otherwise}.
\end{cases}
\]
Indeed, if \(w\in\mathcal A^c\) and \({a_j^{(w)}}^\top\nabla f(x)\neq0\), then by continuity of \(\nabla f\) there exists \(N_0\) such that for all \(n\ge N_0\) the sign \(\operatorname{sign}(-{a_j^{(w)}}^\top\nabla f(x_n))\) equals \(\operatorname{sign}(-{a_j^{(w)}}^\top\nabla f(x))\); hence \(\lambda_{w,n}(j)\) is eventually constant and the limit \(\xi_w(j)\) equals that sign. If \({a_j^{(w)}}^\top\nabla f(x)=0\) (or \(w\in\mathcal A\)), then each \(\lambda_{w,n}(j)\in[-1,1]\) and therefore \(\xi_w(j)\in[-1,1]\).  

Thus \(\xi_w\) satisfies the defining coordinate conditions at \(x\), and
\[
h=\lim_{n\to\infty}h_n=\sum_{w=1}^N A^{(w)}\xi_w\in H(x).
\]
Therefore the graph of \(H\) is closed, i.e. \(H\) is upper semicontinuous. This completes the proof.
\end{proof}

\begin{remark}
Lemma~\ref{Marchaud} is crucial for establishing the asymptotic convergence of
\(\{x_n\}\) via \eqref{eqodi}, as it guarantees well--posedness of the limiting
differential inclusion. In particular, for any initial condition
\(x_0\in\mathbb{R}^d\), there exists a Carath\'eodory solution \(x(\cdot)\) that is
absolutely continuous and satisfies \(\dot{x}(t)\in H(x(t))\) almost everywhere.
This existence result enables the asymptotic stability analysis in
Lemma~\ref{ref:odistability}, which ultimately yields convergence $x_n$ to the set of stationary points.
\end{remark}

\begin{lemma}
\label{ref:odistability}
Consider the differential inclusion
\[
    \dot{x}(t) \in H(x(t)),
\]
where $H$ is the set-valued map defined in~\eqref{eqodi}. Then every
Carath\'eodory solution $x(t)$ of this differential inclusion converges
asymptotically to the set of stationary points
\[
    \mathcal{X}^\ast
    :=
    \{ x \in \mathbb{R}^d \mid \nabla f(x) = 0 \}.
\]
\end{lemma}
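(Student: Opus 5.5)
We will use $f$ itself as a Lyapunov function for the differential inclusion $\dot x(t)\in H(x(t))$. The key estimate is the inclusion-level analogue of Step~1 in the proof of Lemma~\ref{descentineq}: there is $\eta>0$ such that
\[
\nabla f(x)^\top v \le -\eta\|\nabla f(x)\|_1 \qquad \text{for all } x\in\mathbb{R}^d,\ v\in H(x).
\]
To establish it, write $v=\sum_{w}A^{(w)}\lambda_w$ and put $u_w:={A^{(w)}}^\top\nabla f(x)$. Then $\nabla f(x)^\top v=\sum_w u_w^\top\lambda_w$.

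For $w\in\mathcal A$ we have $\lambda_w\in[-1,1]^m$, so $u_w^\top\lambda_w\le\|u_w\|_1$. For $w\in\mathcal A^c$, every coordinate with $u_w(j)\neq0$ has $\lambda_w(j)=\sign(-u_w(j))$. The coordinates with $u_w(j)=0$ contribute nothing whatever $\lambda_w(j)\in[-1,1]$ is. Hence $u_w^\top\lambda_w=-\|u_w\|_1$ exactly. Summing,
\[
\nabla f(x)^\top v\le \sum_{w\in\mathcal A}\|u_w\|_1-\sum_{w\in\mathcal A^c}\|u_w\|_1 .
\]
Assumption~\ref{fawzi-condn} with $S=\mathcal A$ makes the right side negative whenever $\nabla f(x)\neq0$. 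The uniform constant $\eta$ comes from homogeneity and compactness, exactly as cited in Lemma~\ref{descentineq} from \cite{ganesh2023online}. The function $z\mapsto\sum_{\mathcal A^c}\|{A^{(w)}}^\top z\|_1-\sum_{\mathcal A}\|{A^{(w)}}^\top z\|_1$ is continuous, positive on the compact $\ell_1$ unit sphere, and positively homogeneous of degree one. So it is bounded below by $\eta\|z\|_1$ with $\eta>0$ its minimum on that sphere.

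Next let $x(\cdot)$ be a Carathéodory solution. It is absolutely continuous, and $\|\dot x\|$ is bounded because $H$ is uniformly bounded (Lemma~\ref{Marchaud}). Since $f\in C^2$, the composition $f\circ x$ is absolutely continuous, with $\tfrac{d}{dt}f(x(t))=\nabla f(x(t))^\top\dot x(t)\le-\eta\|\nabla f(x(t))\|_1$ for a.e.\ $t$. Thus $f(x(t))$ is nonincreasing. Then $x(t)$ stays in the sublevel set $\{f\le f(x(0))\}$, which is compact by coercivity, and $f$ is bounded below there. Integrating gives
\[
\int_0^\infty\|\nabla f(x(t))\|_1\,dt\le \frac{f(x(0))-\inf f}{\eta}<\infty .
\]

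The main obstacle is upgrading this integrability to $\operatorname{dist}(x(t),\mathcal X^\ast)\to0$; convergence only along a subsequence would not suffice. We will argue by contradiction. Suppose there exist $\varepsilon>0$ and $t_k\to\infty$ with $\operatorname{dist}(x(t_k),\mathcal X^\ast)\ge\varepsilon$. By compactness of the sublevel set and continuity of $\nabla f$, there is $\delta>0$ with $\|\nabla f(y)\|_1\ge\delta$ whenever $y$ lies in that set and $\operatorname{dist}(y,\mathcal X^\ast)\ge\varepsilon/2$. Because $\|\dot x\|\le M$, on each interval $[t_k,t_k+\varepsilon/(2M)]$ the trajectory stays at distance at least $\varepsilon/2$ from $\mathcal X^\ast$. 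Passing to a subsequence, these intervals are disjoint, and each contributes at least $\delta\varepsilon/(2M)$ to the integral above. This contradicts finiteness, so every solution converges to $\mathcal X^\ast$.

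Finally, we note the consequence needed by the framework of \cite{yaji2020stochastic}. Any compact invariant set $\Lambda$ must lie in $\mathcal X^\ast$. Indeed, $f$ is strictly decreasing along solutions off $\mathcal X^\ast$, while on $\Lambda$ it is bounded below and every point lies on a complete trajectory that stays in $\Lambda$. Applying the same LaSalle-type argument to such a trajectory gives $\Lambda\subseteq\mathcal X^\ast$.
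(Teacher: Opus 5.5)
Your proof of the lemma is correct, but it reaches the conclusion by a different route than the paper. Both arguments rest on the same estimate $\nabla f(x)^\top v\le-\eta\|\nabla f(x)\|_1$ for $v\in H(x)$. You derive the uniform $\eta$ by homogeneity and compactness, whereas the paper cites Ganesh et al.

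From there the paper works through the set-valued Lie derivative of $V=f-f^\ast$. It uses Proposition~10 of Cort\'es to get that $f(x(t))$ is nonincreasing. It then applies the invariance principle for differential inclusions (Theorem~4 there) on the compact, positively invariant sublevel set, and concludes because $0\in\widetilde{\mathcal L}V(x)$ exactly when $\nabla f(x)=0$.

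You replace both citations with elementary steps. Since $f\in C^1$ and $x(\cdot)$ is absolutely continuous, the ordinary chain rule gives $\frac{d}{dt}f(x(t))\le-\eta\|\nabla f(x(t))\|_1$ a.e., hence $\int_0^\infty\|\nabla f(x(t))\|_1\,dt<\infty$. The uniform bound on $H$ then converts this integrability into $\operatorname{dist}(x(t),\mathcal X^\ast)\to0$ by a Barbalat-type contradiction. Your version is self-contained and yields a quantitative byproduct, but it relies on the strict quantitative decrease and the global velocity bound. The paper's route needs only $\sup\widetilde{\mathcal L}V\le0$ and the zero set of the Lie derivative, at the price of invoking nonsmooth LaSalle machinery.

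Your closing paragraph, however, is wrong and should be dropped or corrected: a compact invariant set of $\dot x\in H(x)$ need not lie in $\mathcal X^\ast$. Take $d=m=1$, $N=3$, $A^{(w)}=1$ for all $w$, one adversary, and $f(x)=x^2/2+2\cos x$. This $f$ is $C^2$, coercive and $3$-smooth, with critical points $0,\pm x_1$ ($x_1\approx1.9$) and $f'<0$ on $(0,x_1)$. Then $H(x)=[1,3]$ on $(0,x_1)$ and $H(0)=H(x_1)=[-3,3]$. So $[0,x_1]$ is compact and invariant: each point lies on a complete solution that rests at $0$ in the past and at $x_1$ in the future. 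Yet $[0,x_1]$ is not contained in $\mathcal X^\ast$.

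Running your argument on a complete trajectory only places its $\alpha$- and $\omega$-limit sets in $\mathcal X^\ast$. What the two-timescale framework actually delivers is an internally chain transitive limit set. Confining such a set to $\mathcal X^\ast$ requires an additional hypothesis, such as $f(\mathcal X^\ast)$ having empty interior. None of this affects your proof of the lemma as stated.
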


\begin{proof}
We analyze the asymptotic behavior of the differential inclusion using a
Lyapunov argument. Define
\[
    V(x) := f(x) - f^\ast .
\]
The set-valued Lie derivative of $V$ with respect to the inclusion
$\dot{x} \in H(x)$ is
\[
    \widetilde{\mathcal{L}} V(x)
    :=
    \big\{
    a \in \mathbb{R}
    \;\big|\;
    \exists\, v \in H(x)
    \text{ such that }
    a = \nabla V(x)^\top v
    \big\}.
\]

Fix any $a \in \widetilde{\mathcal{L}} V(x)$. Then there exists $v \in H(x)$ such
that
\[
    a = \nabla f(x)^\top v .
\]
By definition of $H$, the vector $v$ admits the representation
\[
    v
    =
    \sum_{w=1}^{N} \sum_{j=1}^m a_j^{(w)} \lambda_w(j),
\]
which yields
\[
\begin{aligned}
    a
    &=
    \sum_{w \in \mathcal{A}^c}
    \sum_{j=1}^m
    \nabla f(x)^\top a_j^{(w)} \lambda_w(j)
    +
    \sum_{w \in \mathcal{A}}
    \sum_{j=1}^m
    \nabla f(x)^\top a_j^{(w)} \lambda_w(j).
\end{aligned}
\]

For honest workers $w \in \mathcal{A}^c$, we have
\[
    \lambda_w(j)
    =
    \operatorname{sign}\!\big(-{a_j^{(w)}}^\top \nabla f(x)\big),
\]
while for adversarial workers $w \in \mathcal{A}$, $\lambda_w(j) \in [-1,1]$. It
therefore follows that
\begin{equation}
\begin{split}
    a
    &\le
    - \sum_{w \in \mathcal{A}^c}
    \|{A^{(w)}}^\top \nabla f(x)\|_1
    +
    \sum_{w \in \mathcal{A}}
    \|{A^{(w)}}^\top \nabla f(x)\|_1
    \\
    &\le
    -\eta \|\nabla f(x)\|_1
    \le 0,
\end{split}
\label{eq:lie-derivative-bound}
\end{equation}
where $\eta>0$ is the constant defined earlier. Consequently,
\[
    \sup_{a \in \widetilde{\mathcal{L}} V(x)} a \le 0.
\]

By Proposition~10 of~\cite{cortes2008discontinuous1}, this implies that
\begin{equation}
    \frac{d}{dt} f(x(t)) \le 0
    \quad \text{for almost all } t \ge 0,
    \label{eq:monotone-f}
\end{equation}
that is, the function $f(x(t))$ is non-increasing along any Carath\'eodory
solution.

Let $f(x(0)) = c$ and define the sublevel set
\[
    S := \{ x \in \mathbb{R}^d \mid f(x) \le c \}.
\]
By~\eqref{eq:monotone-f}, the set $S$ is positively invariant. Moreover, $S$ is
compact by Assumption~\ref{L-Smooth}. We may therefore invoke the invariance
principle for differential inclusions (Theorem~4 of~\cite{cortes2008discontinuous1})
to conclude that every Carath\'eodory solution converges asymptotically to the
largest invariant set contained in
\[
    S \cap \overline{\{ x \in \mathbb{R}^d \mid 0 \in \widetilde{\mathcal{L}} V(x) \}}.
\]

Finally, from~\eqref{eq:lie-derivative-bound} we see that
$0 \in \widetilde{\mathcal{L}} V(x)$ if and only if $\nabla f(x) = 0$, which implies
$x \in \mathcal{X}^\ast$. This completes the proof.
\end{proof}

\begin{remark}
Combining Proposition~\ref{prop:enbound} and Corollary~\ref{Corol:grdioent trachking} with the Robbins--Siegmund argument, we have shown that the iterates are almost surely bounded and that the fast variables \(y_n^{(w)}\) track the moving equilibria \({A^{(w)}}^\top\nabla f(x_n)\).  Using this tracking together with Lemmas~\ref{Marchaud} and \ref{ref:odistability} (ODE/ODI stability) we concluded in Theorem~\ref{thm:almost.sure.conv} that every limit point of \(\{x_n\}\) lies in the stationary set \(\mathcal{X}^\ast\).  

The two key mechanisms behind this result are (i) time--scale separation, which allows the fast dynamics to equilibrate around the directional gradients, and (ii) the ODI formulation, which handles non-smooth and adversarial contributions via a Marchaud map and an invariance principle.  

In the next section we quantify these statements: we derive explicit  convergence rates that depend explicitly on the step-size schedules and problem parameters.
\end{remark}

\section{Proof of Convergence Rate (Theorem \ref{thm:rate_optimalfirstorder} and Theorem \ref{thm:rate_optimal})}

In this section, we establish the main convergence theorem for the two--time--scale stochastic recursion defined by \eqref{eq:slow time scaleA} and \eqref{eq:fast time scaleB}, where the stochastic drift terms satisfy \eqref{hwn}, \eqref{C2M}, \eqref{Tgn}, and \eqref{rysh}. Subsequently, using the constants summarized in Lemma \ref{lem:approximated gradientf}, Lemma \ref{lem:approximated gradient}, and Lemma \ref{lem:drift_g} we derive the convergence rates for the synchronous and asynchronous settings, covering both first-order and zeroth-order cases.

\subsection{Tracking Rate}
\begin{proposition}
\label{propoenbound 1}
Suppose Assumptions~\ref{L-Smooth}, \ref{noise-first order},
\ref{ass:delay}, and~\ref{fawzi-condn} hold. Assume further that the step-size sequence $\{\beta_n\}$ satisfies
\[
\beta_n
<
\frac{1}{2(4K_2+1)}
\qquad
\forall\, n \in \mathbb{N}.
\]
Then the stochastic error sequence $\{e_n^{(w)}\}$ for each honest worker $w \in \mathcal{A}^c$ satisfies the following conditional mean-square bound:
\begin{equation}
\begin{split}
\mathbb{E}\!\left[
\|e_{n+1}^{(w)}\|^2
\mid
\mathcal{F}_n
\right]
\le{}&
\big(1-\beta_n\big)
\|e_n^{(w)}\|^2
+
R_n
+
D_n ,
\end{split}
\label{18eq1}
\end{equation}
where
\[
R_n
:=
C_n
+
\frac{B_n^2}{\beta_n},
\]
with
\[
B_n
:=
2\bar{A} L K_5 \alpha_n,
\]
and the quantities $C_n$ and $D_n$ are defined in Proposition~\ref{prop:enbound}.
\end{proposition}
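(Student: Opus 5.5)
The plan is to start from Proposition~\ref{prop:enbound}, which already gives
\[
\mathbb{E}\!\left[\|e_{n+1}^{(w)}\|^2 \mid \mathcal{F}_n\right]
\le (1-A_n)\|e_n^{(w)}\|^2 + B_n\|e_n^{(w)}\| + C_n + D_n,
\]
with $A_n = 2\beta_n - c_2\beta_n - 4K_2\beta_n^2 - \beta_n^2$. The work then reduces to two steps: absorbing the linear term $B_n\|e_n^{(w)}\|$ into the quadratic term, and choosing $c_2$ so that the remaining contraction factor is at most $1-\beta_n$. Nothing about the noise model needs to be revisited, since the constants $K_1,\dots,K_5$ already encode both the first-order and the zeroth-order cases.

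\emph{Step 1 (linear term).} Apply Young's inequality in the form $B_n\|e_n^{(w)}\| \le \frac{\beta_n}{4}\|e_n^{(w)}\|^2 + \frac{B_n^2}{\beta_n}$, which holds because $ab \le \frac{\epsilon}{2}a^2 + \frac{1}{2\epsilon}b^2$ with $\epsilon=\beta_n/2$. The term $B_n^2/\beta_n$ is exactly the extra piece placed in $R_n = C_n + B_n^2/\beta_n$, so the recursion becomes
\[
\mathbb{E}\!\left[\|e_{n+1}^{(w)}\|^2 \mid \mathcal{F}_n\right]
\le \Bigl(1 - A_n + \tfrac{\beta_n}{4}\Bigr)\|e_n^{(w)}\|^2 + C_n + \tfrac{B_n^2}{\beta_n} + D_n .
\]

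\emph{Step 2 (contraction coefficient).} Take $c_2 = 1/2$. Then
\[
1 - A_n + \tfrac{\beta_n}{4} = 1 - \tfrac{5}{4}\beta_n + (4K_2+1)\beta_n^2 .
\]
The hypothesis $\beta_n < \frac{1}{2(4K_2+1)}$ gives $(4K_2+1)\beta_n^2 < \beta_n/2$, which only yields the factor $1 - \tfrac{3}{4}\beta_n$. To reach $1-\beta_n$, I would redistribute the constants: use $c_2 = 1/4$ and weight Young with $\beta_n/4$. This gives $1 - \tfrac{7}{4}\beta_n + \tfrac{1}{4}\beta_n + (4K_2+1)\beta_n^2 \le 1 - \tfrac{3}{2}\beta_n + \tfrac{1}{2}\beta_n = 1-\beta_n$, which is the required bound.

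Changing $c_2$ changes only the constant in front of $\beta_n K_1^2\lambda_n^2$ inside $C_n$, and the factor $1/4$ in Young only rescales $B_n^2/\beta_n$. Both changes are absorbed into the definitions of $C_n$ and $R_n$ up to harmless constant factors. If one insists on literally the $C_n$ of Proposition~\ref{prop:enbound}, which uses $1/c_2$, I will note that the statement is understood up to these constants. Alternatively, I could re-derive the bound on term (e) with $c_2 = 1/4$ directly.

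\emph{Main obstacle.} The only delicate point is this bookkeeping of the numerical constants so that the coefficient comes out as exactly $1-\beta_n$ under the stated step-size threshold; everything else is inherited verbatim from Proposition~\ref{prop:enbound}. I would first try $c_2 = 1/4$ with Young weight $\beta_n/4$. If the $\beta_n^2$ coefficient turns out larger than $4K_2+1$ (for example because of the additional $\beta_n^2$ from term (f)), I would tighten Young's split further, keeping the threshold on $\beta_n$ as the device that kills the quadratic terms.
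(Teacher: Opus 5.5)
Your final argument is correct and is essentially the paper's proof: Young's inequality on $B_n\|e_n^{(w)}\|$ with weight $\beta_n/4$ yields exactly $B_n^2/\beta_n$, and with $c_2=1/4$ the threshold $(4K_2+1)\beta_n<1/2$ makes the coefficient at most $1-\beta_n$. Your tally $1-\tfrac32\beta_n+(4K_2+1)\beta_n^2$ is more careful than the paper's, whose displayed coefficient drops one of the two $c_2\beta_n$ contributions, though the conclusion holds either way; your ``up to constants'' hedge is unnecessary, since $C_n$ in Proposition~\ref{prop:enbound} is defined for arbitrary $c_2>0$, so $c_2=1/4$ reproduces it literally.
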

\begin{proof}
From Proposition~\ref{prop:enbound}, for each honest worker
$w \in \mathcal{A}^c$, we have
\begin{equation}
\begin{split}
\mathbb{E}\!\left[
\|e_{n+1}^{(w)}\|^2
\mid
\mathcal{F}_n
\right]
\le{}&
\big(1-A_n\big)\|e_n^{(w)}\|^2
+
B_n \|e_n^{(w)}\|
+
C_n
+
D_n ,
\end{split}
\label{proofc1}
\end{equation}
where
\[
A_n
:=
2\beta_n
-
c_2\beta_n
-
4K_2\beta_n^2
-
\beta_n^2,
\qquad c_2>0,
\]
\[
B_n
:=
2\bar{A} L K_5 \alpha_n,
\]
\[
C_n
:=
\frac{2K_3\beta_n^2}{\lambda_n^2}
+
\frac{\beta_n}{c_2}K_1^2\lambda_n^2
+
2\alpha_n^2 L^2 \bar{A}^2 K_5^2
+
2K_4\beta_n^2,
\]
and
\[
\begin{aligned}
D_n
:={}
4K_2L^2\bar{A}^2K_5^2
\beta_n^2
\tau_n^{(w)}
\sum_{k=n-\tau_n^{(w)}}^{n-1}
\alpha_k^2
+
L^2\bar{A}^2K_5^2
\tau_n^{(w)}
\sum_{k=n-\tau_n^{(w)}}^{n-1}
\alpha_k^2 .
\end{aligned}
\]

The only term on the right-hand side of~\eqref{proofc1} that is linear in
$\|e_n^{(w)}\|$ is
\[
B_n \|e_n^{(w)}\|.
\]
We control this term using Young's inequality. Specifically, for any $c_2>0$,
\[
ab
\le
c_2\beta_n a^2
+
\frac{b^2}{4c_2\beta_n},
\qquad
a,b \ge 0.
\]
Applying this inequality with
\[
a=\|e_n^{(w)}\|,
\qquad
b=B_n,
\]
yields
\[
B_n \|e_n^{(w)}\|
\le
c_2\beta_n \|e_n^{(w)}\|^2
+
\frac{B_n^2}{4c_2\beta_n}.
\]

Substituting the above estimate into~\eqref{proofc1}, we obtain
\begin{equation*}
\begin{split}
\mathbb{E}\!\left[
\|e_{n+1}^{(w)}\|^2
\mid
\mathcal{F}_n
\right]
\le{}&
\Big(
1
-
2\beta_n
+
c_2\beta_n
+
4K_2\beta_n^2
+
\beta_n^2
\Big)
\|e_n^{(w)}\|^2
\\
&\quad
+
\frac{B_n^2}{4c_2\beta_n}
+
C_n
+
D_n .
\end{split}
\end{equation*}

Now choose
\[
c_2=\frac14.
\]
Then
\[
1
-
2\beta_n
+
c_2\beta_n
+
4K_2\beta_n^2
+
\beta_n^2
=
1
-
\beta_n
+
\beta_n
\Big(
-\frac34
+
(4K_2+1)\beta_n
\Big).
\]

Since
\[
\beta_n
<
\frac{1}{2(4K_2+1)},
\]
it follows that
\[
-\frac34
+
(4K_2+1)\beta_n
<
-\frac14.
\]
Consequently,
\[
1
-
2\beta_n
+
c_2\beta_n
+
4K_2\beta_n^2
+
\beta_n^2
\le
1-\beta_n.
\]

Finally, observing that
\[
\frac{B_n^2}{4c_2\beta_n}
=
\frac{B_n^2}{\beta_n},
\]
we conclude that
\[
\mathbb{E}\!\left[
\|e_{n+1}^{(w)}\|^2
\mid
\mathcal{F}_n
\right]
\le
(1-\beta_n)\|e_n^{(w)}\|^2
+
R_n
+
D_n,
\]
where
\[
R_n
=
C_n
+
\frac{B_n^2}{\beta_n}.
\]
This completes the proof.
\end{proof}

\begin{proposition}[Finite-horizon bound for the tracking error]
\label{thm:finite_horizon_alpha_beta_lambda}
Under the assumptions of Theorem~\ref{thm:rate_optimal}, let
\[
\beta_n=\frac{\beta}{(1+n)^b},\qquad
\alpha_n=\frac{\alpha}{(1+n)^a},\qquad
\lambda_n=\frac{\lambda}{(1+n)^p},
\]
where $\alpha,\beta,\lambda>0$ and $a,b,p>0$. Then, for every honest worker
$w\in\mathcal{A}^c$  the following holds.

In particular,
\[
\mathbb{E}\!\left[\|e_{n+1}^{(w)}\|^2\right]
\le
\Bigg(\prod_{k=0}^{n}(1-\beta_k)\Bigg)\mathcal{E}
+
\sum_{j=1}^5
\frac{3 C_j 2^{q_j}(n+1)^b}{\beta (n+2)^{q_j}}
+
o\!\left(\frac{1}{(n+1)^r}\right),
\qquad \forall\, r>0.
\]
\end{proposition}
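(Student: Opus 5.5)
The plan is to take total expectations in the one-step recursion of Proposition~\ref{propoenbound 1}, unroll it, and then bound each resulting weighted sum with Proposition~\ref{prop:Sn_bound}.

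\textbf{Step 1 (unrolling).} Setting $u_n:=\mathbb{E}\|e_n^{(w)}\|^2$, Proposition~\ref{propoenbound 1} gives
\[
u_{n+1}\le(1-\beta_n)u_n+R_n+\mathbb{E}[D_n].
\]
Since $\beta_n<1$ for all $n$, every factor $1-\beta_k$ is nonnegative, so induction yields
\[
u_{n+1}\le\Big(\prod_{k=0}^n(1-\beta_k)\Big)\mathcal{E}+\sum_{i=0}^n\Big(\prod_{k=i+1}^n(1-\beta_k)\Big)\big(R_i+\mathbb{E}[D_i]\big),
\]
with $\mathcal{E}:=\mathbb{E}\|e_0^{(w)}\|^2$.

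\textbf{Step 2 (polynomial form of $R_n$).} Next I write $R_n$ as a sum of pure power terms. Substituting the power-law step sizes into $C_n$ and $B_n^2/\beta_n$ gives five terms of the form $C_j(1+n)^{-q_j}$:
\begin{itemize}
\item $2K_3\beta^2\lambda^{-2}(1+n)^{-(2b-2p)}$,
\item $4\beta K_1^2\lambda^2(1+n)^{-(b+2p)}$ (using $c_2=1/4$),
\item $2L^2\bar A^2K_5^2\alpha^2(1+n)^{-2a}$,
\item $2K_4\beta^2(1+n)^{-2b}$,
\item $4\bar A^2L^2K_5^2\alpha^2\beta^{-1}(1+n)^{-(2a-b)}$.
\end{itemize}
Because the recursion is linear, each term can be handled separately. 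Applying Proposition~\ref{prop:Sn_bound} with $C_b=\beta$, $P_i=C_j(1+i)^{-q_j}$ and $p=q_j$ gives
\[
\frac{3C_j2^{q_j}(n+1)^b}{\beta(n+2)^{q_j}}+o\big((n+1)^{-r}\big)
\]
for each $j$. A finite sum of $o((n+1)^{-r})$ remainders is again $o((n+1)^{-r})$.

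\textbf{Step 3 (the delay term).} The delay term is the step I expect to be the main obstacle, because $\mathbb{E}[D_i]$ is not a pointwise power law. I would first use $\tau_i^{(w)}\le\tau$ and monotonicity of $\alpha_k$ to bound
\[
\tau_i^{(w)}\sum_{k=i-\tau_i^{(w)}}^{i-1}\alpha_k^2\le\tau^2\,\alpha^2\,(1+i-\tau)_+^{-2a}.
\]
On the event $\{\tau\le i/2\}$ this is at most $\tau^2\alpha^2 2^{2a}(1+i)^{-2a}$, so this part has expectation $O(\mathbb{E}[\tau^2](1+i)^{-2a})$ and is absorbed into the $q=2a$ term by enlarging its constant. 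On the complement $\{\tau>i/2\}$, Markov's inequality applied to $\tau^2$ only yields an $O(i^{-2})$-type tail, not a super-polynomially small one. Two fallbacks are available. If stronger moments of $\tau$ can be assumed (for instance bounded delays), this piece becomes negligible. Otherwise I would keep it as an extra polynomial term of order $i^{-2}$ and note that it is dominated, since $2>2a$ in the relevant range $a\le1$. In either case it is absorbed into the listed terms, possibly after adjusting the constants $C_j$. An alternative route would be the Fubini argument of Lemma~\ref{lem:delay-summable} combined with the fact that the weights $\prod_{k=i+1}^n(1-\beta_k)$ are at most one. However, that route gives only boundedness, not a rate, so I would use the pointwise route above.

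\textbf{Step 4 (assembling).} Summing the five bounds from Step 2 together with the absorbed delay contribution and the initial-condition product gives the displayed inequality.
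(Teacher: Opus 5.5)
Your Steps 1, 2 and 4 are the paper's proof: take total expectations in Proposition~\ref{propoenbound 1}, unroll, split $\widetilde R_i$ into the five power laws of Table~\ref{tab:constants_tracking_error}, and apply Proposition~\ref{Prop:Bigo} term by term. The only difference is Step 3. The paper simply asserts $\mathbb{E}[D_n]\le C_D\,\mathbb{E}[\tau^2]\,\alpha_n^2$ and folds it into $C_3$. You try to prove this, and the argument breaks on $\{\tau>i/2\}$. There your bound on the integrand is of order $\tau^2\alpha^2$, so what you need is $\mathbb{E}[\tau^2\mathbf{1}_{\{\tau>i/2\}}]$, not $\mathbb{P}(\tau>i/2)$. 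Markov's inequality controls only the latter. Under $\mathbb{E}[\tau^2]<\infty$ the former is merely $o(1)$, with no rate. Even with the sharper bound $\tau_i^{(w)}\sum_{k=i-\tau_i^{(w)}}^{i-1}\alpha_k^2\le\tau\sum_{k\ge0}\alpha_k^2$ on that event, the best you get is $\mathbb{E}[\tau\mathbf{1}_{\{\tau>i/2\}}]\le 2\,\mathbb{E}[\tau^2]/i$, i.e.\ exponent $1$. Since $2a>1$ whenever $\sum_n\alpha_n^2<\infty$, this is \emph{larger} than $(1+i)^{-2a}$, so ``dominated because $2>2a$'' runs the wrong way.

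This is not an artifact of your estimate. Take $\tau_i^{(w)}=\min(\tau,i)$ with $\mathbb{P}(\tau=k)\propto k^{-3}(\log(k+1))^{-2}$. Then $\mathbb{E}[\tau^2]<\infty$, but on $\{\tau\ge i\}$ the delay window reaches back to $k=0$. Hence $\mathbb{E}[D_i]\ge L^2\bar A^2K_5^2\alpha^2\,i\,\mathbb{P}(\tau\ge i)\gtrsim 1/(i\log^2 i)$, which is not $O(\alpha_i^2)$ for any $a>1/2$. So under Assumption~\ref{ass:delay} alone the delay term cannot be folded into the $q_3=2a$ term, whether by your fallback or by the paper's assertion.

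To close the gap, you must promote your first fallback to a hypothesis. Bounded delays work, as does, for example, $\mathbb{E}[\tau^{1+2a}]<\infty$, which gives $\mathbb{E}[\tau\mathbf{1}_{\{\tau>i/2\}}]\le(2/i)^{2a}\,\mathbb{E}[\tau^{1+2a}]$ and makes the absorption legitimate. Alternatively, keep only $\mathbb{E}[\tau^2]<\infty$ and carry a sixth term $C_6(1+i)^{-1}$ with $C_6\propto\mathbb{E}[\tau^2]\sum_k\alpha_k^2$, which contributes $O((n+1)^{b-1})$ after Proposition~\ref{Prop:Bigo}. That changes the displayed bound. For the step sizes of Theorems~\ref{thm:rate_optimalfirstorder} and~\ref{thm:rate_optimal}, however, it happens not to alter the final rates.
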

where \(\mathbb{E}\!\left[\|e_0^{(w)}\|^2\right]\le \mathcal E\).
The constants $C_j, q_j$ are given in the following table \ref{tab:constants_tracking_error}.

\begin{table}[t]
\centering
\caption{Constants appearing in the finite-horizon tracking-error bound.}
\label{tab:constants_tracking_error}
\renewcommand{\arraystretch}{1.25}
\begin{tabular}{c c c}
\hline
\textbf{Index} & \textbf{Constant} & \textbf{Exponent} \\
\hline
$1$ &
$\displaystyle C_1 = \frac{2K_3\beta^2}{\lambda^2}$ &
$\displaystyle q_1 = 2b-2p$ \\[0.8em]
$2$ &
$\displaystyle C_2 = \frac{\beta K_1^2\lambda^2}{c_2}$ &
$\displaystyle q_2 = b+2p$ \\[0.8em]
$3$ &
$\displaystyle C_3 = \Big(2L^2\bar A^2K_5^2+C_D\,\mathbb{E}[\tau^2]\Big)\alpha^2$ &
$\displaystyle q_3 = 2a$ \\[0.8em]
$4$ &
$\displaystyle C_4 = 2K_4\beta^2$ &
$\displaystyle q_4 = 2b$ \\[0.8em]
$5$ &
$\displaystyle C_5 = \frac{4\bar A^2L^2K_5^2\alpha^2}{\beta}$ &
$\displaystyle q_5 = 2a-b$ \\
\hline
\end{tabular}
\end{table}

\begin{proof}
From Proposition~\ref{propoenbound 1}, taking total expectation yields
\begin{equation}
\label{eq:total_exp_recap}
\mathbb{E}\!\left[\|e_{n+1}^{(w)}\|^2\right]
\le
(1-\beta_n)\mathbb{E}\!\left[\|e_n^{(w)}\|^2\right]
+ R_n + \mathbb{E}[D_n],
\end{equation}
where
\[
B_n = 2\bar A L K_5 \alpha_n,
\qquad
R_n = C_n + \frac{B_n^2}{\beta_n},
\]
and
\[
C_n
=
\frac{2K_3\beta_n^2}{\lambda_n^2}
+
\frac{\beta_n}{c_2}K_1^2\lambda_n^2
+
2\alpha_n^2 L^2 \bar{A}^2 K_5^2
+
2K_4\beta_n^2.
\]
Hence
\[
R_n
=
\frac{2K_3\beta_n^2}{\lambda_n^2}
+
\frac{\beta_n}{c_2}K_1^2\lambda_n^2
+
2\alpha_n^2 L^2 \bar{A}^2 K_5^2
+
2K_4\beta_n^2
+
\frac{4\bar A^2 L^2 K_5^2\,\alpha_n^2}{\beta_n}.
\]

Moreover, we have
\begin{equation}
\mathbb{E}[D_n]
=
O(\alpha_n^2)\,\mathbb{E}[\tau^2].
\label{Delaynew assumption}
\end{equation}
Therefore, after absorbing the delay contribution into the \((1+n)^{-2a}\) term, we may write
\[
\mathbb{E}[D_n]
\le
C_D \,\mathbb{E}[\tau^2]\,\alpha_n^2
\]
for some constant \(C_D>0\). Consequently,
\begin{equation}
\label{eq:total_driver}
\mathbb{E}\!\left[\|e_{n+1}^{(w)}\|^2\right]
\le
(1-\beta_n)\mathbb{E}\!\left[\|e_n^{(w)}\|^2\right]
+
\widetilde R_n,
\end{equation}
where
\[
\widetilde R_n
:=
R_n + C_D\,\mathbb{E}[\tau^2]\alpha_n^2.
\]

Now choose the polynomial step-size schedules
\[
\beta_n=\frac{\beta}{(1+n)^b},
\qquad
\alpha_n=\frac{\alpha}{(1+n)^a},
\qquad
\lambda_n=\frac{\lambda}{(1+n)^p},
\]
with \(\alpha,\beta,\lambda>0\) and \(a,b,p>0\). Then
\begin{equation}
\label{Rn-final}
\begin{aligned}
\widetilde R_n
\le{}&
\frac{2K_3\beta^2}{\lambda^2}(1+n)^{-2b+2p}
+
\frac{\beta K_1^2\lambda^2}{c_2}(1+n)^{-b-2p}
\\
&\quad
+
\Big(2L^2\bar A^2K_5^2
+
C_D\,\mathbb{E}[\tau^2]\Big)\alpha^2(1+n)^{-2a}
+
2K_4\beta^2(1+n)^{-2b}
\\
&\quad
+
\frac{4\bar A^2 L^2 K_5^2\alpha^2}{\beta}(1+n)^{-2a+b}.
\end{aligned}
\end{equation}

Iterating~\eqref{eq:total_driver} from \(0\) to \(n\), we obtain
\begin{equation}
\label{eq:iterated_exact}
\mathbb{E}\!\left[\|e_{n+1}^{(w)}\|^2\right]
\le
\Bigg(\prod_{k=0}^{n}(1-\beta_k)\Bigg)\mathbb{E}\!\left[\|e_0^{(w)}\|^2\right]
+
\sum_{i=0}^{n}
\Bigg(\prod_{k=i+1}^{n}(1-\beta_k)\Bigg)\widetilde R_i.
\end{equation}

Substituting~\eqref{Rn-final} into~\eqref{eq:iterated_exact} gives the final polynomial-decay form
\begin{equation}
\begin{split}
& \mathbb{E}\!\left[\|e_{n+1}^{(w)}\|^2\right]
 \le
\Bigg(\prod_{k=0}^{n}(1-\beta_k)\Bigg)\mathbb{E}\!\left[\|e_0^{(w)}\|^2\right]
 +
\sum_{i=0}^{n}
\Bigg(\prod_{k=i+1}^{n}(1-\beta_k)\Bigg)
\Big(
C_1(1+i)^{-2b+2p}
+ \\ &
C_2(1+i)^{-b-2p}
 +
C_3(1+i)^{-2a}
+
C_4(1+i)^{-2b}
+
C_5(1+i)^{-2a+b}
\Big),
\end{split}
\end{equation}
for suitable positive constants \(C_1,C_2,C_3,C_4,\) and \(C_5\) defined in Table \ref{tab:constants_tracking_error}. Then by applying Proposition \ref{Prop:Bigo} we obtain the result.
\end{proof}
The following Corollary is now immediate.

\begin{corollary}[Mean-square tracking rate]
Under the step-size choice of Proposition~\ref{thm:finite_horizon_alpha_beta_lambda},
assume that
$q_i>b \qquad \text{for all } i\in\mathcal I,$
where
\[
\mathcal I := \{\, i\in\{1,2,3,4,5\} \mid C_i>0 \,\}.
\]
Define
\[
\gamma := \min_{i\in\mathcal I}(q_i-b) > 0.
\]
Then there exists a constant \(\widetilde{C_1}>0\), depending on \(\mathcal E\) and the nonzero constants
\(C_1,\dots,C_5\), such that
\[
\mathbb{E}\!\left[\|e_{n+1}^{(w)}\|^2\right]
\le
\widetilde{C}_1\,(1+n)^{-\gamma},
\qquad \forall\, n\ge0.
\]
\label{mean square tracking raye}
\end{corollary}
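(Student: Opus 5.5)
The plan is to start from the bound of Proposition~\ref{thm:finite_horizon_alpha_beta_lambda} and show that each of its three pieces is $O((1+n)^{-\gamma})$ uniformly in $n\ge 0$.

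\emph{Term 1 (the initial-condition term).} Using $1-x\le e^{-x}$,
\[
\prod_{k=0}^{n}(1-\beta_k)\le \exp\Big(-\beta\sum_{k=0}^{n}(1+k)^{-b}\Big)\le \exp\big(-c\,\beta\,(n+1)^{1-b}\big)
\]
for some $c>0$, since $0<b<1$. This decays faster than any inverse polynomial. Hence $(1+n)^{\gamma}\prod_{k=0}^n(1-\beta_k)\,\mathcal E$ is bounded, with a constant depending only on $\mathcal E,\beta,b,\gamma$. If $b=1$ one instead gets a polynomial decay $(n+1)^{-\beta}$. I would either assume $b<1$, as in all the rate theorems, or note that the corollary is only invoked there.

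\emph{Term 2 (the sum over $j$).} For each $j$ with $C_j=0$ the summand vanishes, so only $j\in\mathcal I$ contribute. For $j\in\mathcal I$, since $(n+2)^{-q_j}\le (n+1)^{-q_j}$,
\[
\frac{3C_j2^{q_j}(n+1)^b}{\beta(n+2)^{q_j}}\le \frac{3C_j2^{q_j}}{\beta}(1+n)^{-(q_j-b)}\le \frac{3C_j2^{q_j}}{\beta}(1+n)^{-\gamma},
\]
because $q_j-b\ge\gamma>0$ and $1+n\ge1$.

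\emph{Term 3 (the remainder).} Take $r=\gamma$ in the $o((n+1)^{-r})$ remainder. This gives a bound $C'(1+n)^{-\gamma}$ for all $n$ beyond some $n_0$.

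\emph{Assembling the constant.} Summing the three bounds gives the claim for $n\ge n_0$. The main subtlety is that Proposition~\ref{Prop:Bigo} was derived only for large $n$, via the inequality $1-e^{-x}\ge x/3$ for small $x$. To cover the finitely many $n<n_0$, I would use the recursion \eqref{eq:total_driver} directly. It gives $\mathbb E\|e_{n+1}^{(w)}\|^2\le \mathcal E+\sum_{i\le n_0}\widetilde R_i<\infty$, which is a finite constant. Multiplying by $(1+n_0)^{\gamma}$ yields a uniform bound on this initial range. Taking $\widetilde C_1$ to be the maximum of the two constants then gives the statement for all $n\ge0$. The resulting constant depends on $\mathcal E$ and the nonzero $C_j$, as claimed, and also on $\beta,b,\gamma$ and the step-size parameters.
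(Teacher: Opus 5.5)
Your proposal is correct and follows the same route the paper intends. The paper simply calls the corollary immediate from Proposition~\ref{thm:finite_horizon_alpha_beta_lambda}: each term $\frac{3C_j2^{q_j}(n+1)^b}{\beta(n+2)^{q_j}}$ is a multiple of $(1+n)^{-(q_j-b)}\le(1+n)^{-\gamma}$, and the product and remainder terms are super-polynomially small. Your explicit treatment of the product term (using the restriction $b<1$ inherited from Proposition~\ref{Prop:Bigo}) and of the finitely many small $n$ via the recursion \eqref{eq:total_driver} fills in details the paper leaves unstated.
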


\begin{remark}
    Note that from Table \ref{tab:constants_tracking_error} if $\alpha = \frac{1}{mN}$ and $\beta = \frac{1}{m N}$ then it can be verified that $\widetilde{C_1} = \mathcal{O} (m^2 N^2)$.
\end{remark}

\subsection{Convergence Rate Proof}

\begin{proposition}
\label{prop:avg_grad_upper_bound}
For every $n \ge 1$, define
\[
S_n := \sum_{k=1}^n \alpha_k.
\]
Then, there exists a constant $\widetilde C_1 > 0$, independent of $n$, $m$, and $N$, such that
\[
\begin{aligned}
\frac{1}{S_n}\sum_{k=1}^n \alpha_k\,\mathbb{E}\|\nabla f(x_k)\|_1
\le{}&
\frac{\mathbb{E}[f(x_1)]-\inf f}{\eta S_n}
+ \frac{2|\mathcal{A}^c|}{\eta}\,
\frac{\sqrt{\widetilde C_1}\,m^{1/2}\sum\limits_{k=1}^n \alpha_k(1+k)^{-\gamma/2}}{S_n}
\\
&\quad
+ \frac{L K_5^2}{2\eta}\,
\frac{\sum\limits_{k=1}^n \alpha_k^2}{S_n},
\end{aligned}
\]
where $\gamma$ is defined in Corollary~\ref{mean square tracking raye}.
\end{proposition}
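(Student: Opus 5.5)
The plan is to take total expectations in the descent inequality of Lemma~\ref{descentineq}, telescope it, and then control the tracking-error term with Corollary~\ref{mean square tracking raye}.

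\emph{Step 1: telescope the descent inequality.} Taking total expectation in Lemma~\ref{descentineq} (with the honest-worker sum written out) gives, for each $k\ge1$,
\[
\eta\,\alpha_k\,\mathbb{E}\|\nabla f(x_k)\|_1 \le \mathbb{E}[f(x_k)]-\mathbb{E}[f(x_{k+1})] + 2\alpha_k\sum_{w\in\mathcal{A}^c}\mathbb{E}\|e_k^{(w)}\|_1 + \frac{L}{2}\alpha_k^2K_5^2 .
\]
Summing over $k=1,\dots,n$ telescopes the function values to $\mathbb{E}[f(x_1)]-\mathbb{E}[f(x_{n+1})]$. Since $f$ is coercive and continuous, $\inf f>-\infty$, so this is at most $\mathbb{E}[f(x_1)]-\inf f$. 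Dividing by $\eta S_n$ then yields the first and third terms of the claimed bound directly.

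\emph{Step 2: convert the tracking error.} For each honest $w$, norm equivalence gives $\|e_k^{(w)}\|_1\le\sqrt{m}\,\|e_k^{(w)}\|_2$. Jensen's inequality then gives $\mathbb{E}\|e_k^{(w)}\|_2\le\sqrt{\mathbb{E}\|e_k^{(w)}\|_2^2}$. Corollary~\ref{mean square tracking raye} (with the index shifted by one, which costs only a constant factor since $(1+k-1)^{-\gamma}\le 2^{\gamma}(1+k)^{-\gamma}$ for $k\ge1$) bounds this by $\sqrt{\widetilde C_1}\,(1+k)^{-\gamma/2}$. Summing over $w\in\mathcal{A}^c$ produces the factor $|\mathcal{A}^c|$. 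Together these give the middle term $\frac{2|\mathcal{A}^c|}{\eta}\sqrt{\widetilde C_1}\,m^{1/2}\sum_k\alpha_k(1+k)^{-\gamma/2}/S_n$.

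\emph{Main obstacle: bookkeeping rather than analysis.} First, Lemma~\ref{descentineq} is stated with the sloppy notation $|\mathcal{A}^c|\,\|e_n^{(w)}\|_1$. I will read it as $\sum_{w\in\mathcal{A}^c}\|e_n^{(w)}\|_1$, as in the last display of its proof, and bound this sum by $|\mathcal{A}^c|$ times the worst honest worker's bound. Second, the constant $\widetilde C_1$ from the corollary must be the same for every honest worker. This holds because the constants in Table~\ref{tab:constants_tracking_error} do not depend on $w$, given a common initial bound $\mathcal{E}$.

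The claimed independence of $\widetilde C_1$ from $m,N$ is delicate: by the remark following the corollary, $\widetilde C_1$ scales like $m^2N^2$ under the normalized stepsizes. I will therefore take the statement's $\widetilde C_1$ to be that of Corollary~\ref{mean square tracking raye}, with its $m,N$ dependence absorbed or tracked separately when specializing to Theorems~\ref{thm:rate_optimalfirstorder} and~\ref{thm:rate_optimal}. The index shift between $e_{n+1}$ in the corollary and $e_k$ here is absorbed into $\widetilde C_1$, using $\mathbb{E}\|e_0^{(w)}\|^2\le\mathcal{E}$ for the first term if necessary.
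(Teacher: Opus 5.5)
Your proposal is correct and follows the paper's proof essentially step for step. You take total expectation in Lemma~\ref{descentineq}, telescope and use $\mathbb{E}[f(x_{n+1})]\ge\inf f$, then bound the tracking term via Jensen, $\|v\|_1\le\sqrt{m}\,\|v\|_2$, and Corollary~\ref{mean square tracking raye}; your handling of the index shift (absorbing $2^{\gamma}$ into $\widetilde C_1$) and your observation that $\widetilde C_1$ actually depends on $m,N$ (per the paper's own remark) address points the paper's proof glosses over.
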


\begin{proof}
The result follows from the descent inequality established in Lemma~\ref{descentineq}. In particular, for every $n \ge 1$, we have
\[
\begin{aligned}
\mathbb{E}[f(x_{n+1}) \mid \mathcal{F}_n]
\le{}&
f(x_n)
- \alpha_n \eta \|\nabla f(x_n)\|_1
+ 2\alpha_n |\mathcal{A}^c| \|e_n^{(w)}\|_1
+ \frac{L}{2}\alpha_n^2 K_5^2 .
\end{aligned}
\]

Taking total expectation on both sides yields
\[
\begin{aligned}
\mathbb{E}[f(x_{n+1})]
\le{}&
\mathbb{E}[f(x_n)]
- \alpha_n \eta \,\mathbb{E}\|\nabla f(x_n)\|_1
+ 2\alpha_n |\mathcal{A}^c| \,\mathbb{E}\|e_n^{(w)}\|_1
+ \frac{L}{2}\alpha_n^2 K_5^2 .
\end{aligned}
\]

Rearranging the above inequality, we obtain
\[
\begin{aligned}
\alpha_n \eta\,\mathbb{E}\|\nabla f(x_n)\|_1
\le{}&
\mathbb{E}[f(x_n)]
- \mathbb{E}[f(x_{n+1})]
+ 2\alpha_n |\mathcal{A}^c| \,\mathbb{E}\|e_n^{(w)}\|_1
+ \frac{L}{2}\alpha_n^2 K_5^2 .
\end{aligned}
\]

Summing from $k=1$ to $n$, we get
\[
\begin{aligned}
\sum_{k=1}^n \alpha_k \eta\,\mathbb{E}\|\nabla f(x_k)\|_1
\le{}&
\sum_{k=1}^n
\left(
\mathbb{E}[f(x_k)]
-
\mathbb{E}[f(x_{k+1})]
\right)
+ 2|\mathcal{A}^c|
\sum_{k=1}^n
\alpha_k \mathbb{E}\|e_k^{(w)}\|_1
+ \frac{L}{2}K_5^2
\sum_{k=1}^n \alpha_k^2 .
\end{aligned}
\]

The first term on the right-hand side telescopes, and hence
\[
\begin{aligned}
\sum_{k=1}^n \alpha_k \eta\,\mathbb{E}\|\nabla f(x_k)\|_1
\le{}&
\mathbb{E}[f(x_1)]
-
\mathbb{E}[f(x_{n+1})]
+ 2|\mathcal{A}^c|
\sum_{k=1}^n
\alpha_k \mathbb{E}\|e_k^{(w)}\|_1
+ \frac{L}{2}K_5^2
\sum_{k=1}^n \alpha_k^2 .
\end{aligned}
\]
Dividing both sides by $S_n := \sum_{k=1}^n \alpha_k,$
we obtain
\begin{equation}
\label{eq:avg_grad_bound_clean}
\begin{aligned}
\frac{1}{S_n}
\sum_{k=1}^n
\alpha_k \mathbb{E}\|\nabla f(x_k)\|_1
\le{}&
\frac{\mathbb{E}[f(x_1)] - \mathbb{E}[f(x_{n+1})]}{\eta S_n}
+ \frac{2|\mathcal{A}^c|}{\eta}
\frac{\sum_{k=1}^n \alpha_k \mathbb{E}\|e_k^{(w)}\|_1}{S_n}
\\
&\quad
+ \frac{L K_5^2}{2\eta}
\frac{\sum_{k=1}^n \alpha_k^2}{S_n}.
\end{aligned}
\end{equation}

We next estimate the tracking-error term. By Jensen's inequality,
\[
\mathbb{E}\|e_k^{(w)}\|
\le
\sqrt{\mathbb{E}\|e_k^{(w)}\|^2}.
\]
Using Corollary~\ref{mean square tracking raye}, we further obtain
\[
\mathbb{E}\|e_k^{(w)}\|
\le
\sqrt{\widetilde C_1}\,(1+k)^{-\gamma/2}.
\]

Since $e_k^{(w)} \in \mathbb{R}^m$, the norm equivalence relation
\[
\|v\|_1 \le \sqrt{m}\,\|v\|_2
\]
implies that
\[
\begin{aligned}
\mathbb{E}\|e_k^{(w)}\|_1
\le{}&
\sqrt{m}\,\mathbb{E}\|e_k^{(w)}\|
\le{}
\sqrt{\widetilde C_1}\,m^{1/2}(1+k)^{-\gamma/2}.
\end{aligned}
\]

Therefore,
\[
\sum_{k=1}^n
\alpha_k \mathbb{E}\|e_k^{(w)}\|_1
\le
\sqrt{\widetilde C_1}\,m^{1/2}
\sum_{k=1}^n
\alpha_k(1+k)^{-\gamma/2}.
\]

Substituting the above estimate into \eqref{eq:avg_grad_bound_clean}, and using the fact that
\[
\mathbb{E}[f(x_{n+1})] \ge \inf f,
\]
completes the proof.
\end{proof}

The next Corollary is now immediate. 

\begin{corollary}[Explicit constants in the ergodic gradient bound]
\label{cor:ergodic_constants}
Suppose the assumptions of Theorem~\ref{thm:rate_optimal} hold. Then,
\[
\frac{1}{S_n}\sum_{k=1}^n \alpha_k\,\mathbb{E}\bigl\|\nabla f(x_k)\bigr\|_1
\le
\frac{\mathrm{I}_1}{\eta S_n}
+
D_1\, m^{1/2}\,(1+n)^{-s_1}
+
D_2\,(1+n)^{-s_2},
\]
where
\[
\mathrm{I}_1 := \mathbb{E}[f(x_1)]-\inf f,
\]
and
\[
s_1
:=
\min\!\Bigl\{
\frac{\gamma}{2},
\,1-a
\Bigr\},
\qquad
s_2
:=
\min\{a,\,1-a\}.
\]
The constants $D_1$ and $D_2$ are given by
\[
D_1
=
\frac{2|\mathcal{A}^c|}{\eta}\sqrt{\widetilde C_1},
\qquad
D_2
=
\frac{L K_5^2 \alpha}{2\eta}.
\]
\end{corollary}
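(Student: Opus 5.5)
Starting from Proposition~\ref{prop:avg_grad_upper_bound}, I would bound each of its three right-hand terms separately, using the polynomial schedule $\alpha_k=\alpha(1+k)^{-a}$ with $a<1$. The first term, $\mathrm{I}_1/(\eta S_n)$, stays exactly as it is. Nothing has to be done to it beyond noting that $\mathrm{I}_1<\infty$, which holds because $x_1=x_0+\alpha_0\widetilde g(0)$ and $\|\widetilde g(0)\|\le K_5$ almost surely, so $x_1$ is bounded and $f(x_1)$ has finite mean.

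The workhorse is the elementary integral comparison
\[
S_n=\alpha\sum_{k=1}^n(1+k)^{-a}\ \ge\ \frac{\alpha}{1-a}\bigl((n+1)^{1-a}-2^{1-a}\bigr)\ \ge\ c_a\,\alpha\,(n+1)^{1-a},
\]
valid for $n\ge1$ with a constant $c_a>0$ depending only on $a$ (for small $n$, use $S_n\ge \alpha 2^{-a}$ and enlarge the constant if needed). I will also need the matching upper bound for sums of the form $\sum_{k=1}^n(1+k)^{-r}$:
\begin{itemize}
\item it is at most $C_r (n+1)^{1-r}$ when $r<1$;
\item it is at most $C\log(n+2)$ when $r=1$;
\item it is at most a constant $C_r$ when $r>1$.
\end{itemize}

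For the tracking term, Proposition~\ref{prop:avg_grad_upper_bound} already provides the factor $\sqrt{\widetilde C_1}\,m^{1/2}$. It remains to bound the ratio $\sum_k\alpha_k(1+k)^{-\gamma/2}/S_n$, where the numerator sum has exponent $r=a+\gamma/2$:
\begin{itemize}
\item If $a+\gamma/2<1$, the ratio is at most $C(n+1)^{1-a-\gamma/2}/(n+1)^{1-a}=C(n+1)^{-\gamma/2}$.
\item If $a+\gamma/2>1$, the numerator is bounded and the ratio is $O((n+1)^{-(1-a)})$.
\end{itemize}
In both cases the ratio is $O((n+1)^{-s_1})$ with $s_1=\min\{\gamma/2,1-a\}$. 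The boundary case $a+\gamma/2=1$ produces a $\log(n+2)$ factor. I would absorb it either by shrinking the exponent slightly, or simply by noting that $\gamma/2=1-a$ there and bounding $\log$ inside the constant via the $\epsilon$-slack used later. I expect this edge case to be the only delicate point. If needed, I would state the corollary for the generic case and remark on it separately.

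The last term is handled the same way, with numerator exponent $r=2a$:
\begin{itemize}
\item If $2a<1$, the ratio $\sum\alpha_k^2/S_n$ is at most $C\alpha(n+1)^{1-2a}/(n+1)^{1-a}=C\alpha(n+1)^{-a}$.
\item If $2a>1$, it is $O(\alpha (n+1)^{-(1-a)})$.
\end{itemize}
Hence it is $O((n+1)^{-\min\{a,1-a\}})=O((n+1)^{-s_2})$. One factor of $\alpha$ survives the cancellation in $\alpha^2/\alpha$, and multiplying by $LK_5^2/(2\eta)$ gives the stated $D_2=LK_5^2\alpha/(2\eta)$ up to the absolute constants $C_r/c_a$, which are absorbed. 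Similarly, $D_1=2|\mathcal A^c|\sqrt{\widetilde C_1}/\eta$ up to such constants. Collecting the three bounds yields the corollary.
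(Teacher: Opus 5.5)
Your proposal is correct and follows essentially the same route as the paper: keep $\mathrm{I}_1/(\eta S_n)$ as is, use $S_n \asymp (n+1)^{1-a}$ together with $p$-series bounds for the exponents $a+\gamma/2$ and $2a$, and split into the cases $r<1$ and $r>1$ to get $s_1$ and $s_2$, with $D_1, D_2$ valid only up to absolute summation constants (the paper concedes the same). Your extra remarks are sound refinements the paper omits: the finiteness of $\mathrm{I}_1$, and the logarithmic boundary case $a+\gamma/2=1$. For the exponents actually used in Theorem~\ref{thm:rate_optimal} one has $a+\gamma/2>1$ and $2a>1$, so that edge case never arises there.
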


\begin{proof}
Starting from the estimate obtained in
Proposition~\ref{prop:avg_grad_upper_bound}, we have
\begin{equation}
\begin{aligned}
\frac{1}{S_n}\sum_{k=1}^n \alpha_k\,\mathbb{E}\|\nabla f(x_k)\|_1
\le{}&
\frac{\mathbb{E}[f(x_1)]-\inf f}{\eta S_n}
+ \underbrace{\frac{2|\mathcal{A}^c|}{\eta}\,
\frac{\sqrt{\widetilde C_1}\,m^{1/2}\,\sum_{k=1}^n \alpha_k(1+k)^{-\gamma/2}}{S_n}}_{\mathbf{M}_1}
\\
&\qquad
+ \underbrace{\frac{L K_5^2}{2\eta}\,
\frac{\sum_{k=1}^n \alpha_k^2}{S_n}}_{\mathbf{M}_2}.
\end{aligned}
\label{eq:avg_grad_bound_recall}
\end{equation}

We treat the three terms on the right-hand side separately.

\medskip\noindent\textbf{(i) The initial term.}
The first term is simply
\[
\frac{\mathbb{E}[f(x_1)]-\inf f}{\eta S_n}
= \frac{\mathrm{I}_1}{\eta S_n}.
\]
Its behaviour is determined by the growth of \(S_n=\sum_{k=1}^n \alpha_k\).
For the standard polynomial stepsize \(\alpha_k \asymp c_\alpha k^{-a}\)
(with \(a\neq 1\)), we have \(S_n \asymp c_\alpha\,n^{1-a}\), so the first
term is \(O(n^{a-1})\). 

\medskip\noindent\textbf{(ii) The \(\mathbf{M}_1\) term.}
Consider the numerator
\[
\sum_{k=1}^n \alpha_k(1+k)^{-\gamma/2}
\asymp \sum_{k=1}^n k^{-a-\gamma/2}.
\]
Set \(p := a+\gamma/2\). By the integral test, or equivalently by standard
\(p\)-series asymptotics, we have
\[
\sum_{k=1}^n k^{-p}
=
\begin{cases}
\mathcal{O}\!\left(n^{1-p}\right), & p<1,\\[4pt]
\mathcal{O}(1), & p>1.
\end{cases}
\]
Since \(S_n \asymp n^{1-a}\), it follows that
\[
\frac{\sum_{k=1}^n \alpha_k(1+k)^{-\gamma/2}}{S_n}
=
\mathcal{O}\bigl((1+n)^{-s_1}\bigr),
\qquad
s_1:=\min\!\Bigl\{\frac{\gamma}{2},\,1-a\Bigr\}.
\]
Therefore,
\[
\frac{2|\mathcal{A}^c|}{\eta}\,
\frac{\sqrt{\widetilde C_1}\,m^{1/2}\,\sum_{k=1}^n \alpha_k(1+k)^{-\gamma/2}}{S_n}
=
D_1\,m^{1/2}(1+n)^{-s_1},
\]
where
\[
D_1=\frac{2|\mathcal{A}^c|}{\eta}\sqrt{\widetilde C_1},
\]
up to the harmless multiplicative constants coming from the standard
summation estimate.

\medskip\noindent\textbf{(iii) The \(\mathbf{M}_2\) term.}
For the third term, we examine
\[
\sum_{k=1}^n \alpha_k^2 \asymp \sum_{k=1}^n k^{-2a}.
\]
Applying the same \(p\)-series argument with \(q:=2a\), we obtain
\[
\sum_{k=1}^n k^{-2a}
=
\begin{cases}
\mathcal{O}\!\left(n^{1-2a}\right), & 2a<1,\\[4pt]
\mathcal{O}(1), & 2a>1.
\end{cases}
\]
Dividing by \(S_n \asymp n^{1-a}\) gives
\[
\frac{\sum_{k=1}^n \alpha_k^2}{S_n}
=
\mathcal{O}\bigl((1+n)^{-s_2}\bigr),
\qquad
s_2:=\min\{a,\,1-a\}.
\]
Thus,
\[
\frac{L K_5^2}{2\eta}\,
\frac{\sum_{k=1}^n \alpha_k^2}{S_n}
=
D_2\,(1+n)^{-s_2},
\]
where
\[
D_2=\frac{L K_5^2 \alpha}{2\eta},
\]
again up to the harmless multiplicative constants coming from the exact
stepsize prefactor.

\medskip\noindent\textbf{Conclusion.}
Combining the three estimates above and substituting them into
\eqref{eq:avg_grad_bound_recall}, we obtain
\[
\frac{1}{S_n}\sum_{k=1}^n \alpha_k\,\mathbb{E}\|\nabla f(x_k)\|_1
\le
\frac{\mathrm{I}_1}{\eta S_n}
+ D_1\, m^{1/2}\,(1+n)^{-s_1}
+ D_2\,(1+n)^{-s_2},
\]
where \(s_1\) and \(s_2\) are as defined above and
\[
D_1=\frac{2|\mathcal{A}^c|}{\eta}\sqrt{\widetilde C_1},
\qquad
D_2=\frac{L K_5^2 \alpha}{2\eta}.
\]
This completes the proof.
\end{proof}
Now we are in a position to prove the main Theorem \ref{thm:rate_optimal}

\subsection{Proof of Theorem \ref{thm:rate_optimal}}

\textbf{Different Noise Model}
In this case, we have $K_4=0$. Consequently,
$$
\gamma = \min\{2p,\; b-2p,\; 2a-2b\}, \qquad s = \min\Bigl\{\frac{\gamma}{2},\, a,\, 1-a\Bigr\}.
$$
Substituting $\gamma$ into $s$ and noting that $a-b < a$, we obtain the simplified bottleneck rate:
$$
s = \min\Bigl\{p,\; \frac{b-2p}{2},\; a-b,\; 1-a\Bigr\}.
$$
We maximize $s$ over the admissible range $0 < p < b/2 < a < 1$ by equalizing the active constraints:
$$
p = \frac{b-2p}{2} = a-b = 1-a =: s.
$$
Solving this system yields $s=1/6$, which uniquely determines the optimal exponents:
$$
a=\frac56, \qquad b=\frac23, \qquad p=\frac16.
$$
These choices strictly satisfy the required admissibility conditions $a>b>0$ and $p<b/2$. Setting $\alpha=\beta=1/(mN)$ and applying these optimal parameters to the decay exponents in Corollary~\ref{cor:ergodic_constants}, we obtain the final bound:
$$
\frac{1}{S_n}\sum_{k=1}^n \alpha_k\,\mathbb{E}\|\nabla f(x_k)\|_1 = \mathcal{O}\!\left(m^{3/2}N\,n^{-1/6+\epsilon}\right).
$$

\textbf{Same Noise Model}
In this special case, the function evaluations share the same noise ($K_3 = K_4 = 0$). From Table~\ref{tab:constants_tracking_error}, this implies $C_1 = 0$ and $C_4 = 0$. Consequently,
$$
\gamma = \min\{2p,\; 2a-2b\}.
$$
Substituting this into the global bottleneck rate $s = \min\bigl\{\frac{\gamma}{2},\, a,\, 1-a\bigr\}$, and noting that $a-b < a$, we obtain:
$$
s = \min\{p,\; a-b,\; 1-a\}.
$$
Choosing
\[
a=\tfrac12,\qquad b=\varepsilon>0,\qquad p>\tfrac12,
\]
yields \(\gamma=1-2\varepsilon\) and hence
\[
s=\tfrac12-\varepsilon.
\]
Setting $\alpha=\beta=1/(mN)$ and applying these optimal parameters to the decay exponents in Corollary~\ref{cor:ergodic_constants}, we recover the convergence rate:
$$
\frac{1}{S_n}\sum_{k=1}^n \alpha_k\,\mathbb{E}\|\nabla f(x_k)\|_1 = \mathcal{O}\!\left(m^{3/2}N\,n^{-(1/2-\epsilon)}\right).
$$

\subsection{Proof of Theorem \ref{thm:rate_optimalfirstorder}}

\textbf{First-Order Feedback Model}
In the first-order setting,  $K_1 = K_3 = 0$, which consequently forces $C_1 = 0$ and $C_2 = 0$ in Table \ref{tab:constants_tracking_error}. Then 
$$
\gamma = \min\{2a,\; 2b,\; 2a-b\}.
$$
Since $a > b > 0$, it strictly holds that $2a > 2a-b$, making the $2a$ term redundant. Thus, $\gamma = \min\{2b,\; 2a-b\}$. 

Note that $s = \min\bigl\{\frac{\gamma}{2},\, a,\, 1-a\bigr\}$, and by substituting $\gamma$ we obtain
$$
s = \min\Bigl\{b,\; a-\frac{b}{2},\; 1-a\Bigr\}.
$$

 choose:
$$
a = \frac{3}{4} + \epsilon, \qquad b = \frac{1}{2}.
$$
 Applying these  parameters to Corollary~\ref{cor:ergodic_constants}, we obtain
$$
\frac{1}{S_n}\sum_{k=1}^n \alpha_k\,\mathbb{E}\|\nabla f(x_k)\|_1 = \mathcal{O}\!\left(m^{3/2}N\,n^{-1/4+\epsilon}\right).
$$

\section{Details of Compute Resources.} \label{expsetup} 
The results reported in Figure~\ref{fig:wall_clock_nn_gn_alie} were obtained on a cloud instance with 16 vCPUs (3+ GHz) and 32 GB RAM, without GPU acceleration. The experiments reported in Table~\ref{tab:time_nn_24pct} were conducted on a local machine with 16 vCPUs (5 GHz) and 32 GB RAM.

\end{document}